\newcommand{\N}{\mathds{N}}
\newcommand{\R}{\mathds{R}}
\renewcommand{\P}{\mathds{P}}
\newcommand{\E}{\mathds{E}}
\newcommand{\iid}{\ensuremath{\stackrel{\mbox{\upshape\tiny i.i.d.}}{\sim}}}
\newcommand{\ind}[1]{\mathds{1}_{#1}} 
\newcommand\Sn{\mathfrak{S}_n} 
\newcommand\eps{\varepsilon}
\newcommand{\declareFonction}[5]{
	\begin{array}{l|rcl}
		#1: & #2 & \longrightarrow & #3 \\
			& #4 & \longmapsto     & #5 
	\end{array}
}
\newcommand{\Linf}{\text{\upshape L}^\infty}
\DeclareMathOperator{\Hom}{Hom}
\DeclareMathOperator{\Aut}{Aut}
\newcommand{\ins}[1]{\mathrm{#1}}
\newcommand{\eqdef}{\ensuremath{\stackrel{\mbox{\upshape\tiny def.}}{=}}}
\renewcommand{\d}{\ins{d}}
\DeclareMathOperator{\MAE}{MAE}
\DeclareMathOperator{\esssup}{ess\,sup}
\newcommand{\uesssup}[1]{\underset{#1}{\esssup}\;}
\DeclareMathOperator{\id}{\ins{id}}
\newcommand\GG{\mathcal{G}}
\newcommand\CC{\mathcal{C}}
\newcommand\X{\mathcal{X}}
\newcommand\FF{\mathcal{F}}
\newcommand\RRR{\mathcal{R}}
\newcommand\xmark{\ding{55}}
\newcommand\m{m}
\DeclareSymbolFont{largesymbolsstix}  {LS2}{stix2ex}   {m} {n}
\DeclareMathDelimiter{\lBrace}    
    {\mathopen}{largesymbolsstix}{"E8}{largesymbolsstix}{"0E}
\DeclareMathDelimiter{\rBrace}    
    {\mathclose}{largesymbolsstix}{"E9}{largesymbolsstix}{"0F}
\DeclarePairedDelimiterX\norm[1]\lVert\rVert{
	\ifblank{#1}{\:\cdot\:}{#1}
}
\DeclarePairedDelimiterX\abs[1]\lvert\rvert{
	\ifblank{#1}{\:\cdot\:}{#1}
}
\providecommand\given{}
\newcommand\SetSymbol[1][]{%
	\nonscript\:#1\vert
	\allowbreak
	\nonscript\:
	\mathopen{}}
\DeclarePairedDelimiterX\Set[1]\{\}{\renewcommand\given{\SetSymbol[\delimsize]}#1}
\DeclarePairedDelimiterX\multiSet[1]\lBrace\rBrace{
		\ifblank{#1}{\:\cdot\:}{\renewcommand\given{\SetSymbol[\delimsize]}#1}
}
\renewcommand{\leq}{\leqslant}
\renewcommand{\le}{\leqslant}
\renewcommand{\geq}{\geqslant}
\renewcommand{\ge}{\geqslant}
\newcommand{\qandq}{ \quad \text{and} \quad }
\newcommand{\ie}{{\em i.e.,~}}
\newcommand{\eg}{{\em e.g.,~}}
\def\cleartheorem#1{\expandafter\let\csname#1\endcsname\relax
    \expandafter\let\csname c@#1\endcsname\relax
}
\newtheorem{example}{Example}
\newtheorem{theorem}{Theorem}
\newtheorem{lemma}[theorem]{Lemma}
\newtheorem{proposition}[theorem]{Proposition}
\newtheorem{corollary}[theorem]{Corollary}
\newtheorem{definition}[theorem]{Definition}
\crefname{inequality}{Inequality}{Inequalities}                              
\crefname{assumption}{Assumption}{Assumptions}
\newtheorem{assumption}[theorem]{Assumption}
\newtheorem{myexample}{Example} 
\begin{document}
	
	\title{Convergence of Message-Passing Graph Neural Networks with Generic Aggregation on Large Random Graphs}
	
	\author{\name Matthieu Cordonnier \email matthieu.cordonnier@gipsa-lab.fr \\
		\addr GIPSA-lab, Université Grenoble Alpes, Grenoble
		\AND
		\name Nicolas Keriven \email nicolas.keriven@cnrs.fr \\
		\addr CNRS, IRISA, Rennes
		\AND
		\name Nicolas Tremblay \email nicolas.tremblay@cnrs.fr \\
		\addr CNRS, GIPSA-lab, Grenoble
		\AND
		\name Samuel Vaiter \email samuel.vaiter@cnrs.fr \\
		\addr CNRS, LJAD, Nice}
	
	\editor{Joan Bruna}
	
	\maketitle
	
	\begin{abstract}
        We study the convergence of message-passing graph neural networks on random graph models toward their continuous counterparts as the number of nodes tends to infinity. 
        Until now, this convergence was only known for architectures with aggregation functions in the form of normalized means, or, equivalently, of an application of classical operators like the adjacency matrix or the graph Laplacian. 
        We extend such results to a large class of aggregation functions, that encompasses all classically used message-passing graph neural networks, such as attention-based message-passing, max convolutional message-passing, (degree-normalized) convolutional message-passing, or moment-based aggregation message-passing. 
        Under mild assumptions, we give non-asymptotic bounds with high probability to quantify this convergence. 
        Our main result is based on the McDiarmid inequality. 
        Interestingly, this result does not apply to the case where the aggregation is a coordinate-wise maximum. 
        We treat this case separately and obtain a different convergence rate.
	\end{abstract}
	
	\begin{keywords}
		Graph Neural Networks, Random Graphs, Message-Passing, Large Graphs, Aggregation Function, Concentration.
	\end{keywords}


\section{Introduction} \label{intro}
Graph Neural Networks (GNNs)~\citep{scarselli_graph_2008,gori_new_2005} are deep learning architectures largely inspired by Convolutional Neural Networks, that aim to extend convolu\-tional methods to signals on graphs. 
Indeed, in many domains, the measured data live on a graph structure: examples for which GNNs have achieved state-of-the-art performance include molecules, proteins, and node clustering~\citep{gilmer_neural_2017,chen2018supervised,Fout2017ProteinIP}. 
Nevertheless, it has been observed that GNNs have limitations, both in practice \citep{wu_simplifying_19,hu_open_graph_benchmark_2020} and in their theoretical understanding~\citep{morris_futur_2024}. 
Hence, the design of more reliable and powerful architectures is a current active and fast evolving area of research.

From a theoretical perspective, a large part of the literature has focused on the \emph{expressive power} of GNNs, \ie what class of functions can GNNs approximate. 
This notion is fundamental in classical Deep Learning and is related to the so-called \emph{Universal Approxima\-tion Theorem}~\citep{hornik_approximation_1991,cybenkot_approximation_1989}. 
Studying the expressive power of GNNs is however more involved, as they are by definition designed to be invariant or equivariant to the relabeling of nodes in a graph (see~\cref{sec: MPGNN definition}). 
Hence, in~\cite{xu_how_WL_2019} the authors relate their expressivity to the \emph{graph isomorphism problem}, that is, deciding if two graphs are isomorphic of one another, a long-standing combinatorial problem in graph theory. 
The main avenue to analyze the expressive power of GNNs compares their performances to the traditional Weisfeiler-Lehman algorithm (WL)~\citep{weisfeiler_reduction_1968}, which process is very similar to the message-passing paradigm at the core of GNNs. 
Hence, by construction, basic GNNs are at most as powerful as WL~\citep{xu_how_WL_2019}. 
From this point, a lot of effort has been made to design innovative GNN architectures to outperform the classical WL~\citep{maron_universality_2019, maron_provably_2019,keriven_universal_2019,vignac_building_2020,papp2022theoretical,morris_weisfeiler_2019}.

Nevertheless, while this combinatorial approach is worth considering for reasonably small graphs, its relevance in the context of large graphs is somewhat limited. 
Two real large graphs may share similar patterns, but will never be isomorphic, one main simple reason being that they most likely do not even share the same number of nodes. 
Large graphs are better described by some global properties such as edge density or number of communities. 
To that extent, the privileged mathematical tools are random graph models~\citep{crane2018probabilistic, goldenberg2010survey}. 
A generic family of models of interest to study GNNs on large graphs is the class of Latent Position Models~\citep{keriven_convergence_2020,keriven_universality_2021, ruiz_graphon_neural_networks_2020,levie_transferability_2021,maskey_generalization_2022}. 
In such a model, the nodes of a random graph are first sampled as latent random variables (the latent positions) in a probability space $(\X,P)$, and then, the adjacency is decided via the sampling of a connectivity kernel $W:\X^2 \to [0,1]$ at the random latent positions. 
This encompasses models like stochastic block models~\citep{lei_consestency_2015} (SBM) or graphon models~\citep{lovasz_large_2012}, depending on how exactly we define the edge appearance procedure. 

The key idea in studying GNNs on large random graphs is to embed the discrete problem into a continuous setting for which we expect to understand their properties with more ease. 
For instance, some authors derived new properties of geometric stability on large random graphs \citep{keriven_convergence_2020, levie_transferability_2021, ruiz_stability_transferability_2021}, which are known to be key in other deep models \citep{mallat_scat_2012} but cannot easily be characterized on fixed deterministic graphs. 
Large random graphs also shed light on the expressive power of GNNs \citep{keriven_universality_2021} in a different manner than the discrete WL test, and some models are indeed proved to be more powerful than others \citep{keriven_what_2023}.
To study GNNs on large random graphs, we match the GNN to a ``continuous'' counterpart, referred to as a continuous-GNN (cGNN)~\citep{keriven_convergence_2020, ruiz_graphon_neural_networks_2020}. 
While the discrete GNN propagates a signal over the nodes of the graph, the cGNN propagates \emph{a mapping over the latent space $\X$}. 
Such a map can be interpreted as a signal over the graph where the ``continuum'' of nodes would be all the points of $\X$. 
Then, as the random graph grows large, the GNN must behave similarly to its cGNN counterpart. 
To justify this, it is necessary to describe the cGNN as a limit of GNNs on random graphs and to ensure that the GNN converges to the cGNN as the number of nodes increases~\citep{keriven_convergence_2020, maskey_generalization_2022}. 
This convergence problem is precisely the focus of the present work.  

The duality of the convolutional product has led to two ways of defining GNNs. 
On the one hand, convolution as a pointwise product of frequencies in the Fourier domain has justified the design of so-called Spectral Graph Neural Networks~\citep{defferrard_convolutional_2016} (SGNNs), in which one introduces a graph Fourier transform through a chosen graph shift operator~\citep{tremblay_design_2017} to legitimate the use of polynomial filters. 
On the other hand, the spatial interpretation sees the convolution as local aggregations of neighborhood information, leading to Message-Passing Neural Networks (MPGNNs)~\citep{gilmer_neural_2017,kipf_semi-supervised_2017, corso_principal_neighbourhood_agg_2020}. 
The message-passing paradigm consists of iteratively updating each node via the \textbf{aggregation} of messages from each of its neighbors. 
This framework is often favored due to its inherent flexibility: messages and aggregation functions are unconstrained as long as they stay invariant to node reordering, \ie as long as they match on isomorphic graphs. 
Besides, SGNNs layers are mostly made of polynomials of graph shift operators which are a form of message-passing, defined by a choice of graph shift operator and a polynomial degree. 
As such, SGNNs can be seen as a subcase of the more versatile message-passing framework.

\paragraph{Contributions.} 
In this paper, we study the convergence toward a continuous counterpart of MPGNNs with a \textbf{generic aggregation function} \citep{corso_principal_neighbourhood_agg_2020}, whereas previous work~\citep{keriven_convergence_2020,keriven_universality_2021,ruiz_stability_transferability_2021,maskey_generalization_2022} are restricted to SGNNs or MPGNNs with specific aggregations. 
We use a simple version of the Latent Space Model where random graphs are totally connected and weighted accordingly the sampling of the kernel $W$ at the latent positions. 
Our main result,~\cref{th: main result}, states that for MPGNNs  having a Lipschitz-type regularity, the discrete network on a large random graph is close to its continuous counterpart with high probability. 
We quantify this convergence via a non-asymptotic bound based on the McDiarmid concentration inequality for multivariate functions of independent random variables. 
A special treatment is given to the case where the aggregation is a coordinate-wise \emph{maximum}~\citep{fey_pytorch}. 
For that particular case,~\cref{th: main result} does not hold. 
Thus, we provide another proof of convergence based on a specific concentration inequality, in~\cref{th: main result max}. 
This results in a significantly different theoretical convergence rate.

\paragraph{Related work.} 
The closest related works to ours are the results from ~\cite{keriven_convergence_2020, keriven_universality_2021}, where they establish convergence of SGNNs on Latent Position random graphs. 
We also mention~\cite{maskey_generalization_2022} who study a particular case of MPGNN on large random graphs, where the aggregation is defined to be a mean normalized by the degree of the node, which is akin to an SGNN using the degree-normalized Laplacian matrix. 
The present paper can be considered as a direct extension of both these works in the setting of MPGNNs with generic aggregation.

Further, the concept of limit of a SGNN on large random graphs has shown fruitful to tackle several problems. 
For instance, multiple works from different authors, among which~\cite{keriven_convergence_2020, maskey_transferability_2023, levie_transferability_2021, ruiz_stability_transferability_2021, cervino_transference_2023}, have focused on stability to deformation or transferability. 
The idea is that, since the same GNN can be applied to any graph, no matter its size or structure, we expect the outputs to be close on similar graphs, which is particularly relevant for large random graphs drawn from the same (or almost the same) model. 
Concerning the expressive power on large graphs,~\cite{keriven_convergence_2020, keriven_universality_2021} exploit their convergence theorems to propose a description of the function space that SGNNs on random graphs can approximate in~\cite{keriven_what_2023} and derive certain properties of universality. 
About other topics related to the learning procedure such as generalization as well as oversmoothing, the authors in~\cite{maskey_generalization_2022} derive a generalization bound that gets tighter for large graphs, while the results described in~\cite{keriven_oversmoothing_2022} make use of Latent Position random graphs to search a threshold between beneficial finite smoothing and oversmoothing. 

Beyond random graphs, large (dense) graphs can be described through the theory of graphons~\citep{lovasz_large_2012}, and several works aim to characterize the convergence of GNNs on large graphs with these mathematical tools. 
In~\cite{ruiz_graphon_neural_networks_2020, maskey_transferability_2023}, the authors define limits of graph polynomial filters of SGNNs designed from graph shift operators as integral operators with regard to the underlying graphon, and make use of the theory of self-adjoint operators and Hilbert spaces to study them. 
More recently, authors in~\cite{boker_finegrained_2023} consider a continuous version of the WL test via graphon estimation to study expressive power and the paper~\cite{levie_graphonsignal_2023} is devoted to extending the concept of sampling graphs from graphon to sampling graph signals from graphon signals.

\paragraph{Outline.} 
In~\cref{sec: preli}, we give some basic definitions. 
In~\cref{sec: MPGNN definition} we define MPGNNs with a generic aggregation function, that is, any function on sets used to gather and combine neighborhood information in the message-passing paradigm. 
In~\cref{sec: cMPGNN} we introduce continuous-MPGNNs (cMPGNNs) which are the counterparts of discrete MPGNNs that propagate a function over a compact probability space, alongside a connectivity kernel. 
As a discrete MPGNN must be coherent with graph isomorphism, we give mild conditions under which the cMPGNN is coherent with respect to some notion of probability space isomorphism. 
In~\cref{sec: Cv}, we focus on MPGNNs when applied on random graphs and describe what class of cMPGNN would be their natural limit. 
Our main result is~\cref{th: main result}: it provides necessary conditions under which the discrete network converges to its continuous counterpart. 
We make use of the McDiarmid concentration inequality to derive a non asymptotic bound with high probability of the deviation between the outputs of the MPGNN and its limit cMPGNN. 
Overall, we conclude that a sufficient condition of convergence is for the aggregation to have sharp bounded differences. 
Along the paper, we illustrate our concepts on classical GNN examples from the basic Graph Convolutional Network to the more sophisticated Graph Attentional Network~\citep{velickovic_graph_2018}. 
We give a particular treatment to the case of \emph{maximum} aggregation. 
Indeed, its behavior turns out to be significantly different than the other examples and does not fit into the class of MPGNNs having sharp bounded differences. 
Nevertheless, in~\cref{th: main result max} we make use of other specific concentration bounds to prove another non asymptotic bound between max MPGNN and its limit cMPGNN, with a significantly different convergence rate.

\section{Notations and Definitions} \label{sec: preli}
We start by introducing the notations that will hold throughout the paper. 
The letter $d$ (and its derivatives $d_0,\ldots$) will represent the dimension of a real vector space, the letter $n$ will denote the number of nodes in a graph, and $L$ will refer to the total number of layers in a deep architecture. 
Whenever we need to index something relatively to the vertices of a graph, we use a subscript indexation (\eg $z_i$) and in the case of layers, we employ a parenthesized superscript (\eg $z^{(l)}$).

We fix a positive integer $d$ and $(\R^d, \Vert\cdot\Vert_{\infty}, \mathds{B}(\R^d))$  the $d$-dimensional real vector space endowed with the infinite norm $\Vert x\Vert_{\infty} \eqdef \max_i |x_i|$ as well as its Borel sigma algebra. 
Except when specified differently, any topological concept, such as balls, continuity, \emph{etc.}, will be considered relatively to the norm $\Vert\cdot\Vert_{\infty}$. 
All along this paper, $\X$ is a compact subset of $\R^d$ and $\mathds{B}(\X)$ its Borel sigma algebra defined as the sigma algebra generated by the $U\cap\X$, for the open sets $U$ of $\R^d$.

The group of permutations of $\{1,\dots,n\}$ is denoted as $\Sn$. 
If $x=(x_1,\dots,x_n)$ is an $n$-tuple and $\sigma$ an element of $\Sn$, we define the $n$-tuple $\sigma\cdot x$ as $\sigma\cdot x \eqdef (x_{\sigma^{-1}(1)},\dots,x_{\sigma^{-1}(n)})$.

The set of bijections $\phi$ of $\X$  such that both $\phi$ and  $\phi^{-1}$ are measurable is a group for the composition of functions. 
We call it the group of automorphisms of $\X$ and denote it as $\Aut(\X)$. 
We denote as $\mathcal{P}(\X)$ the set of probability measures on $(\X, \mathds{B}(\X))$. 
For a measure $P\in\mathcal{P}(\X)$ and a bijection $\phi\in \Aut(\X)$, the push forward measure of $P$ through $\phi$ is defined as $\phi_{\#}P(A) \eqdef P(\phi^{-1}(A))$ for all $A$ in $\mathds{B}(\X).$ Since this makes the group $\Aut(\X)$ acting on the set of probability measures on $\X$, we also use the notation $\phi\cdot P \eqdef \phi_{\#}P$, which is standard for a (left) group action.
For the same reason, we shall use the notations $\phi\cdot f \eqdef f\circ \phi^{-1}$ and $\phi\cdot W \eqdef W(\phi^{-1}(\cdot),\phi^{-1}(\cdot))$ whenever $f$ is a measurable function on $\X$ and $W$ is a bivariate measurable function on $\X \times \X$.

For $P\in\mathcal{P}(X)$, the space $\Linf_P(\X, \R^p)$ is the space of essentially bounded (equivalence classes of) maps from $\X$ to $\R^p$ endowed with the norm $\Vert f \Vert_{P,\infty} \eqdef \esssup_{x\in\X,\, P}\Vert f(x)\Vert_{\infty}$. 
When there is no ambiguity on $P$, the norm $\Vert \cdot\Vert_{P,\infty}$ is noted $\Vert \cdot \Vert_{\infty}$. 
The space $\CC(\X, \R^p)$ is made of the continuous functions from $\X$ to $\R^p$. 
Since $\X$ is compact, any continuous map is bounded thus essentially bounded, which makes $\CC(\X, \R^p)$ a subspace of $\Linf_P(\X, \R^p)$.

Sets are represented between braces $\{\cdot\}$, whereas multisets, that is, sets in which an element is allowed to appear twice or more, are represented by double braces $\left\lBrace\cdot\right\rBrace$.
We define the sampling operator in the following way. 
For any $f: \mathcal{E}_0\to \mathcal{E}_1$ and $X=(x_1,\dots,x_n)\in \mathcal{E}_0^n$, the sampling of $f$ at $X$, denoted as $\iota_Xf$, is defined as 
\begin{equation}\label{eq: sampling operator def}
	\iota_Xf \eqdef (f(x_i),\dots,f(x_n)) \in \mathcal{E}_1^n\,.
\end{equation}

\subsection{Graph-related Definitions}\label{sec: graph def}
In this subsection, we introduce the concepts of discrete graph, graph signal and graph isomorphism.

\paragraph{Graph.} 
A \emph{non oriented weighted graph} $G$ is defined by a triplet $(V,E,w)$, corresponding to a set of \emph{nodes} (or \emph{vertices}) $V$, a set of \emph{edges} $E$, and a \emph{weight function} $w: V^2 \to \R^{+}$.
The cardinality $|V|=n$ is the \emph{size} of the graph.
Formally, the set $V=\Set{v_1,\ldots,v_n}$ may contain arbitrary elements with an arbitrary numbering.
However, for simplicity of the presentation, whenever the exact nature of the vertices does not matter,\footnote{It will matter later when considering random graphs whose nodes are latent positions in a metric space.}
we assume that $V=\Set{1,\ldots, n}$.
The set of edges $E$ contains 2-element subsets of $V$.
The set of neighbors of a vertex $i$ in $G$ is referred to as $\mathcal{N}_G(i)$ or simply $\mathcal{N}(i)$ when the underlying graph is clear from the context. 
The weight function $w$ assigns a nonnegative number to each edge. 
It is represented by a symmetric function $w: V^2 \to \R^{+}$  and the abbreviation $w_{i,j}$ is used to denote the weight $w(i,j) = w(j,i)$ where $\Set{i,j}\in E$. 
In this paper, ``graph'' will always mean ``undirected and weighted graph''. 

\paragraph{Graph signal.}
A \emph{graph signal} is a map from the set of vertices $V$ of a graph to a $d$-dimensional vector space $\R^d$.
In other words, it is the data of $n$ vectors in $\R^d$, one assigned to each node.
Hence, we represent a graph signal by a $n$ times $d$ matrix $Z \in \R^{n\times d}$.

\paragraph{Graph isomorphism.} 
Two graphs $G_1 = (V,E_1,w_1)$ and $G_2 =  (V,E_2,w_2)$ are said to be isomorphic if there is a permutation $\sigma\in \Sn$ such that $E_2= \Set*{ \{i,j\}\given  \{\sigma^{-1}(i),\sigma^{-1}(j) \}\in E_1 }$ and $w_2(i,j) = w_1\left(\sigma^{-1}(i),\sigma^{-1}(j)\right)$. 
In this case, we note $G_2 = \sigma\cdot G_1$. Moreover, if $Z$ is a signal on $G_1$ and $\sigma\in \Sn$, then $\sigma\cdot Z$ is an isomorphic signal on the  graph $\sigma\cdot G_1$. 

\subsection{Random Graph Models}\label{sec: random graph def}
In this subsection, we define our random graph model of interest, as well as the concept of random graph model isomorphism.

\paragraph{Random graph model and random graphs.}
A \emph{random graph model} is a couple $(W,P)$ where $P$ is a Borel probability measure on the compact set $\X \subset \mathbb{R}^d$ and $W: \X \times \X \mapsto [0,1]$  is a symmetric measurable function called \emph{connectivity kernel}. 
A random graph model is used to generate random graphs as follows. 
Given a positive integer $n$, we first draw $n$ independent and identically distributed random variables from the distribution $P$, represented by ${X_1,\dots,X_n}$, which form the vertex set of the graph. 
The random graph is then fully connected and has weight function $W$:
$$ X_1,\dots,X_n \iid P\,, \qandq w_{i,j} = w_{j,i} = W(X_i,X_j)\,.$$
Often, $W$ is decreasing with the distance between the $X_i$, such that nodes with similar latent variables have stronger connections; classical examples include the Gaussian kernel $W(X,X') = e^{-\frac{\norm*{X-X'}^2}{2\sigma^2}}$ or the so-called $\eps$-graphs $W(X,X') = \ind{\norm*{X-X'}\leq \eps}$.
When convenient, we will use the short notation $X=(X_1,\dots,X_n)$ for the tuple of the vertices of a random graph. 
We call $\GG_n(W,P)$ the distribution from which random graphs with $n$ nodes are drawn.
We bring the reader's attention to the fact that in the above definition, a random graph is always fully connected \emph{but} edges may have a weight equal to zero (e.g., for $\eps$-graphs). 
Also note that, in the rest of the paper, we generally do not put any assumptions on the model $(W,P)$, except when using max-aggregation (Exemples~\ref{ex: 5}-\ref{ex: e} and Sec.~\ref{sec: cv for max}), where we assume that $W$ is Lipschitz-continuous  in addition to some conditions on the probability space $(\X,P)$ (see Definition~\ref{def: volume retaining}).

Another common model~\citep{keriven_convergence_2020, lei_consestency_2015} is to add a Bernoulli distribution on edges, similarly to SBM models, in order to model unweighted random graphs, potentially with prescribed expected sparsity. 
It is not done here for the sake of simplicity. 
In the literature, weighted random graphs without Bernoulli edges are routinely used to analyze machine learning algorithms \citep{vonluxburg_clust_2008, maskey_generalization_2022}, as they essentially model the underlying phenomena of interest in many cases.

\paragraph{Random graph model isomorphism.}  
Two probability measures $P_1$ and $P_2$ on $\X$ are said isomorphic if there is some $\phi$ in $\Aut(\X)$ such that $P_2=\phi_{\#}P_1$. 
Similarly, two random graph models $(W_1,P_1)$ and $(W_2,P_2)$ on $\X$ are said to be isomorphic if there is a $\phi$ in $\Aut(\X)$ such that $(W_2,P_2)=(\phi\cdot W_1,\phi\cdot P_1)$, in this case, we will note $(W_2,P_2) = \phi\cdot(W_1,P_1)$.

\section{Message-Passing Graph Neural Networks}\label{sec: MPGNN definition}

A multilayer Message-Passing Neural Network (MPGNN) iteratively propagates a signal over a graph. 
At each step, the current representation of every node's neighbors are gathered, transformed, and combined to update the node's representation. 
We synthesize these three operations into a single one for the sake of readability.
Broadly speaking, an MPGNN can be defined as a collection of $L$ applications $(F^{(l)})_{1\le l\le L}$ that act as follows. 
Let $G$ be a graph with $n$ nodes, and $Z=Z^{(0)} \in \R^{n\times d_0}$ be a signal on it. At each layer, denoting $Z^{(l)}$ as the current state of the signal, $Z^{(l+1)}$ is computed node-wise by:  
\begin{equation}\label{eq: general message-passing}
    z^{(l+1)}_{i} \eqdef F^{(l+1)}\left(z_i^{(l)}, \multiSet*{\left(z_{j}^{(l)}, w_{i,j}\right) \given j\in \mathcal{N}(i) }\right)  \in \R^{d_{l+1}}\,. 
\end{equation}

So $Z^{(l+1)}$ is a matrix in $\R^{n \times d_{l+1}}$. 
In~\cref{eq: general message-passing}, $F^{(l)}$ takes as arguments a vector, which is the current node's representation, and a multiset of \emph{pairs}. 
Each pair is composed of a node from the neighborhood of the running node, along with the corresponding edge weight. 
In the literature, the $F^{(l)}$ are often referred to as the \emph{aggregations}~\citep{jegelka_theory_2022}. 
Their main property is to ignore the order in which the neighborhood information is collected, through the use of a multiset.

Depending on the context, the final output of the MPGNN may be a signal over the graph, or a single vector representation for the entire graph. 
Following the literature, we call these two versions respectively the \emph{equivariant} and the \emph{invariant} versions of the network. 
We denote $\Theta_G(Z)$ as the output in the first case and $\overline{\Theta}_G(Z)$ in the second case, where $\overline{\Theta}_G$ use an additional pooling operation over the nodes, $R: \R^{n\times d_L} \to \R^{d_L}$, called the \emph{readout}~\citep{jegelka_theory_2022} function: 

\begin{equation}\label{eq: mpgnn output}
    \Theta_G(Z)\eqdef Z^{(L)} \in\R^{n\times d_L} ,\qandq \overline{\Theta}_G(Z)\eqdef  R\left(\multiSet*{z_1^{(L)},\dots, z_n^{(L)}} \right)\in\R^{d_L} \,.
\end{equation}

We know that a fundamental property of GNNs is that they are consistent with graph isomorphism. 
More precisely, relabeling the nodes of the input graph signal must be the same as relabeling the nodes of the output in the \emph{equivariant} case, and must leave the output unchanged in the \emph{invariant} case. 
This is stated the proposition below.
\begin{proposition}[Invariance and equivariance of MPGNNs]\label{prop: MPGNN equiv prop}
    With the definition of the message-passing from~\cref{eq: general message-passing,eq: mpgnn output}, $\Theta$ and $\overline{\Theta}$ are respectively $\Sn$-equivariant and $\Sn$-invariant, in the sense that for all $\sigma \in \Sn$, for all $Z\in\R^{n\times d_0}$, we have $\Theta_{\sigma\cdot G}(\sigma\cdot Z) = \sigma\cdot \Theta_G(Z)$ and $\overline{\Theta}_{\sigma\cdot G}(\sigma\cdot Z) = \overline{\Theta}_G(Z)$.
\end{proposition}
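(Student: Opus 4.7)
The plan is to establish equivariance first, by induction on the layer index $l$, and then derive invariance from it in a single line.

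For the inductive statement, I would show that if $Z^{(l)}$ denotes the state at layer $l$ of the MPGNN applied to $(G,Z)$, then the state at layer $l$ of the MPGNN applied to $(\sigma\cdot G,\sigma\cdot Z)$ is exactly $\sigma\cdot Z^{(l)}$. The base case $l=0$ is the definition of $\sigma\cdot Z$. For the inductive step, the key observation is that graph isomorphism permutes neighborhoods: unpacking the definition of $\sigma\cdot G$ from Section~\ref{sec: graph def}, one checks that $v_j\in\mathcal{N}_{\sigma\cdot G}(v_i)$ iff $v_{\sigma^{-1}(j)}\in\mathcal{N}_G(v_{\sigma^{-1}(i)})$, and that the weights satisfy $w_{i,j}^{\sigma\cdot G}=w_{\sigma^{-1}(i),\sigma^{-1}(j)}^{G}$. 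Combined with the inductive hypothesis $(\sigma\cdot Z^{(l)})_i = Z^{(l)}_{\sigma^{-1}(i)}$, this lets me rewrite the layer update~\eqref{eq: general message passing} applied on $\sigma\cdot G$ at node $v_i$ as
\[
F^{(l+1)}\!\left(Z^{(l)}_{\sigma^{-1}(i)},\ \lBracebig\!\left(Z^{(l)}_{k},\,w_{\sigma^{-1}(i),k}\right)\!\rBracebig_{v_k\in\mathcal{N}_G(v_{\sigma^{-1}(i)})}\right),
\]
after the change of summation index $k=\sigma^{-1}(j)$. The right-hand side is exactly $Z^{(l+1)}_{\sigma^{-1}(i)} = (\sigma\cdot Z^{(l+1)})_i$, which closes the induction and gives $\Theta_{\sigma\cdot G}(\sigma\cdot Z) = \sigma\cdot\Theta_G(Z)$.

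For invariance, applying the readout in~\eqref{eq: readout} to the equivariant output yields $\bar{\Theta}_{\sigma\cdot G}(\sigma\cdot Z) = R\!\left(\lBrace z^{(L)}_{\sigma^{-1}(1)},\dots,z^{(L)}_{\sigma^{-1}(n)}\rBrace\right)$; since the argument is a multiset, permuting its entries leaves it equal to $\lBrace z^{(L)}_1,\dots,z^{(L)}_n\rBrace$, hence to $\bar{\Theta}_G(Z)$.

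Honestly there is no real obstacle here: the only subtle point is the consistent bookkeeping of where $\sigma$ versus $\sigma^{-1}$ acts (on indices of the signal, on neighborhoods, and on weights), so I would be careful to spell out the change-of-index step in the multiset above. Everything else is forced by the fact that the aggregation $F^{(l+1)}$ and the readout $R$ take multisets as arguments and are therefore blind to any reordering of their entries.
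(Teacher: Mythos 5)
Your proof is correct and takes essentially the same route as the paper: per-layer equivariance established by unpacking the definitions of $\sigma\cdot G$, $\sigma\cdot Z$, and the multiset argument of $F^{(l+1)}$ (the paper phrases it as showing each layer map $\Lambda_G^{(l)}$ is equivariant and then composing, whereas you phrase it as an induction on $l$, but these are the same argument), followed by the one-line observation that composing an equivariant map with the permutation-blind readout $R$ gives invariance. Your explicit change-of-index $k=\sigma^{-1}(j)$ in the multiset is exactly the bookkeeping step the paper's proof also performs.
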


\begin{proof}
	We prove the equivariant case. 
    Let us introduce the layer functions $\Lambda_G^{(l)}: Z^{(l-1)} \mapsto Z^{(l)}$, such that $\Theta_G = \Lambda_G^{(L)}\circ\dots\circ\Lambda_G^{(1)}$ by construction. 
    Let $Z\in \R^{n\times d_{l-1}}$ be a signal on $G$. 

    On the one hand,  $\Lambda_{\sigma\cdot G}^{(l)}(\sigma\cdot Z)=Y$ is the signal on $\sigma\cdot G$, obtained from the message-passing of $\sigma\cdot Z$ with regard to the graph $\sigma\cdot G$ and the weight function $\sigma\cdot w$.
    Locally, for all $i$, $y_i$ is the result of the message-passing at the node labeled $i$ in $\sigma\cdot G$.
    Recalling that the $i$th row of $\sigma\cdot Z$ is $z_{\sigma^{-1}(i)}$, and that the $(i,j)$-entry of $\sigma\cdot w$ is $w_{\sigma^{-1}(i), \sigma^{-1}(j)}$, we get that,
    $$y_i = F^{(l)}\Bigl(z_{\sigma^{-1}(i)}, \multiSet*{\left(z_{\sigma^{-1}(j)}, w_{\sigma^{-1}(i),\sigma^{-1}(j)}\right) \given j\in \mathcal{N}_{\sigma\cdot G}(i) } \Bigr)\,.$$
    Then, we do the change of variable $j = \sigma(j')$ in the multiset indexing, we obtain
	$$y_i = F^{(l)}\Bigl(z_{\sigma^{-1}(i)}, \multiSet*{\left(z_{j}, w_{\sigma^{-1}(i),j}\right) \given \sigma(j)\in \mathcal{N}_{\sigma\cdot G}(i) } \Bigr)\,.$$
    Now, by definition of $\sigma\cdot G$, we have the equivalence $\sigma(j) \in \mathcal{N}_{\sigma\cdot G}(i)$ if and only if $j \in \mathcal{N}_{G}(\sigma^{-1}(i))$.
    This implies that the multiset 
    $\multiSet*{\left(z_{j}, w_{\sigma^{-1}(i),j}\right) \given \sigma(j) \in \mathcal{N}_{\sigma\cdot G}(i) }$ 
    is the same as the multiset 
    $\multiSet*{\left(z_{j}, w_{\sigma^{-1}(i),j}\right) \given j\in \mathcal{N}_{G}\left(\sigma^{-1}(i)\right) }$. 
    Thus,

	$$y_i=F^{(l)}\Bigl(z_{\sigma^{-1}(i)}, \multiSet*{\left(z_{j}, w_{\sigma^{-1}(i),j}\right) \given j\in \mathcal{N}_{G}\left(\sigma^{-1}(i)\right) } \Bigr)\,.$$

	On the other hand, $\sigma\cdot\Lambda_G^{(l)}(Z)=Y'$ is the signal $\Lambda_G^{(l)}(Z)$ to which the rows have been permuted a posteriori, so

	$$y'_i=F^{(l)}\Bigl(z_{\sigma^{-1}(i)}, \multiSet*{\left(z_{j}, w_{\sigma^{-1}(i),j}\right) \given j \in \mathcal{N}_{G}(\sigma^{-1}(i)) } \Bigr)\,.$$

    Hence $Y=Y'$, which means that $\Lambda_{\sigma\cdot G}^{(l)}(\sigma\cdot Z)= \sigma\cdot\Lambda_G^{(l)}(Z)$, and that $\Lambda^{(l)}$ is equivariant for all $l$. 
    Thereby $\Theta_{\sigma\cdot G}(\sigma\cdot Z) = \sigma\cdot \Theta_G(Z)$  by composition.
	For the invariant case, $R$ is clearly $\Sn$-invariant since it has a multiset as input. 
    The fact that the composition of an equivariant map followed by an invariant map is invariant yields the result.
\end{proof}

The role of the functions $F^{(l)}$ in~\cref{eq: general message-passing} is crucial and there is a wide range of designs for them~\citep{wu_comprehensive_2021}. 
Nevertheless, they mostly take the following ``message-then-combine'' form. 
At layer $l+1$, the signals of the neighbors of a node are transformed by a learnable message operation $\m^{(l+1)}$. 
Then these \emph{messages} $\m^{(l+1)}(z_j^{(l)})$ are \emph{aggregated} along with some optional \emph{weight coefficients}, whose expressions are very general here,
\begin{equation}\label{eq: coefficients}
	c_{i,j}^{(l+1)} = c^{(l+1)}\left(z_{i}^{(l)},z_{j}^{(l)},w_{i,j}\right)\,,
\end{equation}
in a way that is invariant to node relabeling. 
It appears that a natural way of doing the aggregation step is to perform a \emph{mean}, in a broad sense: an arithmetic mean, a weighted mean, a maximum, and so on~\citep{bullen_handbook_mean_2013}. 
Thus, we have a \emph{mean} operator $M^{(l+1)}$ such that~\cref{eq: general message-passing} is expressed as
\begin{equation}\label{eq: aggregation+message message-passing}
		 F^{(l+1)}\left(z_i^{(l)}, \multiSet*{\left(z_{j}^{(l)}, w_{i,j}\right) \given j\in \mathcal{N}(i)}\right) 
		= M^{(l+1)}\left(\multiSet*{\left(\m^{(l+1)}(z_{j}^{(l)}), c_{i,j}^{(l+1)}\right) \given j\in \mathcal{N}(i)}\right)
        \,.
\end{equation}

To our knowledge,~\cref{eq: aggregation+message message-passing} encompasses most of the existing popular MPGNN architectures of the literature. 
We note that it is essentially a more verbose reformulation of~\cref{eq: general message-passing}, the two different expressions mostly provide a different level of intuition on the message-passing process.
In the sequel, we discuss five examples that follow~\cref{eq: aggregation+message message-passing}. 
For each example, we also give the corresponding readout function that will be used in our results, for the invariant case. 

\begin{example}[Convolutional Message-Passing]\label[example]{ex: 1} 
	The $c_{i,j}$ are the graph weights $w_{i,j}$~\citep{kipf_semi-supervised_2017,defferrard_convolutional_2016, gilmer_neural_2017}. 
    Each neighbor representation is multiplied by its corresponding weight and we combine them with an arithmetic mean. 
    Notice that this is equivalent to a SGNN with polynomial filters of degree one.
	$$z^{(l+1)}_{i} = \frac{1}{\vert \mathcal{N}(v_i)\vert}  \sum_{v_j\in\mathcal{N}(v_i)} w_{i,j}\m^{(l+1)}\left(z_{j}^{(l)}\right).$$
	In the invariant case, the readout function is an arithmetic mean:
	$$ R\left(\left\lBrace z_1^{(L)},\dots, z_n^{(L)} \right\rBrace \right) = \frac{1}{n}\sum_{i=1}^n z_i^{(L)}.$$
\end{example}

\begin{example}[Degree normalized convolution]\label[example]{ex: 2} 
    The $c_{i,j}$ are still the graph weights $w_{i,j}$ but a weighted mean is performed~\citep{maskey_generalization_2022}.
	$$z^{(l+1)}_{i} = \sum_{j\in\mathcal{N}(v_i)} \frac{w_{i,j}}{\sum_{k\in \mathcal{N}(v_i)} w_{i,k}}\m^{(l+1)}\left(z_{j}^{(l)}\right).$$
	In the invariant case, the readout function is again an arithmetic mean.
\end{example}

\begin{example}[Attention based Message-Passing]\label[example]{ex: 3} 
	Unlike the two examples above, the \emph{attention} coefficients are learnable and depend on all the possible parameters mentioned in~\cref{eq: coefficients}~\citep{velickovic_graph_2018}. 
    A weighted mean is then used.
	$$z^{(l+1)}_{i} = \sum_{j\in\mathcal{N}(v_i)} \frac{c^{(l+1)}\left(z_{i}^{(l)},z_{j}^{(l)},w_{i,j}\right)}{\sum_{k\in \mathcal{N}(v_i)} c^{(l+1)}\left(z_{i}^{(l)},z_{k}^{(l)},w_{i,k}\right)}\m^{(l+1)}\left(z_{j}^{(l)}\right).$$
	In the invariant case, the readout function is again an arithmetic mean.
\end{example}

\begin{example}[Generalized mean]\label[example]{ex: 4} 
	Similar to~\cref{ex: 1}, but with additional functions used to compute ``generalized'' or semi-arithmetic means~\citep{kortvelesy_generalized_2023, de_carvalho_mean_2016}:
	$$z^{(l+1)}_{i} = h\left( \frac{1}{\vert \mathcal{N}(v_i)\vert}  \sum_{v_j\in\mathcal{N}(v_i)} h^{-1}\left(w_{i,j} \m^{(l+1)}\left(z_{j}^{(l)}\right)\right)\right) \, ,$$
	for some invertible function $h$ (possibly defined on a bounded domain). 
    In this case, following the popular formalism from~\cite{kolmogorov_moyenne_1930}, in order to legitimately be considered as a generalized mean, some regularity conditions are required on $h$, typically, $h$ must be a strictly increasing continuous function.
    For instance, taking $h^{-1}=x \mapsto \ln(x)$ (assuming positivity of the inputs) and $h=x \mapsto e^x$ yields the geometric mean, while taking $h^{-1}=x \mapsto x^p$ and $h=x \mapsto x^{\nicefrac{1}{p}}$ (again, assuming nonnegativity of the message if necessary) yields a power mean as employed in the moment-based aggregations from~\cite{corso_principal_neighbourhood_agg_2020} (up to centering, which we omit for simplicity).

	In the invariant case, the readout function is again an arithmetic mean (for simplicity).
\end{example}

\begin{example}[Max Convolutional Message-Passing]\label[example]{ex: 5} 
    The aggregation maximum is often mentioned as a possibility in the literature~\citep{hamilton_inductive_2017}, but we note that it is less common in practice. 
    Here the $c_{i,j}$ are also the graph weights $w_{i,j}$ but a \emph{coordinate-wise} maximum is used to combine the messages: 
	$$z^{(l+1)}_{i} = \max_{v_j\in\mathcal{N}(v_i)} w_{i,j}\m^{(l+1)}\left(z_{j}^{(l)}\right)\,.$$
	In the invariant case, the readout function is a coordinate-wise maximum.
	$$ R\left(\left\lBrace z_1^{(L)},\dots, z_n^{(L)} \right\rBrace \right) = \max_{i=1,\dots,n} z_i^{(L)}\,.$$
    Notice that, assuming positivity of the inputs, this aggregation is also obtained as the limit, for $p\to \infty$ of the generalized power mean from~\cref{ex: 4} with $h(x)=x^{\nicefrac{1}{p}}$.
\end{example}

\section{Continuous Message-Passing GNNs on Random Graph Models}\label{sec: cMPGNN}
We define the continuous counterpart of MPGNNs, which we call continuous MPGNNs. 
A cMPGNN is defined to be $L$ operators $ (\FF^{(l)})_{1\le l \le L}$ that propagate a \emph{function} on $\X$ relatively to a random graph model. 
Let $(W,P)$ be a random graph model and $f=f^{(0)}\in \Linf_P(\X,\R^{d_0})$, $f^{(l+1)}$ is recursively computed by:
\begin{equation}\label{eq: cmpgnn}
	f^{(l+1)}(x) \eqdef \FF_P^{(l+1)}\left(f^{(l)}(x),\left(f^{(l)},W(x,\cdot)\right)\right) \in\R^{d_{l+1}}\,,
\end{equation}
for all $x\in \X$.
Notice that $\FF^{(l+1)}$ depends on the measure $P$. 
Considering the functions $f^{(l)}$ as signals on the vertex set $\X$, the update $f^{(l+1)}(x)$ of a node $x\in\X$ is calculated from the knowledge of its current representation $f^{(l)}(x)$ and all its ``weighted neighborhood'' $\left(f^{(l)}, W(x,\cdot)\right)$. 
The latter is a short notation for the map $y\mapsto \left(f^{(l)}(y), W(x,y)\right)$ at $x$ fixed, which is the continuum equivalent of the multiset of pairs of weighted neighbors ${\multiSet*{(z_{j}^{(l)}, w_{i,j}) \given j \in \mathcal{N}(i)}}$ from~\cref{eq: general message-passing}.

For conciseness, we will often overload~\cref{eq: cmpgnn} with the short notation:
\begin{equation}\label{eq: cmpgnn overloaded notation}
	f^{(l+1)} \eqdef \FF^{(l+1)}_P\left(f^{(l)}, W\right)\,.   
\end{equation}

We denote $\Theta_{W,P}(f)$ the output in the equivariant case and $\overline{\Theta}_{W,P}(f)$ in the invariant case:
\begin{equation}\label{eq: continuum readout}
	\Theta_{W,P}(f) \eqdef f^{(L)} \in \Linf_P(\X,\R^{d_L}), \qandq \overline{\Theta}_{W,P}(f) \eqdef \RRR_P(\Theta_{W,P}(f)) \in \R^{d_L}\,,
\end{equation}
where $\overline{\Theta}_{W,P}$ involves an additional continuum readout operator 
$$\RRR: \mathcal{P}(\X) \times \Linf_P(\X,\R^{d_L}) \to  \R^{d_L} \,.$$
Naturally, we also demand the equivariant and invariant versions of the cMPGNN to respectively be equivariant and invariant to random graph model isomorphism. To that extent, we impose the following assumption on the operators  $\FF^{(l)}$ and on $\RRR$.
\begin{assumption}\label{ass: change of variable on omega}
	There is a subgroup $H\subset\Aut(\X)$ such that $\forall\;1\le l\le L$, $\forall f \in \Linf_P(\X,\R^{d_l})$, $\forall \phi\in H$,
	$$
	\FF^{(l)}_{\phi \cdot P}(\phi \cdot f, \phi \cdot W) = \phi \cdot \FF^{(l)}_P(f,W)\,,
	$$
	and,
	$$  \RRR_{\phi\cdot P}(\phi\cdot f)) =\RRR_P(f)\,.$$
\end{assumption}
\Cref{ass: change of variable on omega} is largely inspired by the classical change of variable formula by push forward measure in Lebesgue integration. 
This formula states that for any $\phi\in\Aut(\X)$ and any measurable map $f$,
\begin{equation}\label{eq: classical measure cdv}
	\int f  \d P = \int \phi\cdot f \d(\phi \cdot P)\, .
\end{equation}
It is not difficult to verify that if, for example, $\FF_{P}(f,W) = \int f(y) W(x,y)  \d P(y)$ and ${\RRR_P(f)=\int f  \d P }$, then~\cref{eq: classical measure cdv} implies~\cref{ass: change of variable on omega} with $H = \Aut(\X)$.

Contrary to the discrete case, where the symmetry is valid for the full group $\Sn$, we require here a symmetry for a subgroup of $\Aut(\X)$ only. 
Ideally, one would like~\cref{ass: change of variable on omega} to hold for $H=\Aut(\X)$. 
However, in the next section, we will interpret some cMPGNN as limits of discrete MPGNN, such that the graph isomorphism symmetry becomes a random graph model isomorphism symmetry, as the number of nodes tends to infinity. 
In this context, the example of maximum aggregation (\cref{ex: 5,,ex: e} in the next section) will highlight the fact that, for a matter of \emph{existence} of such a limit, one may have to impose some conditions on $P$, and thus restrict to a subgroup of $\Aut(\X)$.

\begin{proposition}[Invariance and equivariance of cMPGNNs]\label{prop: equiv of cmpgnn}
	Let $(W,P)$ be a random graph model on $\X$. 
    Then, under~\cref{ass: change of variable on omega}, $\Theta$ and $\overline{\Theta}$ are respectively $H$-equivariant and $H$-invariant. 
    In other words, for any $f$, for any $\phi\in H$, 
    $$\Theta_{\phi\cdot (W, P)}(\phi\cdot f)=\phi\cdot \Theta_{W,P}(f)\,,$$
    and,
    $$\overline{\Theta}_{\phi\cdot (W, P)}(\phi\cdot f)= \overline{\Theta}_{W,P}(f)\,.$$
\end{proposition}

\begin{proof}
	We start with the equivariant case, the invariant one will follow immediately by composition with $\RRR$. 
Let the $\mathcal{L}_{W,P}^{(l)}\colon \R^{n\times d_{l-1}} \to \R^{n\times d_l}$ be the layer operators such that $\Theta_{W,P}^{(L)} = \mathcal{L}_{W,P}^{(L)}\circ \dots\circ \mathcal{L}_{W,P}^{(1)}$. 
    Let $f\in \Linf_P(\X,\R^{d_{l-1}})$, using~\cref{ass: change of variable on omega} we obtain
	$$
	\phi\cdot \mathcal{L}^{(l)}_{W,P}(f) = \phi \cdot \FF_P^{(l)}\left(f,W \right) = \FF_{\phi\cdot P}^{(l)}\left(\phi\cdot f,\phi\cdot W\right) =  \mathcal{L}^{(L)}_{\phi\cdot (W, P)}(\phi\cdot f)\,.
	$$
	So the Proposition is true on all the $\mathcal{L}_{W,P}^{(l)}$, thus also true on $\Theta_{W,P}^{(L)}$ by composition.
	For the invariant case, it is clear from~\cref{ass: change of variable on omega} that $\RRR$ is $H$-invariant. 
    The fact that the composition of an equivariant map followed by an invariant map is invariant yields the result.
\end{proof}

In the following, are some examples of cMPGNN. 
The reader will, of course, see the intuitive connection to the previous~\cref{ex: 1,ex: 2,ex: 3,ex: 4,ex: 5}
In the next section, we will precisely see in what sense~\cref{ex: 1,ex: 2,ex: 3,ex: 4,ex: 5}, when applied on random graphs and as $n$ grows large, converge to the following cMPGNNs. 

\begin{myexample}[Convolutional Message-Passing]\label[example]{ex: a} 
    The arithmetic mean becomes an integral over the probability space
	$$f^{(l+1)}(x) = \int_{y\in \X} W(x,y)\m^{(l+1)}\left(f^{(l)}(y)\right) \d P(y)$$
	and, in the invariant case, the continuous readout is
	$$ \RRR_P\left(f^{(L)}\right) = \int_{\X} f^{(L)}  \d P\,.$$
\end{myexample}

\begin{myexample}[Degree Normalized Convolutional Message-Passing]\label[example]{ex: b} 
    The continuous counterpart is
	$$f^{(l+1)}(x) = \int_{y\in \X} \frac{W(x,y)}{\int_{t\in \X} W(x,t) \d P(t)}\m^{(l+1)}\left(f^{(l)}(y)\right) \d P(y)\,.$$
	In the invariant case, the readout is again the integral relatively to $P$.
\end{myexample}

\begin{myexample}[Attention based Message-Passing]\label[example]{ex: c} 
    The continuous counterpart is
	$$f^{(l+1)}(x) = \int_{y\in \X} \frac{c^{(l+1)}\left(f^{(l)}(x), f^{(l)}(y),W(x,y)\right)}{\int_{t\in \X} c^{(l+1)}\left(f^{(l)}(x), f^{(l)}(t),W(x,t)\right) \d P(t)}\m^{(l+1)}\left(f^{(l)}(y)\right) \d P(y)\,.$$
	In the invariant case, the readout is again the integral relatively to $P$.
\end{myexample}

\begin{myexample}[Generalized mean]\label[example]{ex: d} 
    We simply add the function $h$ to the mean example
    $$f^{(l+1)}(x) = h\left(\int_{y\in \X} h^{-1}\left(W(x,y) \m^{(l+1)}\left(f^{(l)}(y)\right)\right) \d P(y)\right)\,.$$
	In the invariant case, the readout is again the integral relatively to $P$.
\end{myexample}

\begin{myexample}[Max Convolutional Message-Passing]\label[example]{ex: e} 
    The maximum becomes a coordinate-wise essential supremum according to the probability measure $P$ (that is, a supremum $P$-almost everywhere)
	$$f^{(l+1)}(x) = \uesssup{y\in \X,\, P}W(x,y)\m^{(l+1)}\left(f^{(l)}(y)\right)\,,$$
	and, in the invariant case, the final readout is the coordinate-wise: 
	$$\RRR_P\left(f^{(L)}\right) = \uesssup{y\in \X,\, P}f^{(L)}(y) \,.$$
	Remark that we use the $P$-essential supremum and not the classical supremum here, since this will represent the limit maximum of variables that are $P$-distributed.
    Nevertheless, if the measure $P$ is strictly positive, notice that the essential supremum of a continuous function is nothing more than the usual supremum, see~\cref{lem: esssup = sup} in~\cref{app: Useful results}
\end{myexample}

\begin{remark}
	It can be easily verified that for all these examples, the underlying $\FF^{(l)}$ functions satisfy~\cref{ass: change of variable on omega} with $H=\Aut(\X)$. 
    For the integral, it is ensured by the classical change of variable formula~\cref{eq: classical measure cdv}. 

	As for the essential supremum, a similar formula holds. 
    Indeed, recall that for any measurable $g$, for any measurable bijection $\phi$, one has
	$$ \uesssup{P} g\circ \phi = \inf \Set*{M \given P\left(g\circ \phi > M\right) = 0} \,,$$
    where $\left(g\circ \phi > M\right)$ is the probabilistic notation for the set $\Set*{x \given g\circ \phi(x) > M }$.
	Hence, since by the bijectivity of $\phi$, 
    $$\Set*{x \given g\circ \phi(x) > M } = \Set*{ \phi^{-1}(y) \given g(y) > M} \,,$$ 
    one has $(g\circ \phi > M) = \phi^{-1}(g > M)$, which finally yields
	$$\inf\Set{M \given P(g\circ \phi > M)=0} = \inf\Set{M \given P(\phi^{-1}(g > M))=0} = \uesssup{\phi_{\#}P} g \,.$$
\end{remark}

\section{Continuous Message-Passing GNNs as Limits of Discrete Message-Passing GNNs on Large Random Graphs} \label{sec: Cv}
This section contains the core of our contributions. 
We focus on MPGNNs when applied on random graphs $G_n$ drawn from $\GG_n(W,P)$. 
Specifically, given such an MPGNN, we are interested in its limit as $n$ tends to infinity. 
We show that under mild regularity conditions, such a limit exists and is a cMPGNN. Furthermore, we provide some non-asymptotic bounds to control the deviation between an MPGNN and its limit cMPGNN with high probability. 

This section is divided in two parts. 
In the first part (\cref{sec: limit of mpgnn on large random graph}), given an MPGNN, we define, \emph{when it exists}, its associated canonical cMPGNN on $(W,P)$ that we call \textbf{continuous counterpart}. 
The precise definition of this central concept is~\cref{def: construction of continuous limit}: it states how that continuous counterpart is built out of the discrete network as a limit on random graphs $G_n\sim \GG_n(W,P)$ of growing sizes. 
Then, we show that under mild regularity conditions,~\cref{ex: a,ex: b,ex: c,ex: d} are indeed the continuous counterparts of~\cref{ex: 1,ex: 2,ex: 3,ex: 4} according to our definition.

In the second part of this section (\cref{sec: cv of mpgnn}), we study the convergence of MPGNNs towards their continuous counterpart, when it exists: we give sufficient conditions for this convergence to occur and provide convergence rates in the form of non-asymptotic bounds with high probability. 
Our first main result,~\cref{th: main result} in~\cref{sec: cv by mcdiarmid}, concerns a class of MPGNNs that have a certain kind of Lipschitz continuity among other mild assumptions: in a few words, it states that such MPGNNs have a continuous counterpart to which they converge as $n$ grows, with a controlled rate that we specify. 
As we will see, this applies to all examples \emph{but} max aggregation.
Our second result,~\cref{th: main result max} in~\cref{sec: cv for max}, is specific to the case of maximum aggregation, as in this case the bounded difference property is not verified and~\cref{th: main result} is not applicable. 
It is based on another concentration inequality and leads to a convergence rate with a dependence on the input dimension $d$ (recall $\X\subset \R^d$), contrary to the bounded differences method. 

\subsection{Definition of the Limit Message-Passing GNN}\label{sec: limit of mpgnn on large random graph}
Let $(W,P)$ be a random graph model and $f\in \Linf_{ P} (\X,\R^d)$. 
The main purpose of this subsection is to \emph{define} the natural limit of discrete MPGNNs to cMPGNNs, before examining more precisely the rate of convergence from one to the other in the next subsection.

To this end, we consider a single-layer MPGNN applied on a random graph $G_n\sim\GG_n(W,P)$ and input node features $Z = \iota_Xf$ as a sampling of some function (recall the definition of the sampling operator in~\cref{eq: sampling operator def}). 
We define a corresponding cMPGNN layer on $(W,P)$ with input map $f$. 
Since there is only one layer in this section, we drop the superscript indexation.
To motivate the next definition -- that may seem overly technical at first sight --  let us consider the simplest example, namely~\cref{ex: 1,ex: a}. 
Let us examine how~\cref{ex: a} can be recovered from~\cref{ex: 1} at the limit. 
The update of $f(x)$ is given by 
\begin{equation}\label{eq: motivation for definition of the limit 1}
	\FF_{P}(f,W)(x) = \int_{y\in \X} W(x,y)\m\left(f(y)\right) \d P(y)\,.
\end{equation}
It is fairly clear that, by the law of large numbers, this integral equals the limit of 
\begin{equation}\label{eq: motivation for definition of the limit 2}
	\frac{1}{n}\sum_{i=1}^{n} W(x,X_i)\m(f(X_i))\,,
\end{equation}
for $X_1,\dots,X_n \iid P$. 
Coincidentally,~\cref{eq: motivation for definition of the limit 2} is exactly the discrete message-passing of~\cref{ex: 1} around node $x$ of a random graph with node latent variables $\{x, X_1, \ldots, X_n\}$, signal $Z = \{f(x), f(X_1),\ldots, f(X_n)\}$ and kernel $W$.
We have thus naturally obtained the cMPGNN from~\cref{ex: a} via a limit of the MPGNN of~\cref{ex: 1} on random graphs.

Back to the general case, given an abstract discrete (single-layer) MPGNN with aggregation $F$, we want to define a cMPGNN from its limit on random graphs. 
Following the path of the above example, we could look at the almost sure limit of the message-passing equation 
$$
\lim_{n\to\infty} F \left(f(x), \multiSet*{\left(f(X_k),W(x,X_k)\right) }_{ 1\leq k\leq n}\right) \,.
$$
However, the existence of this limit is far from obvious in the general case. %
As we will see in the next definition, we will rather relax it to the convergence of the expectation instead:
\begin{equation}\label{eq: motivation for definition of the limit 4}
\E_{X_1,\dots,X_n}\left[F \left(f(x), \multiSet*{\left(f(X_i),W(x,X_i)\right) }_{ 1\leq i\leq n}\right)  \right]\,.
\end{equation}
Up to some details related to isomorphism invariance, the existence of this limit is how we \emph{define} the continuous counterpart of an MPGNN.

\begin{definition}[Continuous counterpart]\label{def: construction of continuous limit}
	Let $F$ be an MPGNN layer. 
    For any $f\in \Linf_P(\X,\R^{ d})$, $W:\X^2\to [0,1]$ and $P\in\mathcal{P}(\X)$, define the sequence of functions in  $\Linf_P(\X,\R^{ d'})$ by
	\begin{equation}\label{eq: def construction limite part 1}
    F_{P,n}(f,W): x\mapsto \E\left[F \left(f(x), \multiSet*{\left(g(X_i),W(x,X_i)\right) }_{ 2\leq i\leq n}\right)  \right] \,.
	\end{equation}
	where the expected value is taken over all the $X_2,\dots,X_n\iid P.$ 
	
	Let $\FF$ be an operator of the form~\cref{eq: cmpgnn overloaded notation} taking value in $\Linf_P(\X,\R^{ d'})$, and suppose that there exists $H$, a non-trivial subgroup of $\Aut(\X)$, such that for any $f\in \Linf_P (\X,\R^d)$ and any $\phi\in H$, the operator $\FF_{\phi \cdot P} \left( \phi\cdot f, \phi\cdot W \right)$ arises as the limit
	\begin{equation}\label{eq: def construction limite part 2}
        \FF_{\phi \cdot P} \left( \phi\cdot f, \phi\cdot W \right) = \lim_{n\to \infty} F_{\phi \cdot P, n} \left( \phi\cdot f, \phi\cdot W \right) \,,
	\end{equation}
	where the convergence occurs with respect to the norm of the space $\Linf_{ \phi \cdot P} (\X,\R^d)$.
	Then we say that $\FF$ is the \textbf{continuous counterpart} of $F$ for $H$. 
    When $H=\rm{Aut}(\X)$, or when $H$ is obvious from the context, we simply say that $\FF$ is the \textbf{continuous counterpart} of $F$.
\end{definition}

Note that we consider only $n-1$ random variables for convenience with later definitions (in an $n$-sized graph, each node has potentially $n-1$ neighbors). 
In this definition of limit GNNs, we have \emph{not} explicitly assumed that the $\FF$ defined above satisfies~\cref{ass: change of variable on omega} related to continuous isomorphisms and cMPGNN.
Fortunately, this assumption is in fact automatically verified, for the natural subgroup $H$ involved in the definition. 
In other words, every continuous counterpart of MPGNN as defined above is indeed a valid cMPGNN.

\begin{proposition}
	Let $\FF$ be the continuous counterpart of $F$ for $H$ as defined in~\cref{def: construction of continuous limit}. 
    Then it satisfies~\cref{ass: change of variable on omega} for any $\phi\in H$.
\end{proposition}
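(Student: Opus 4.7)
The plan is to exploit the fact that the finite-$n$ approximation $F_{P,n}$ defined in~\eqref{eq: def construction limite part 1} already enjoys an exact equivariance under the action of $H$, and then to transfer this property to the limit by uniqueness of limits in $L^\infty_{\phi\cdot P}$.

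First I would establish the discrete equivariance
\[
F_{\phi\cdot P, n}(\phi\cdot f,\phi\cdot W) \;=\; \phi\cdot F_{P,n}(f,W),
\]
for every $\phi\in H$ and every $n$. This is a direct calculation starting from~\eqref{eq: def construction limite part 1}: in the expectation defining $F_{\phi\cdot P,n}(\phi\cdot f,\phi\cdot W)(x)$, the variables $Y_1,\dots,Y_n$ are drawn i.i.d.\ from $\phi_\# P$, so writing $Y_k = \phi(X_k)$ with $X_k\iid P$ and using $(\phi\cdot f)(Y_k) = f(X_k)$, $(\phi\cdot W)(x,Y_k) = W(\phi^{-1}(x), X_k)$, and $(\phi\cdot f)(x) = f(\phi^{-1}(x))$, the expression reduces to $F_{P,n}(f,W)(\phi^{-1}(x)) = (\phi\cdot F_{P,n}(f,W))(x)$. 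The invariance of $F$ to the ordering of its multiset argument is what makes this coordinate change harmless.

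Next I would specialize Definition~\ref{def: construction of continuous limit} with $\phi=\mathrm{id}\in H$ to get $F_{P,n}(f,W)\to \FF_P(f,W)$ in $L^\infty_P$. The isometry $\|\phi\cdot g\|_{\phi\cdot P,\infty} = \|g\|_{P,\infty}$ (which follows from the change of variable formula for the essential supremum recalled in the remark after Example~\ref{ex: d}) then yields
\[
\phi\cdot F_{P,n}(f,W) \;\xrightarrow{L^{\infty}_{\phi\cdot P}}\; \phi\cdot \FF_P(f,W).
\]
Combining this with the discrete equivariance gives $F_{\phi\cdot P,n}(\phi\cdot f,\phi\cdot W)\to \phi\cdot \FF_P(f,W)$ in $L^{\infty}_{\phi\cdot P}$. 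But by Definition~\ref{def: construction of continuous limit} the same sequence also converges to $\FF_{\phi\cdot P}(\phi\cdot f,\phi\cdot W)$ in $L^{\infty}_{\phi\cdot P}$. Uniqueness of limits in $L^\infty_{\phi\cdot P}$ then forces $\FF_{\phi\cdot P}(\phi\cdot f,\phi\cdot W) = \phi\cdot\FF_P(f,W)$, which is precisely Assumption~\ref{ass: change of variable on omega}.

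I expect the only nontrivial step to be the discrete equivariance, and the subtlety there is purely notational: one must carefully unpack how $\phi\cdot f$ and $\phi\cdot W$ act when their arguments are themselves replaced by $\phi$-transported random variables, and use that the multiset wrapping discards the labeling of the sampled indices. Once this is done, the remainder of the argument is formal and uses only the invariance of the $L^\infty$ norm under the push-forward and the uniqueness of limits.
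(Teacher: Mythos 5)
Your proof is correct and follows essentially the same route as the paper's: the key step in both is the exact discrete equivariance $F_{\phi\cdot P,n}(\phi\cdot f,\phi\cdot W)=\phi\cdot F_{P,n}(f,W)$ obtained by the substitution $Y_k=\phi(X_k)$ and the change-of-variable formula, after which one passes to the limit using~\eqref{eq: def construction limite part 2}. You merely make explicit what the paper leaves as ``by taking the limit'', namely the isometry $\|\phi\cdot g\|_{\phi\cdot P,\infty}=\|g\|_{P,\infty}$ and uniqueness of $L^\infty$ limits, which is a welcome but not substantive difference.
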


\begin{proof}
	Let $f \in \Linf_P (\X,\R^d)$, $\phi\in H$, and $X_2,\dots,X_n \iid P$, we have for $P$-almost all $x$,
	\begin{align} \label{eq: proof equivariance of continuous counterpart 1}
		 \phi\cdot F_{ P,n}( f, W) (x) 
        & = \E_{X_2,\dots,X_n}\left[ F \left(f(\phi^{-1}(x)), \multiSet*{\left( f(X_i), W(\phi^{-1}(x),X_i)\right) }_{ 2\leq i\leq n}\right)  \right] \notag\\
        & = \E_{Y_2,\dots,Y_n}\left[ F \left(\phi\cdot f(x), \multiSet*{\left(\phi\cdot f(Y_i),\phi\cdot W(x,Y_i)\right) }_{ 2\leq i\leq n}\right)  \right] \\
		& = F_{\phi\cdot P, n }\left(\phi\cdot f,\phi\cdot W\right)(x)\,. \notag
	\end{align}
	Where in~\cref{eq: proof equivariance of continuous counterpart 1}, we have set $Y_i = \phi(X_i)$ for all $i$, which are identically distributed random variable with common law $\phi\cdot P$, and used the classical change of variable formula~\cref{eq: classical measure cdv}. Thus, by taking the limit,~\cref{eq: def construction limite part 2} implies
	\begin{equation*}
		\FF_{\phi\cdot P}(\phi\cdot f, \phi\cdot W) = \phi\cdot \FF_P(f,W)\,,
	\end{equation*}
    which is the desired result.
\end{proof}

The same definitions and propositions as above can also be given for a readout final layer.

\begin{definition}\label{def: construction of the limit readout}
	Let $R$ be an MPGNN readout layer and $P\in\mathcal{P}(\X)$. 
    For $f\in \Linf_P(\X,\R^{ d})$, we define the sequence of functions
    $$R_{P,n}(f)= \E_{X_1,\dots,X_n}\left[R\left( \multiSet*{ f(X_1),\dots, f(X_n)} \right)\right] \in\R^{d'}\,,$$
	where the expected value is taken over all the  $X_1,\dots,X_n\iid P.$ 

	Let $\RRR$ be a continuum readout operator of the form~\cref{eq: continuum readout} taking values in $\R^{d'}$.
	Suppose we have $H$ a non-trivial subgroup of $\Aut(\X)$ such that  for any $f\in \Linf_P (\X,\R^d)$,  for any $\phi\in H$,   $R_{\phi\cdot P,n}(\phi\cdot f)$ converges to $ \RRR_{\phi\cdot P}(\phi\cdot f)$ in the $\Vert\cdot\Vert_{\infty}$ norm of $\R^d$
	$$R_{\phi\cdot P,n}(\phi\cdot f)\rightarrow \RRR_{\phi\cdot P}(\phi\cdot f)\,.$$
	Then we say that $\RRR$ is the \textbf{continuous counterpart} of $R$ for $H$, unless $H=\rm{Aut}(\X)$ or $H$ is obvious from context, in which case we simply say that $\RRR$ is the continuous counterpart of $R$.
\end{definition}

Naturally,~\cref{def: construction of the limit readout} also implies invariance with respect to the proper subgroup.

\begin{proposition}
	Let $\RRR$ be the continuous counterpart of $R$ as in~\cref{def: construction of the limit readout}. 
    Then it satisfies~\cref{ass: change of variable on omega} for any $\phi\in H$.
\end{proposition}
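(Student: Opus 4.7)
The plan is to mirror closely the argument that established equivariance of the continuous counterpart $\FF$, since the readout case is structurally identical but simpler (no dependence on $x$ and no kernel $W$). The goal is to produce the identity $\RRR_{\phi\cdot P}(\phi\cdot f) = \RRR_P(f)$ for every $f\in L^\infty_P(\XX,\RR^d)$ and every $\phi \in H$.

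First, I would start from the finite-$n$ object $R_{P,n}(f) = \EE_{X_1,\dots,X_n}\bigl[R(\lBrace f(X_1),\dots,f(X_n)\rBrace)\bigr]$, where $X_k \iid P$, and perform the change of variable $Y_k = \phi(X_k)$. Since $\phi \in \Aut(\XX)$, the $Y_k$ are i.i.d.\ with law $\phi_{\#}P = \phi\cdot P$, and by definition of the action one has $f(X_k) = f(\phi^{-1}(Y_k)) = (\phi\cdot f)(Y_k)$ for every $k$. The multiset structure of the readout plays no role here beyond symmetry, which is built in. Plugging this into the expectation yields
\begin{equation*}
R_{P,n}(f) \;=\; \EE_{Y_1,\dots,Y_n}\bigl[R(\lBrace (\phi\cdot f)(Y_1),\dots,(\phi\cdot f)(Y_n)\rBrace)\bigr] \;=\; R_{\phi\cdot P,\,n}(\phi\cdot f).
\end{equation*}

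Second, I would pass to the limit $n\to\infty$. By Definition~\ref{def: construction of the limit readout} applied once with the identity of $H$ and once with $\phi$, the left-hand side converges to $\RRR_P(f)$ and the right-hand side converges to $\RRR_{\phi\cdot P}(\phi\cdot f)$, both in the $\|\cdot\|_\infty$ norm of $\RR^{d'}$. Since the two sequences are pointwise equal for each $n$, their limits coincide, giving the desired invariance $\RRR_{\phi\cdot P}(\phi\cdot f) = \RRR_P(f)$, which is exactly the second part of Assumption~\ref{ass: change of variable on omega}.

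There is essentially no real obstacle: the only subtle point to double-check is that the change of variable is legitimate at the level of expectations, which follows from the classical push-forward formula~\eqref{eq: classical measure cdv} applied to the bounded measurable integrand $R(\lBrace f(X_1),\dots,f(X_n)\rBrace)$ viewed as a function of $(X_1,\dots,X_n)$ on $\XX^n$ equipped with the product measure $P^{\otimes n}$ (which pushes forward to $(\phi\cdot P)^{\otimes n}$ under the coordinatewise action of $\phi$). Once this is observed, the argument is a one-line calculation followed by a limit, parallel to equation~\eqref{eq : proof equivariance of continuous counterpart 1} in the proof above.
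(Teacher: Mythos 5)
Your proof is correct and is precisely the argument the paper intends: the paper omits the proof here because it is the readout-analogue of the preceding proposition's proof (the push-forward change of variable $Y_k=\phi(X_k)$ at the finite-$n$ level, then passing to the limit via Definition~\ref{def: construction of the limit readout}), and you have reproduced exactly that, including the observation that $P^{\otimes n}$ pushes forward to $(\phi\cdot P)^{\otimes n}$ under the coordinatewise action of $\phi$.
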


Going back to our four examples of~\cref{sec: MPGNN definition,sec: cMPGNN}, we now show that~\cref{ex: a,ex: b,ex: c,ex: d} are the continuous counterparts of~\cref{ex: 1,ex: 2,ex: 3,ex: 4} for the full $\Aut(\X)$.
We will assume some mild positivity conditions for the coefficients in the degree normalized and the GAT examples. 
These are satisfied for most instances of kernels (\eg Gaussian) or attention coefficients (\eg the typical exponentials of LeakyReLUs from~\cite{velickovic_graph_2018}), as the data live in a bounded domain $\X$. 
For the generalized mean of~\cref{ex: 4,ex: d}, we will rely on an Hölder type regularity assumption.

\Cref{ex: 5,ex: e} are however more involved, as one has to be careful with the shape of $\X$ and the properties of $P$ to avoid null set issues at the boundary of $\X$. 
We will show that if $\X$ contains no nonempty open null set, and if $W$ and $f$ are continuous, then~\cref{ex: e} is the continuous counterpart of~\cref{ex: 5} for the subgroup $H$ of $\Aut(\X)$ consisting of all the \emph{homeomorphisms} from $\X$ into itself. 

The proof of the next proposition is given in~\cref{sec: app examples proofs}.

\begin{proposition}[Continuous counterpart of Examples]\label{prop: continuous-examples}
    The following holds:
    \begin{itemize}
        \item \textbf{\Cref{ex: 1,ex: a}.} With no additional restriction on $W$, $f$, nor $P$,~\cref{ex: a} is the continuous counterpart of~\cref{ex: 1} for the full $\Aut(\X)$.
        \item \textbf{\Cref{ex: 2,ex: b}.} Suppose that $\m$ is bounded and that there is a strictly positive $a$ such that $0 < a \le  W$. 
            Then~\cref{ex: b} is the continuous counterpart of~\cref{ex: 2} for the full $\Aut(\X)$.
        \item \textbf{\Cref{ex: 3,ex: c}.} Suppose that $\m$ is bounded and that there is two positive constants $0<a<b$ such that $a\le  c(f(x),f(y),W(x,y)) \le  b$. 
            Then~\cref{ex: c} is the continuous counterpart of~\cref{ex: 3} for the full $\Aut(\X)$.
        \item \textbf{\Cref{ex: 4,ex: d}.} Suppose that $h^{-1}$ is bounded and that $h$ is $\alpha_h$-Hölder on the range of $h^{-1}$ with $0<\alpha_h\leq 1$, that is, $\|h(x) - h(y)\|_\infty \leq K_h\|x-y\|_\infty^{\alpha_h}$. 
            Then~\cref{ex: d} is the continuous counterpart of~\cref{ex: 4} for the full $\Aut(\X)$.
        \item \textbf{\Cref{ex: 5,ex: e}.} Suppose that $W$, $\m$, and $f$ are continuous and that the measure $P$ is strictly positive on $\X$, \ie any nonempty relative open of $\X$ has a strictly positive measure by $P$. 
            Then~\cref{ex: e} is the continuous counterpart of~\cref{ex: 5} for $\Hom(\X)$: the subgroup of $\Aut(\X)$ made of the $\phi\in\Aut(\X)$ that are homeomorphisms.
    \end{itemize}
\end{proposition}

\subsection{Convergence of Message-Passing GNNs to their Continuous Counterpart}\label{sec: cv of mpgnn}
Let $(W,P)$ be a random graph model, and $(G_n)_{n\geq 1}$ be a sequence of random graphs drawn from $\GG_n(W,P)$. 
We go back to the multi-layer setup: consider an MPGNN $(F^{(l)})_{1\le l \le L}$, a readout $R$ and assume that their continuous counterparts $(\FF^{(l)})_{1\le l \le L}$ and $\RRR$ in the sense of~\cref{def: construction of continuous limit,def: construction of the limit readout} exist. 
For an $f\in \Linf_P(\X,\R^{d_0})$, does the MPGNN on $G_n$ with input signal $\iota_Xf$ actually converge to the cMPGNN on $(W,P)$ with input signal $f$? 
If yes, at which speed? 
In this section, we provide non-asymptotic bounds with high probability to quantify this convergence.

Our main theorems state that, under mild regularity conditions and with high probability, $\Theta_{G_n}(\iota_X(f))$ is close to $\Theta_{W,P}(f)$ in the equivariant case and that $\overline{\Theta}_{G_n}(\iota_X(f))$ is close to $\overline{\Theta}_{W,P}(f)$ in the invariant case. 
For the latter, we can compare both outputs directly since they belong to the same vector space. 
The comparison is however more involved in the equivariant case since $\Theta_{G_n}(\iota_X(f))$ is a matrix of size $n\times d_L$ and $\Theta_{W,P}(f)$ is a function from $\X$ to $\R^{d_L}$. 
In this case, we choose to sample the function, which results in measuring the deviation with the Maximum Absolute Error (MAE) defined by 
\begin{equation}\label{eq: def of mae}
    \MAE_X(Z,f) = \max_{1\le i\le n} \norm*{z_i - f(X_i)}_{\infty}\,.
\end{equation}
Applied on an MPGNN and its cMPGNN counterpart, it can be written as
$$
\MAE_X\left( \Theta_{G_n}\left( \iota_X f \right) , \Theta_{W, P}(f) \right) = \norm*{ \left( \iota_X f \right)^{(L)}_i - \iota_X\left( f^{(L)} \right)_i}_\infty
\,,$$       
which corresponds to the following process.
Start with $f \in \Linf_P\left( \X, \R^{d_0} \right)$.
On the one hand, sample $f$ to get a graph signal on $G_n$, and then pass it through the discrete MPGNN $\Theta_{G_n}$.
On the other hand, first pass $f$ through the continuous counterpart cMPGNN $\Theta_{W, P}$, and then sample the result to get a graph signal on $G_n$.
Compare these two graph signals.

Our first theorem is based on the bounded differences method and the McDiarmid inequality (\cref{th: Multi dimendional McD Holder}). 
It encompasses numerous examples of MPGNNs that include~\cref{ex: 1,ex: 2,ex: 3,ex: 4}. 
For~\cref{ex: 5} however, we obtain a different bound based on other concentration inequalities.

\subsubsection{The bounded differences method}\label{sec: cv by mcdiarmid}

Let us detail the assumptions that we make on the MPGNN. 
Our result is based on the so-called McDiarmid inequality~\citep{mcdiarmid_inequality_1989} (\cref{th: McDiarmid,th: Multi dimendional McD Holder}), which says that a multivariate function of independent random variable has a sub-Gaussian concentration around its mean, provided that it satisfies the following notion of bounded differences.

\begin{definition}[Bounded Differences Property]\label{def: bounded diff}
	Let $f: \mathcal{E}^n \to \R$ be a function of $n$ variables. 
    We say that $f$ has the bounded differences property if there exist $n$ nonnegative constants  $c_1, \dots, c_n$ such that for any $1\leq i \leq n$
	\begin{equation}\label{eq: bounded diff}
        \abs*{f(x_1\dots,x_{i-1},x_i,x_{i+1},\dots,x_n)-f(x_1,\dots,x_{i-1},x_i',x_{i+1},\dots,x_n)} \leq c_i\,,
	\end{equation}
	for any $x_1,\dots,x_n,x_i' \in \mathcal{E}$.
\end{definition}
In plain terms, whenever one fixes all but one of the components of $f$, the variations should be bounded. 
For a fixed $x_1\in\X$, we are then interested in the bounded differences of the layers of our MPGNN
\begin{equation}\label{eq: map where we get D_n from}
(x_2,\dots,x_n)\mapsto F^{(l)}\left(f^{(l-1)}(x_1), \multiSet*{\left( f^{(l-1)}(x_i),W(x_1,x_i)\right) }_{ i\ge 2} \right)\,,
\end{equation}
as a map of the $n-1$ variables $x_2,\dots,x_n$. 
These bounded differences depend on $x_1$. 
Remark that, if we call them $c_2(x_1)\dots,c_n(x_1)$, since~\cref{eq: map where we get D_n from} is invariant to the permutations of $x_2,\dots,x_n$, they can be taken all equal $c_2(x_1)=\dots=c_n(x_1)$. 
To handle some examples, we allow to take a transform $\psi$ of the layers before computing the bounded difference, which, as we will see, may influence the final rate of convergence. 
For instance, for~\cref{ex: 4} with $h=(\cdot)^{1/p}$, one should lift back to the power $p$ \emph{before} computing the bounded differences to get the proper rate (\ie take $\psi=h$ in the assumption below). 
The precise assumption that we make is the following.

\begin{assumption}[Bounded differences of layers]\label{ass: bounded}
	There is a function $\psi$, assumed invertible on the proper domain such that for all layers $l$: 
    \begin{enumerate}[label=(\roman*)]
		\item\label{item 1 assumtion bounded} 
            We have the bounded difference inequality
        \begin{multline*}
        \uesssup{x_1\in \X,\,P} \Big\Vert \psi^{-1}\left(F^{(l)}\left(f^{(l-1)}(x_1), \multiSet*{( f^{(l-1)}(x_i),W(x_1,x_i)) }_{ i\ge 2} \right)\right) \\
        - \psi^{-1}\left(F^{(l)}\left(f^{(l-1)}(x_1), \multiSet*{( f^{(l-1)}(x'_i),W(x_1,x'_i))}_{i\ge 2} \right)\right)\Big\Vert_\infty \leq D^{(l)}_n
             \,,
        \end{multline*}
		for all $x_i,x'_i$, $i \geq 2$, and $x_i = x'_i$ except for $i=2$, where $D_n^{(l)}$ is some rate.
		\item\label{item 2 assumtion bounded} 
            We have
		\begin{multline*}
			\uesssup{x_1\in \X,\,P} \Big\Vert \psi\left(\E \psi^{-1} \left( F^{(l)}\left(f^{(l-1)}(x_1), \left\lBrace( f^{(l-1)}(X_i),W(x_1,X_i)) \right\rBrace_{i\ge 2} \right)\right)\right) \\
			- \E F^{(l)}\left(f^{(l-1)}(x_1), \left\lBrace( f^{(l-1)}(X_i),W(x_1,X_i)) \right\rBrace_{i\ge 2} \right)\Big\Vert_\infty \leq \widetilde D^{(l)}_n
            \,,
		\end{multline*}
		where the expectation is over $X_2, \ldots, X_n \sim P$ and $\widetilde D_n^{(l)}$ is some rate.
		\item\label{item 3 assumtion bounded} 
            The function $\psi$ is $\alpha_\psi$-Hölder on its domain with $0<\alpha_\psi\leq 1$:
		\begin{equation*}
            \norm*{\psi(y) - \psi(y')}_\infty \leq K_\psi \norm*{y-y'}_\infty^{\alpha_\psi}
            \,.
		\end{equation*}
	\end{enumerate}
	
\end{assumption}
In several examples, it will be enough to consider $\psi=\id$, such that the layers themselves satisfy the bounded difference property. 
In that case,~\cref{item 2 assumtion bounded,item 1 assumtion bounded} from~\cref{ass: bounded} above are automatically satisfied with exponent $\alpha_\psi=1$. 
However, we allow for a transform $\psi^{-1}$ before computing the bounded differences, to handle more general cases (namely, the generalized mean~\cref{ex: 4,ex: d}) through the modified McDiarmid inequality~\cref{th: Multi dimendional McD Holder} in~\cref{app: Useful results}. 
The next assumption relates to the existence of the continuous counterpart of the layers as defined by~\cref{def: construction of continuous limit}. 
We recall that this is defined by the convergence of the expectation of the layers, here we just denote the rate of convergence by $s_n$.

\begin{assumption}[Continuous counterpart of layers]\label{ass: continuous layer}
	For any layer $l$, $\FF^{(l)}$ is the continuous counterpart of $F^{(l)}$, and using the notations of~\cref{def: construction of continuous limit}, we let $\left(s_n^{(l)}\right)$ be a sequence of positive reals such that
    \begin{equation}\label[inequality]{eq: def of s_n}
		\left\lVert F_{P,n}^{(l)}\left(f^{(l-1)},W\right) - \FF_{P}^{(l)}\left(f^{(l-1)},W\right)\right\rVert_{\infty} \le s_n^{(l)} \to 0.
	\end{equation}
	for all $n$. 
\end{assumption}

Finally, we state a final assumption on the Hölder stability of the layers, which is usually easy to verify.

\begin{assumption}[Hölder property of layers]\label{ass: holder layer}
	There exists an exponent $0<\alpha_F \leq 1$, and, for any $1\le l\le L$, there exist a constant $K_{F,n}^{(l)} > 0$ such that the aggregations $F^{(l)}$ satisfy
		$$
        \left\lVert F^{(l)}\left(z_1,\left\lBrace(z_i, w_i)\right\rBrace_{i\geq 2}\right) - F^{(l)}\left(z_1',\left\lBrace (z'_i, w_i)\right\rBrace_{i \geq 2}\right)\right\rVert_{\infty} \le K_{F,n}^{(l)} \max_{1\leq i \leq n} \norm*{z_i-z'_i}_{\infty}^{\alpha_F}
		$$
		for all $z_i,z'_i \in \R^{d_{l-1}}$ and $w_i \in [0,1]$. %
		Moreover, the $K_{F,n}^{(l)}$ are bounded over $n$.
\end{assumption}
Note that the ``for all'' statement implies that the $z'_i$ can be permuted, and the bound must stay valid.
We finish by stating the same assumptions for the readout function in the invariant case.

\begin{assumption}[Readout function]\label{ass: readout}
	The readout function satisfies the following
	\begin{enumerate}[label=(\roman*)]
		\item\label{item 1 ass readout} 
            The readout function $R$ has a continuous counterpart $\RRR$, and we let $r_n$ be such that (recall the definitions of $R_{P,n}$ and $\RRR_{P}$ from~\cref{def: construction of the limit readout})
            \begin{equation}\label[inequality]{eq: def of r_n}
                \norm*{R_{ P,n}(f^{(L)}) - \RRR_{P}(f^{(L)})}_{\infty} \le r_n\,.
		\end{equation}
		\item\label{item 2 ass readout} 
            The readout function has bounded differences(\cref{def: bounded diff}).
            Since the $n$ bounded differences can be taken all equal due to permutation invariance, we call $C_n$ the common bounded difference of 
		$$ (x_1,\dots,x_n)\mapsto R\left( \left\lBrace f^{(L)}(x_1),\dots,f^{(L)}(x_n) \right\rBrace \right)$$
            at each coordinate.
		\item\label{item 3 ass readout} 
            There exists $K_{R,n} >0 $ such that 
            $$\left\Vert R\left(\left\lBrace z_i\right\rBrace_{i\le 1\le n}\right) - R\left(\left\lBrace z_i'\right\rBrace_{i\le 1\le n}\right) \right\Vert_{\infty} \le K_{R,n} \max_{1\le i\le n} \norm*{z_i - z_i' }_{\infty}\,,$$
		for all $z_i, z_i' \in \R^L$. 
        Moreover, the $K_{R,n}$ are bounded over $n$.
	\end{enumerate}

\end{assumption}
Again, the ``for all'' statement implies that the bound remains under any permutation of the $z_i'$. 
Note that, for simplicity, we do not involve any ``Holder'' exponent in the readout function, as all our examples can be treated without. 
Our main result is then the following.

\begin{theorem}[MPGNN convergence towards cMPGNN]\label{th: main result}
Under~\cref{ass: bounded,ass: continuous layer,,ass: holder layer}, 
    for any $0 < \rho \leq 1$, the following assertions are verified.
    \begin{itemize}
        \item \textbf{Equivariant case.}
            With probability at least $1- \rho$, it holds
            \begin{equation}\label[inequality]{eq: main result bound}
            \begin{split}
                & \MAE_X\left(\Theta_{G_n}(\iota_X(f)), \Theta_{W,P}(f)\right) \\
                & \le \sum_{l=1}^{L}A_n^{(l,L)}\left[\left(\frac{1}{2}\left(D_{n}^{(l)}\right)^2 n \ln\left(\frac{2^{L+2-l} d_l n}{\rho}\right)\right)^{\nicefrac{\alpha_\psi}{2}} + \widetilde D_{n}^{(l)} + s_{n}^{(l)}\right]^{\alpha_F^{L-l}}\,,
            \end{split}
            \end{equation}
        \item \textbf{Invariant case.}
            With additionally~\cref{ass: readout}, it holds with probability at least $1 - \rho$
            \begin{equation}\label[inequality]{eq: main result bound inv}
            \begin{split}
                & \left\Vert \overline{\Theta}_{G_n}(\iota_X(f)) - \overline{\Theta}_{W,P}(f) \right\Vert_{\infty} \\
                &\le K_{R,n}\sum_{l=1}^{L}A_n^{(l,L)}\left[\left(\frac{1}{2}\left(D_{n}^{(l)}\right)^2 n \ln\left(\frac{2^{L+3-l} d_l n}{\rho}\right)\right)^{\nicefrac{\alpha_\psi}{2}} + \widetilde D_{n}^{(l)} + s_{n}^{(l)}\right]^{\alpha_F^{L-l}} \\
                & \quad + C_n\sqrt{n\ln\left(\frac{4d_{L}}{\rho} \right)} + r_n \,.
            \end{split}
            \end{equation}
    \end{itemize}	
    In both cases, $A_n^{(l,L)} = \prod_{k=l+1}^{L}\left(K_{F,n}^{(k)}\right)^{\alpha_F^{L-k}}$ for $1\le l\le L$, with the usual convention that a product indexed by the empty set always equals $1$.

    Recall that $D_n^{(l)}, \widetilde D_n^{(l)}$ and $\alpha_\psi$ are respectively the bounded differences and the Hölder exponent from~\cref{ass: bounded}; $s_n^{(l)}$ is from~\cref{ass: continuous layer}; the $K_{F,n}^{(l)}$ and $\alpha_F$ are from~\cref{ass: holder layer}; and $r_n, K_{R,n}$ and $C_n$ are from~\cref{ass: readout}.
\end{theorem}

\begin{proof}[Sketch of proof](See Appendix~\ref{app: proof CV Mc diarmid} for full proof)
We prove the result by induction on the number of layers $L$. At each step, we bound $\Vert z_i^{(L)} - f^{(L)}(X_i)\Vert$ for all $i$. This is done by conditioning over $x_i$ and finding a bound of 
$$\left\Vert F^{(L)} \left(f^{(L-1)}(x_i), \multiSet*{\left(f^{(L-1)}(X_k),W(x_i,X_k)\right)}_{k\ne i}\right) - f^{(L-1)}(x_1)\right\Vert_{\infty} $$
	that does not depend on $x_i$, using a succession of triangular inequalities, the Hölder-type property of Assumption~\ref{ass: holder layer} and McDiarmid's inequality. We then turn it into a bound for $\lVert z_i^{(L)} - f^{(L)}(X_i)\rVert_{\infty}$ via the law of total probability and conclude with a union bound over $i$.
\end{proof}

We present some corollaries and their consequences on our pool of examples.
We can get a more explicit rate by disregarding multiplicative constants, as in the following corollary.
\begin{corollary}\label{cor: main result lesssim}
    Under the assumptions of~\cref{th: main result}, for $n$ large enough, and for $0< \rho< 1$, the following holds with probability at least $1- \rho$.
    \begin{equation}\label[inequality]{eq: main result bound lesssim}
	       \MAE_X\left(\Theta_{G_n}(\iota_X(f)), \Theta_{W,P}(f)\right) 
           \lesssim \left(D_n^2n\ln\left(\frac{n}{\rho} \right)\right)^{\nicefrac{\alpha_F^{L-1}\alpha_\psi}{2}} + \widetilde D_n^{\alpha_F^{L-1}} + s_n^{\alpha_F^{L-1}}\,,
	\end{equation}
	Where $D_n=\max_l D_n^{(l)}$, $\widetilde D_n=\max_l \widetilde D_n^{(l)}$, $s_n = \max_l s_n^{(l)}$ and the symbol $\lesssim$ hides some multiplicative constants which depend on $K_{F,n}^{(1)}, \dots K_{F,n}^{(L)}, K_{R,n}$ and are bounded over $n$. 
\end{corollary}

\begin{proof}
    Knowing that $(D_n)$ and $(s_n)$ both tend to zero, there is an integer $n_0$ such that, for any $n \ge n_0$, for each term of the sum in the right-hand side of~\cref{eq: main result bound}, the base under the exponent $\alpha_{L-l}$ is smaller than $1$. 
    After, roughly majorizing each term of the sum, we disregard the multiplicative constants that are bounded over $n$ under the symbol $\lesssim$. 
    Finally, we use the inequality $(x+y)^a \leq x^a + y^a$ for $a\leq 1$.
    Recall that the $K_{F,n}^{(l)}$ and $K_{R,n}$ are assumed bounded from~\cref{ass: holder layer,ass: readout}.
\end{proof}

Notice that~\cref{eq: main result bound} from~\cref{th: main result} was non-asymptotic, that is, valid for any integer $n$.
It is no longer in~\cref{cor: main result lesssim} above, 
~\cref{eq: main result bound lesssim} only holds for $n$ large enough where the order of magnitude of the ``large enough'' will depend on the example.

The asymptotic behavior of~\cref{eq: main result bound lesssim} is generally determined by $D_n$. 
If the latter does not decrease fast enough, the inequality becomes meaningless, as it does not yield to convergence. 
In particular, we have the following important corollary.

\begin{corollary}\label{cor: main th}
	If $D_n= o\left(\frac{1}{\sqrt{n\ln n}}\right)$ then $\MAE_X(\Theta_{G_n}(\iota_X(f)), \Theta_{W,P}(f))$ converges in probability towards $0$. 
    Moreover, if $D_n \sim  n^{-\beta}$ with $\beta>1/2$, then the convergence is almost sure.
\end{corollary}

\begin{proof}
    From~\cref{eq: main result bound lesssim}, the first part is immediate. 
	For the second part, denote $Y_n = \MAE_X(\Theta_{G_n}(\iota_X(f)), \Theta_{W,P}(f))$. 
    There is a constant $C$ such that for any $\rho > 0$, 
    $$
    Y_n \le C\left( \left(D_n^2n\ln\left(\frac{n}{\rho} \right)\right)^{\nicefrac{\alpha_F^{L-1}\alpha_\psi}{2}} + \widetilde D_n^{\alpha_F^{L-1}} + s_n^{\alpha_F^{L-1}} \right) \,,
    $$ 
    holds with probability as least $1 - \rho$.
    This is equivalent to
    \begin{equation}\label[inequality]{eq: main cor cv ps 1}
    \P\left( Y_n \ge \eps \right) \le n e^{- \frac{1}{nD_n^2}\left( \frac{\eps}{C}- \widetilde D_n^{\alpha_F^{L-1}} - s_n^{\alpha_F^{L-1}} \right)^{\nicefrac{-\alpha_F^{L-1}\alpha_\psi}{2}} }
    \,.
    \end{equation} 
    For any $\eps > 0$.
    We claim that $\P\left( Y_n > \eps \right) $ is summable for any $\eps$.
    Indeed, since $\widetilde{D}_n$ and $s_n$ tend to zero, 
    \[
    \left( \frac{\eps}{C}- \widetilde D_n^{\alpha_F^{L-1}} - s_n^{\alpha_F^{L-1}} \right)^{\nicefrac{-\alpha_F^{L-1}\alpha_\psi}{2}} = C_{\eps} + o(1)
    \,,
    \]
    where $C_{\eps} = \left( \frac{\eps}{C} \right)^{\nicefrac{-\alpha_F^{L-1}\alpha_\psi}{2}}$.
    Therefore, since $\frac{1}{nD_n^2} \sim n^{2\beta-1}$, and $\beta > 1 / 2$,~\cref{eq: main cor cv ps 1} gives,
    \begin{align*}
        \P\left( Y_n > \eps \right) & \le ne^{-\left( \frac{C_\eps}{nD_n^2} + o\left(\frac{1}{nD_n^2}\right) \right) }
        = ne^{-\left( \frac{C_\eps}{nD_n^2} + o\left(n^{2\beta-1}\right) \right) } \\
                                      & = ne^{-\left( C_\eps \left(n^{2\beta - 1} + o\left( n^{2\beta - 1} \right)\right)  + o\left(n^{2\beta-1}\right) \right) }
        = ne^{-\left( C_\eps n^{2\beta - 1} + O\left( n^{2\beta - 1} \right)\right)  } \\
                                      & O\left( ne^{-C_\eps n^{2\beta-1}} \right) = o\left( \frac{1}{n^2} \right)\,,
    \end{align*}
    which proves the summability of $\P\left( Y_n > \eps \right) $.
    We conclude by~\cref{lem: borel-cantelli} of Borel-Cantelli.
\end{proof}

This corollary provides a sufficient condition for an MPGNN on a random graph to converge to its continuous counterpart on the random graph model: in words, its aggregation function needs to have sharp enough bounded differences. 
Below we investigate whether our examples have such sharp bounded differences. 
Under mild regularity conditions, this is the case for all examples but~\cref{ex: 5}. The proof is in~\cref{sec: app examples proofs}.

\begin{proposition}\label{prop: main th on the examples}
	We present the application of~\cref{th: main result} on the Examples. 
    In all cases, the message functions $\m^{(l)}$ are supposed Lipschitz continuous and bounded. 
    Additional regularity assumptions are needed for some examples.
	\begin{itemize}
        \item \textbf{\Cref{ex: 1,ex: a}.}
            $D_n=O(1/n)$, $\widetilde D_n=0$, $s_n=r_n=0$, and $\alpha_\psi = \alpha_F = 1$. 
            The final rate is therefore $O\left(\sqrt{\ln n /n}\right)$.
        \item \textbf{\Cref{ex: 2,ex: b}.}
            Suppose that $W$ is bounded away from zero, that is, there is $a>0$ such that $W\ge a$. 
            Then $D_n=O(1/n)$, $\widetilde D_n=0$, $s_n=O(1/\sqrt{n})$, and $\alpha_\psi = \alpha_F = 1$. 
            The final rate is therefore $O\left(\sqrt{\ln n /n}\right)$.
        \item \textbf{\Cref{ex: 3,ex: c}.}
            Suppose there is $a, b >0$ and $K_c>0$ such that $a<c(x,y,t)<b$ and $\vert c(x,y,t) - c(x',y',t)\vert \le K_c(\Vert x-x'\Vert_{\infty} + \Vert y-y'\Vert_{\infty})$, $\forall x,x',y,y',t$. 
            Then $D_n=O(1/n)$, $\widetilde D_n=0$, $s_n=O\left(1/\sqrt{n}\right)$, and $\alpha_\psi = \alpha_F = 1$. 
            The final rate is therefore $O\left(\sqrt{\ln n /n}\right)$.
        \item \textbf{\Cref{ex: 4,ex: d}.}
            Suppose that $h$ is $\alpha_h$-Holder and that $h^{-1}$ is bounded, and that~\cref{ass: holder layer} is satisfied for some exponent $0<\alpha_F \leq 1$ (see examples below). 
            Then $D_n = O(1/n)$, $\widetilde D_n=O(1/n^{\nicefrac{\alpha_h}{2}})$, $s_n = O(1/n^{\nicefrac{\alpha_h}{2}})$, and $\alpha_\psi=\alpha_h$. 
        The final rate is therefore $O(\ln n/n)^{\nicefrac{\alpha_h \alpha_F^L}{2}})$. 
            For the exponents $\alpha_h, \alpha_F$, we can give several usual cases:
		\begin{itemize}
        \item If $h = x\mapsto x^{\nicefrac{1}{p}}$ and $h^{-1} = x\mapsto x^p$ (moment-based aggregation from~\cite{corso_principal_neighbourhood_agg_2020}), then $\alpha_h=1/p$ and $\alpha_F=1$.
			\item If $h^{-1}$ is Lipschitz with regard to,\ $x$ (and recall that $h$ is $\alpha_h$-Holder), then $\alpha_F = \alpha_h$.
			\item If the data domain is bounded and $h^{-1}$ and $h$ are both Lipschitz (e.g. for geometric mean on a bounded domain), then $\alpha_h = \alpha_F = 1$.
		\end{itemize}
    \item \textbf{\Cref{ex: 5,ex: e}.}
            The bounded differences do not converge to zero, therefore cannot satisfy~\cref{cor: main th}.
	\end{itemize}
\end{proposition}
\begin{proof}
	Calculation and verification of the Theorem's assumptions are done in~\cref{sec: app examples proofs}. 
\end{proof}

\Cref{tab: prop_on_the_examples} sums up these results. 
For a network with max aggregation, the bounded differences are not sharp enough for~\cref{th: main result} to conclude. 
We thus treat this case separately in the next section.

\begin{table}[htbp]
    \centering
	\small
	\begin{tabular}{@{}lcccc@{}} \toprule
		\textbf{Example} & $D_n$ & $\widetilde D_n$& $s_n$ & Convergence by~\cref{cor: main result lesssim} \\ \midrule
        \textbf{\labelcref{ex: 1}-\labelcref{ex: a}} & $O(1/n)$ & $0$ & $0$ & $O\left(\sqrt{\ln n / n}\right)$ \\
		\textbf{\labelcref{ex: 2}-\labelcref{ex: b}} & $O(1/n)$ & $0$ & $O\left(1/\sqrt{n}\right)$ & $O\left(\sqrt{\ln n / n}\right)$ \\
		\textbf{\labelcref{ex: 3}-\labelcref{ex: c}} & $O(1/n)$ & $0$ & $O\left(1/\sqrt{n}\right)$ & $O\left(\sqrt{\ln n / n}\right)$ \\
        \textbf{\labelcref{ex: 4}-\labelcref{ex: d}} & $O(1/n)$ & $O\left(n^{\nicefrac{-\alpha_h}{2}}\right)$ & $O\left(n^{\nicefrac{-\alpha_h}{2}}\right)$ & $O\left(\bigl(\ln n / n\bigr)^{\nicefrac{\alpha_h\alpha_F^L}{2}}\right)$ \\
		\textbf{\labelcref{ex: 5}-\labelcref{ex: e}} & $\Omega(1)$ & $-$ & $-$ &  \xmark \\ \bottomrule
	\end{tabular}
	\caption{Table summing up the convergence rates of this section. 
    See~\cref{prop: main th on the examples} and~\cref{cor: main result lesssim} for details.}
	\label{tab: prop_on_the_examples}
\end{table}

\subsubsection{Convergence of \emph{max} Aggregation Message-Passing GNNs}\label{sec: cv for max}

In this subsection, we specifically treat the example of max aggregation. 
Since the bounded differences method fails, we need another method to estimate the deviation between a \emph{maximum} message-passing on a large random graph and its continuous counterpart.

We shall start by observing a simple example where everything is real-valued and smooth.
Let $f$ be a feature map on the latent space $\X$ and $\m$ be a message function which we assume to be real-valued.
We call 
$$g(x,y)=W(x,y)\m(f(y)) \,,$$
such that the \emph{maximum} message-passing around a node $i$ is $\max_{j\neq i} g(X_i, X_j)$.
Moreover, we suppose that $g\colon \X^2 \to \R$ is $K_g$-Lipschitz continuous and that the measure $P$ is strictly positive.
Therefore, in virtue of~\cref{lem: esssup = sup}, the continuous counterpart of the \emph{maximum} message-passing around any point $x \in  \X$ is 
$$\esssup_{P} g(x,\cdot) = \sup g(x,\cdot) \,.$$
Our goal is to estimate
$$
\P(| \max_i g(x,X_i) - \sup g(x,\cdot) | \ge \eps)
\,,$$ 
for $\eps>0$ and $x \in \X$. 
By definition of the supremum and by independence of the $X_i$, we have that
\begin{align*}
         \P(| \max_i g(x,X_i) - \sup g(x,\cdot) | \ge \eps) 
        &= \P( \max_i g(x,X_i) \le \sup g(x,\cdot) - \eps) \\
        &= \P( g(x,X_1)  \le \sup g(x,\cdot) - \eps)^n  \\
        &= \P(|g(x,X_1) - \sup g(x,\cdot)|  \ge \eps)^n \\
        &=\bigl(1 - \P(|g(x,X_1) -  \sup g(x,\cdot)|  < \eps) \bigr)^n\,.
\end{align*}
By continuity and compactness, there is $x^*\in\X$ such that $\sup g(x,\cdot) = g(x,x^*)$, and, by Lipschitz continuity of g, for any $x\in \X$,
\[
    \Vert (x,X_1) - (x,x^*)\Vert = \Vert X_1-x^*\Vert < \eps / K_g \implies | g(x,X_1)-g(x,x^*))| < \eps \,.
\]
Thus, we obtain the bound
\begin{align}
    \P(| \max_i g(x,X_i) - \sup g(x,\cdot) | \ge \eps)
    & = \bigl(1 - \P(|g(x,X_1) -  g(x,x^*)|  < \eps) \bigr)^n \notag \\
    & \le \left(1 - \P(\lvert X_1-x^*\rvert < \eps / K_g) \right)^n \notag \\
    & = \bigl(1 - P(B(x^*,\eps / K_g)\cap\X) \bigr)^n \label[inequality]{eq: max estimation by ball}\,.
\end{align}
where $B(x^*,\eps / K_g)$ is the open ball, for the infinite norm, of center $x^*$ and radius $\eps / K_g$ in $\R^d$. 

We have now reached a point where, if we seek to go further, we need to be able to give an approximation of the measure of a ball in $\X$. 
To this end, we introduce the notion of ``retention'' of the Lebesgue measure which we call the \emph{volume retaining property}. 
The purpose is to estimate from below the measure of a ball centered anywhere in $\X$.

\begin{definition}[Volume retaining property]\label{def: volume retaining}
	We say that the probability space $(\X,P)$ has the $(r_0,\kappa)$-volume retaining property if for any $r\le r_0$ and for any $x\in\X$,
    \begin{equation}\label[inequality]{eq: volume retaining}
		P(B(x,r)\cap \X) \ge \kappa \lambda_d(B(x,r))\,.
	\end{equation}
	Where $B(x,r)$ is the ball of center $x$ and radius $r$ and $\lambda_d$ is the classical $d$-dimensional Lebesgue measure in $\R^d$.
\end{definition}

Notice that the volume retention property implies that the measure must be strictly positive.

The condition from~\cref{def: volume retaining} is simultaneously a condition on the probability measure $P$ as well as the geometry of $\X$. 
In the particular case where $P$ itself is the Lebesgue measure, this becomes a purely geometrical condition on the shape of $\X$ at the boundary.
For instance, it is no difficulty to see that the unit hypercube $[0,1]^d$ has the $(1,1/2^d)$-volume retaining property.
On the other hand, a typical shape that makes this property fail is an arbitrarily sharp peak. 
For example, the peak at the contact point of the complementary of two tangent open disks. 

This hypothesis is also standard in other related contexts in which estimating the measure of balls is required in order to obtain some convergence rates.
In Set Estimation, it is a cornerstone assumption. 
The goal of this branch of statistics is to estimate the compact support of a probability distribution on a metric space from samples, which is typically achieved by considering the union of balls centered at points drawn from that distribution. 
To this end, the property from~\cref{def: volume retaining} was introduced by~\citep{cuevas_pattern_1990, cuevas_plugin_97, cuevas_rodriguez_casal_2004}. 
More contemporary, in Topological Data Analysis, it is used to measure the convergence rate of persistence diagrams, when the data is assumed to be drawn from a probability distribution supported on a compact metric space~\citep{chazal_subsampling_2015, chazal_convergence_2015, chazal_rate_2016}.

For a volume retaining probability space, we prove the following concentration inequality.

\begin{lemma}[Concentration inequality for volume retaining space]\label{lem: concentration bound for max}
	Let $g:\X^2 \to \R^{q}$ be $K_g$-Lipschitz and $(\X,P)$ have the $(r_0,\kappa)$-volume retaining property for some $r_0,\kappa>0$. 
    Recall that $\X\subset R^d$, then for any $\rho \ge e^{-n \kappa r_0^d 2^d}$, for any random variables $X_1, \dots, X_n \iid P$, with probability at least $1-\rho$, it holds
    \begin{equation}\label[inequality]{eq: lemma max concentration} 
        \lVert \max_{1\le i\le n}g(x,X_i) - \sup g(x,\cdot)\rVert_{\infty}  \le \frac{K_g}{2} \left(\frac{\ln(q/\rho)}{n \kappa} \right)^{1/d}\,.
    \end{equation}
\end{lemma}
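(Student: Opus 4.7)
The plan is to adapt the argument used in the proof of Examples~\ref{ex: 4-d}, but to replace the distribution-dependent ``$\eta$'' that was obtained by a compactness argument with an explicit lower bound coming from the volume retaining property. I will fix $x \in \XX$, and treat each coordinate $k \in \{1,\dots,d'\}$ of $g$ separately, relying on Lipschitz continuity (hence continuity) and compactness of $\XX$ to select $y_k^* \in \XX$ with $g_k(x,y_k^*) = \sup g_k(x,\cdot)$.

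The central observation is that if at least one sample $X_i$ falls in the ball $\mathbb{B}(y_k^*,\delta)$, then by the $\lambda_g$-Lipschitz continuity of $g$ one has $g_k(x,X_i) \ge g_k(x,y_k^*) - \lambda_g \delta$, and since $\max_i g_k(x,X_i) \le \sup g_k(x,\cdot)$ always holds, the coordinate-wise deviation is controlled by $\lambda_g \delta$ on that event. Consequently, the failure event at coordinate $k$ is contained in $\{\forall i,\ \|X_i - y_k^*\|_\infty \ge \delta\}$, whose probability factorizes by independence as $(1 - P(\mathbb{B}(y_k^*,\delta)\cap\XX))^n$.

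Here I will use the $(r_0,\kappa)$-volume retaining property: as long as $\delta \le r_0$, we have
\begin{equation*}
P(\mathbb{B}(y_k^*,\delta) \cap \XX) \ge \kappa\, m(\mathbb{B}(y_k^*,\delta)) = \kappa (2\delta)^d,
\end{equation*}
so the per-coordinate failure probability is bounded by $(1-\kappa(2\delta)^d)^n \le \exp(-n\kappa (2\delta)^d)$. A union bound over the $d'$ coordinates and the relation $\|\cdot\|_\infty = \max_k |\cdot|$ gives
\begin{equation*}
\PP\bigl(\Vert \max_i g(x,X_i) - \sup g(x,\cdot) \Vert_\infty > \lambda_g \delta\bigr) \le d'\exp(-n\kappa(2\delta)^d).
\end{equation*}

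It then remains to invert this tail bound: setting $\rho = d'\exp(-n\kappa(2\delta)^d)$ and solving for $\delta$ yields $\delta = \tfrac{1}{2}(\ln(d'/\rho)/(n\kappa))^{1/d}$, which is exactly the rate stated. The assumption $\rho \ge e^{-n\kappa r_0^d 2^d}$ (up to the harmless $\ln d'$ term, which I may need to absorb into the condition) is precisely what is needed to guarantee $\delta \le r_0$, so that the volume retaining estimate legitimately applies. The only mildly subtle step is ensuring the right norm is used when translating between the Lipschitz bound and the volume of balls; since the paper fixes $\|\cdot\|_\infty$ and balls are therefore cubes of volume $(2\delta)^d$, the constant $2^d$ in the admissibility condition on $\rho$ matches naturally. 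Everything else is a straightforward chain of estimates, so I do not expect a serious obstacle.
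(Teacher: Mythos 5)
Your proof is correct and follows essentially the same route as the paper: isolate the maximizer $y_k^*$ by compactness, translate the $\lambda_g$-Lipschitz bound into an event about a sample landing in $\mathbb{B}(y_k^*,\delta)$, lower-bound that ball's probability with the volume-retaining property, exponentiate the complementary probability, invert to get $\delta$, and union-bound over the $d'$ coordinates. You are also right to flag that the admissibility condition on $\rho$ should really read $\rho \ge d'\,e^{-n\kappa r_0^d 2^d}$ once the union bound over coordinates is taken into account; the paper carries out the argument for $d'=1$ and does not propagate the extra $\ln d'$ term into the stated threshold.
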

\begin{proof}
	We write the proof assuming $q=1$, the case $q\ge 1$ follows easily by a union bound. 
    Clearly, volume-retention implies strict positiveness of the measure. 
    The calculation is exactly the same as conducted in the introductory part of this~\cref{sec: cv for max}, until we reach~\cref{eq: max estimation by ball}, where we must estimate
    \begin{equation}\label{eq: proof concentration max 1}
		\left(1 - P\left(B(x^*,\eps/K_g)\cap\X\right) \right)^n \,.
	\end{equation}
	Recall that we consider balls with regard to the infinity norm.
    Thus, by volume retention, for $\eps\le r_0K_g$,~\cref{eq: proof concentration max 1} is bounded by
	\begin{equation}
		\left(1 - P\left(B(x^*,\eps/K_g)\cap\X\right) \right)^n \le \left( 1  - \kappa \left(\frac{2\eps }{ K_g}\right)^d \right)^n \le e^{-n \kappa\left(\frac{2 \eps}{K_g}\right)^d}\,.
	\end{equation}
	Which implies that for $\rho \ge e^{-n \kappa r_0^d 2^d}$ with probability at least $1-\rho$,
    \begin{equation}\label[inequality]{eq: proof concentration max 2}
    |\max_{1\le i\le n}g(x,X_i) - \sup g(x,\cdot)| \le \frac{K_g}{2} \left(\frac{\ln(1/\rho)}{n \kappa} \right)^{1/d}\,,
    \end{equation}
    Finally,~\cref{eq: proof concentration max 2} combined with a union bound yields to the desired result for $q\ge 1$.
\end{proof}

Convergence in the case of maximum aggregation relies on the smoothness of the feature map $f$.
Therefore, we will need the following regularity property about the intermediate layers.

\begin{proposition}\label{prop: app Lipschitz of f^l for max}
	Assume that $W$, $f=f^{(0)}$, as well as the $m^{(l)}$ are all Lipschitz continuous, then the functions $f^{(0)},\dots,f^{(L)}$ are Lipschitz continuous too. 
    We denote by  $K_f=K_{f^{(0)}},\dots,K_{f^{(L)}}$ their Lipschitz constants.
\end{proposition}

\begin{proof}
	It is already assumed for $l=0$. 
    Suppose it is true for $l\ge 1$, we have 
    \[
        f^{(l+1)}(x)=\sup_y W(x,y)\m^{(l+1)}(f^{(l)}(y)) = \sup_y g(x,y) \,,
    \]
    where $g$ is $K_W \Vert \m^{(l+1)}\circ f^{(l)}\Vert_{\infty}+ K_{\m^{(l)}}K_{f^{(l)}}$-Lipschitz. 
    Then from~\cref{lem: Lipschitz of sup} $f^{(l+1)}$ is also Lipschitz continuous.
\end{proof}

We are now ready to state the non-asymptotic bound for an MPGNN with maximum aggregation.

\begin{theorem}[Non-asymptotic convergence of max-MPGNN towards cMPGNN]\label{th: main result max}
	Suppose that, $(\X,P)$ has the $(r_0,\kappa)$-volume retaining property and that $f,W$ and the $\m^{(l)}$ are Lipschitz continuous. 
    Let $\rho \ge 2ne^{-n \kappa r_0^d 2^d}$ and $n$ large enough for $0<\rho<1$ to hold, the following inequalities hold.
    \begin{itemize}
        \item \textbf{Equivariant case.}
            With probability at least $1-\rho$,
            \begin{equation}\label[inequality]{eq: max main result bound}
                \MAE_X\left(\Theta_{G_n}\left(\iota_X(f)\right), \Theta_{W,P}(f)\right)  \le \sum_{l=1}^{L}B^{(l,L)}\frac{K_{f^{(l)}}}{2}\left (\frac{1}{n \kappa } \ln\left( \frac{2^{L+1-l}nd_l}{\rho}\right) \right)^{\nicefrac{1}{d}} \,, 
            \end{equation}
        \item \textbf{Invariant case.} 
            With probability at least $1-\rho$, 
            \begin{equation}\label[inequality]{eq: max main result bound inv}
                \begin{split}
                    \left\Vert \overline{\Theta}_{G_n}(\iota_X(f)) - \overline{\Theta}_{W,P}(f) \right\Vert_{\infty} & \le \sum_{l=1}^{L}B^{(l,L)}\frac{K_{f^{(l)}}}{2}\left (\frac{1}{n \kappa } \ln\left( \frac{2^{L+2-l}nd_l}{\rho}\right) \right)^{\nicefrac{1}{d}} \\ 
                    & \quad + \left (\frac{1}{n} \ln\left( \frac{2d_L}{\rho}\right) \right)^{1/d}\,.
                \end{split}
            \end{equation}
    \end{itemize}
    Where $B^{(l,L)} = \prod_{k=l+1}^{L}K_{\m^{(k)}}$ with the usual convention that a product indexed by the empty set always equals $1$.
\end{theorem}

By grossly majoring each term of the sum in~\cref{eq: max main result bound}, and disregarding the constants, we get the following corollary.
We only write the statement for the equivariant case as the bound for the invariant case would be similar.
\begin{corollary}\label{cor: main result max}
    Under the assumptions of~\cref{th: main result max}, let $\rho \ge 2ne^{-n \kappa r_0^d 2^d}$ and $n$ large enough for $0<\rho<1$ to hold. 
    Then with probability at least $1-\rho$:
	\begin{equation*}
        \MAE_X\left(\Theta_{G_n}\left(\iota_X(f)\right), \Theta_{W,P}(f)\right) \lesssim \left(\frac{1}{n} \ln\left( \frac{n}{\rho}\right) \right)^{\nicefrac{1}{d}} \,,
    \end{equation*} 
\end{corollary}

Since we made an assumption that involves the volume of a $d$-dimensional ball, the convergence rate for \emph{max} convolution depends on the dimension of the latent space $\X \subset \R^d$, where it is roughly equal to $O\left(n^{-\nicefrac{1}{d}}\right)$, as opposed to the generally faster rate $O\left(n^{-\nicefrac{1}{2}}\right)$ obtained with the McDiarmid's method from~\cref{th: main result}. 
Intuitively, this is to be expected, as the fast rate is akin to the central limit theorem, while the rate for max convolution follows from the number of balls necessary to cover the latent space (covering numbers), which scales exponentially in its dimension~\citep{vershynin_high-dimensional_2018}.

\subsection{Experimental illustrations}

We illustrate the convergence rates from both~\cref{th: main result,th: main result max} on toy examples.
\begin{figure}[htpb]
	\centering
		\includegraphics[scale=0.31]{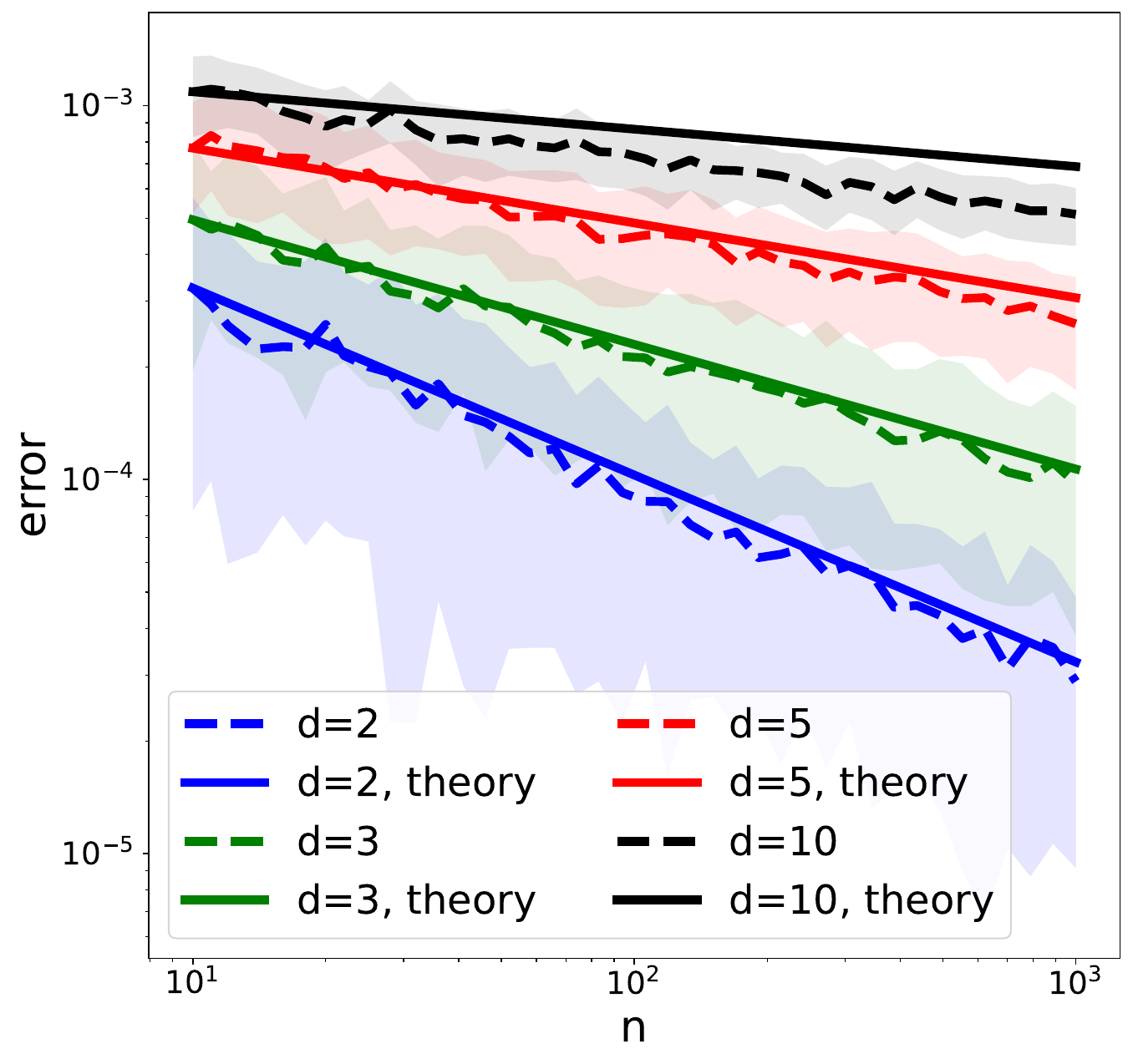} %
	\quad
		\includegraphics[scale=0.31]{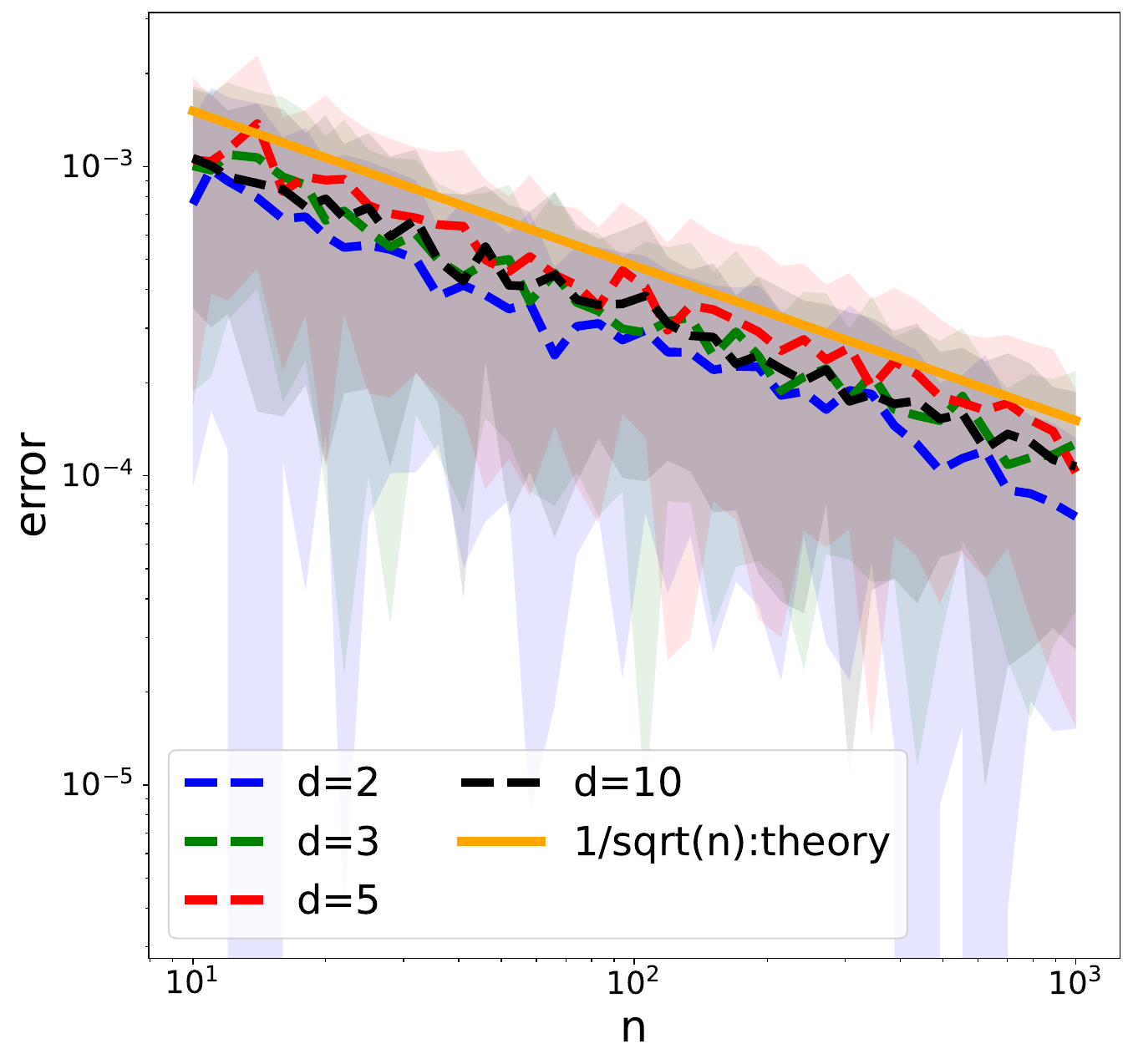} %
	\caption{Numerical experiences for observing the trends of the rates of convergence arising from~\cref{th: main result,th: main result max}.
        The plots are in logarithmic scale.
        Left: max aggregation. Right: mean aggregation. 
        For both figures, the dashed lines represent the experimental error as the graph size increases, while the full lines represent the theoretical rates arising from~\cref{th: main result,th: main result max}. 
This experiment has been conducted for various values of the latent space dimension $d$. The theoretical rates are $1/\sqrt{n}$ for a mean aggregation and $1/n^{\nicefrac{1}{d}}$ for a \emph{max} aggregation.} 
	\label{fig: expé max and mean}
\end{figure}

One difficulty in illustrating our convergence results is that the limit cGNN cannot be computed explicitly in most cases. 
Moreover, since our model of random graphs always produces dense (or even complete weighted) graphs, there are quite strong computational limits to testing very large $n$ to approximate the cGNN. 
Nevertheless, we found that performing several rounds of Monte-Carlo simulation ($50$ is our experiments) yields a reasonable approximation of the limit for the mean example (\cref{ex: 1}). 
Additionally, there is a trick for the max example (\cref{ex: 5}). 
When all parameters are \emph{nonnegative}, the non-linearity is increasing and nonnegative on $\R^+$ (\eg sigmoid), and, say, the input signal lives in $[0,1]^d$, the cGNN can be computed explicitly: it is obtained when all points are $x_i=(1,\ldots,1)$.
Hence we limit our experiments to~\cref{ex: 1,ex: 5}, and our aim is to highlight the influence of the input dimension $d$ on the convergence rate from~\cref{th: main result,th: main result max}: in theory, $O\left(1/\sqrt{n}\right)$ for~\cref{ex: 1} and $O\left(1/n^{\nicefrac{1}{d}}\right)$ for~\cref{ex: 5}. 
We leave other examples for future work (in particular the generalized mean~\cref{ex: 4}, which we found difficult to observe in practice).

For both experiments, the MPGNN has four layers. 
Each layer uses a single layer MLP with sigmoid activation function, and random weights in $[0,1]$. 
A \emph{mean} and a \emph{max} aggregation are respectively used on the left and the right sides of~\cref{fig: expé max and mean}. 
The input signal is a dot product with a random vector in $[0,1]^d$, and the latent variables are uniformly distributed in $[0,1]^d$. 
Each experiment is run for $d = 2, 3, 5$ and $10$. The MAE error is averaged over $50$ experiments, and the standard deviation is also reported in~\cref{fig: expé max and mean}. 

We indeed observe that the convergence rate is in $O\left(1/ \sqrt{n}\right)$ for a \emph{mean} aggregation, no matter the dimension of the latent space. 
Whereas for a \emph{max} aggregation, the speed follows $O\left(1/n^{\nicefrac{1}{d}}\right)$, with $d$ being the latent space's dimension. 
The standard deviation evolves as the error (note that the scale is logarithmic), we do not observe additional concentration phenomenon.

The reproducible code can be found in the following Github repository \url{https://github.com/Matthieu-Cordonnier/convergence-mpgnn/}.

\section{Conclusion}\label{sec: conclusion}
In this work, we have defined continuous counterparts of MPGNNs with very generic aggregation functions on a probability space with respect to a transition kernel. 
We then have shown that under certain conditions, cMPGNNs are limits of discrete MPGNNs on random graphs sampled from the corresponding random graph model. 
Until now, similar result were known for SGNNs, which are more restricted architectures, or for MPGNNs with a degree normalized mean aggregation. 
Our main contribution is to extend this to abstract MPGNNs with generic aggregation functions. 
All along this paper, a focus is given on examples based on mean or weighted mean aggregation (\cref{ex: 1,ex: 2,ex: 3,ex: 4}) and max aggregation (\cref{ex: 5}), but our theorems are not limited to these examples and is in fact verified for mild assumptions on the underlying model. 

Throughout this  paper, we have emphasized the fact that \emph{mean} and \emph{max} aggregation behave differently. 
Albeit, a link between the two still exists: as mentioned before, the generalized mean (\cref{ex: 4}) with moment $h= x \mapsto x^{\nicefrac{1}{p}}$ naturally converges to the maximum aggregation when $p \to \infty$. 
However the bounded difference proof gives a rate in $n^{\nicefrac{-1}{2p}}$, while for the max aggregation, our specific proof based on covering numbers for~\cref{th: main result max} yields $n^{\nicefrac{-1}{d}}$. 
This is intuitively the ``worst'' rate possible in dimension $d$, but is better when $p> d/2$. 
Hence future work could try to unify the two proofs, to obtain a smooth transition between the different rates.

\acks{This work was partially supported by the French National Research Agency in the framework of the « France 2030 » program (ANR-15-IDEX-0002), the LabEx PERSYVAL-Lab (ANR-11-LABX-0025-01), and the ANR grants GRANOLA (ANR-21-CE48-0009), GRANDMA (ANR-21-CE23-0006) and GRAVA  (ANR-18-CE40-0005).}

\appendix

\section{Proof of~\cref{th: main result}}\label{app: proof CV Mc diarmid}

We proceed to the proof of the~\cref{th: main result}.
In all this proof, we denote by $H^{(L)}(\rho)$ the upper bound from~\cref{eq: main result bound}: 
$$H^{(L)}(\rho) = \sum_{l=1}^{L}A_n^{(l,L)}\left[\left(\frac{1}{2}\left(D_{n}^{(l)}\right)^2 n \ln\left(\frac{2^{L+2-l} d_l n}{\rho}\right)\right)^{\nicefrac{\alpha_\psi}{2}} + \widetilde D_{n}^{(l)} + s_{n}^{(l)}\right]^{\alpha_F^{L-l}}.$$
We will prove the results by induction on the depth $L$, we will detail the demonstration for the equivariant case, and the invariant case will easily follow.

\begin{proof}[Proof of the equivariant case,~\cref{eq: main result bound} of~\cref{th: main result}.]
    We start with the equivariant case. 
    Calling $z_i^{(l)} = \left( \iota_X f \right)^{(l)}$ the signal at node $i$ of the intermediate layer $l$ of the GNN, we seek to bound
    \begin{align*}
    \MAE_X\left(\Theta_{G_n}(\iota_Xf), \Theta_{W,P}(f)\right) & =  \max_{1\le i\le n} \left\Vert z^{(L)}_i - \iota_X(f^{(L)})_i\right\Vert\\
    & = \max_{1\le i\le n} \left\Vert z^{(L)}_i - f^{(L)}(X_i)\right\Vert.
    \end{align*}

	We prove the result by induction on $L$. 
    Let $\rho>0$, we recall some notations from~\cref{def: construction of continuous limit}:
	\begin{equation*}
		 F_{P,n}^{(l+1)}\left(f^{(l)},W\right)(x) = \E_{X_2,\dots,X_n}\left[F^{(l+1)} \left(f^{(l)}(x), \left\lBrace\left(f^{(l)}(X_k),W(x,X_k)\right)\right\rBrace_{2\leq k\leq n}\right)  \right] \,,
	\end{equation*}
	and,
	\begin{equation*}
		\FF_P^{(l+1)}\left(f^{(l)},W\right)(x)
		= \FF^{(l+1)}_{P}\left(f^{(l)}(x),\left(f^{(l)},W(x,\cdot)\right)\right)= f^{(l+1)}(x)\,.
	\end{equation*}
	Suppose $L=1$, we shall find a quantity that bounds  all the $\left\Vert z^{(1)}_i - f^{(1)}(X_i)\right\Vert$, for $i$ ranging from $1$ to $n$, with probability at least $1-\rho/n$. 
    Thereby, by a union bound, this quantity will bound their maximum with probability at least $1-\rho$.
	Choose an index $i\in\{1,\dots,n\}$ and let $x_i\in\X$, consider 
    $$\left\Vert F^{(1)} \left(f^{(0)}(x_i), \left\lBrace\left(f^{(0)}(X_k),W(x_i,X_k)\right)\right\rBrace_{k\ne i}\right) - f^{(1)}(x_i) \right\Vert_{\infty}.$$ 
	From a triangular inequality, and by definition of $s_n$ (\cref{eq: def of s_n}), we get the majoration
    \begin{align}\label[inequality]{eq: md proof 1}
			& \left\Vert F^{(1)} \left(f^{(0)}(x_i), \left\lBrace\left(f^{(0)}(X_k),W(x_i,X_k)\right)\right\rBrace_{k\ne i}\right) - f^{(1)}(x_i) \right\Vert_{\infty} \notag\\
			& \le \left\Vert F^{(1)} \left(f^{(0)}(x_i), \left\lBrace\left(f^{(0)}(X_k),W(x_i,X_k)\right)\right\rBrace_{k\ne i}\right) - F_{P,n}^{(1)}\left(f^{(0)},W\right)(x_i)\right\Vert_{\infty}\notag \\
			& \qquad + \left\Vert F_{P,n}^{(1)}\left(f^{(0)},W\right)(x_i) - f^{(1)}(x_1)\right\Vert_{\infty} \notag \\
			& \le \left\Vert F^{(1)} \left(f^{(0)}(x_i), \left\lBrace\left(f^{(0)}(X_k),W(x_i,X_k)\right)\right\rBrace_{k\ne i}\right) - F_{P,n}^{(1)}\left(f^{(0)},W\right)(x_i)\right\Vert_{\infty} 
			+ s_{n}^{(1)}
            \,.
	\end{align}
	Now let us bound~\cref{eq: md proof 1} from above with high probability using our Hölder version of McDiarmid's inequality,~\cref{th: Multi dimendional McD Holder}, on 
    $$(x_1,\ldots,x_{i-1},x_{i+1},\ldots,x_n) \mapsto F^{(1)} \left(f^{(0)}(x_i), \left\lBrace\left(f^{(0)}(x_k),W(x_i,x_k)\right)\right\rBrace_{k\ne i}\right) \,,$$ 
    as a multivariate function of the $n-1$ variables $x_k$ for $k\neq i$. 
	We obtain that for any $x_i\in \X$, with probability at least $1-\rho/n$, 
    \begin{equation}\label[inequality]{eq: md proof 2}
		\begin{split}
			& \left\Vert F^{(1)} \left(f^{(0)}(x_i), \left\lBrace\left(f^{(0)}(X_k),W(x_i,X_k)\right)\right\rBrace_{k\ne i}\right) - f^{(1)}(x_i)\right\Vert_{\infty} \\
            &\quad \le  \left(\frac{1}{2}\left(D_{n}^{(1)}\right)^2 n \ln\left(\frac{2d_1n}{\rho}\right)\right)^{\nicefrac{\alpha_\psi}{2}} + \widetilde D_{n}^{(1)} + s_{n}^{(1)}\,.
		\end{split}
	\end{equation}
	Hence, by conditioning over $X_i$ and applying the law of total probability,~\cref{eq: md proof 2} yields with probability at least $1-\rho/n$
	\begin{equation*}\label{eq: md proof 3}
        \left\Vert z^{(1)}_i - f^{(1)}(X_i)\right\Vert_{\infty} \le \left(\frac{1}{2}\left(D_{n}^{(1)}\right)^2 n \ln\left(\frac{2d_1n}{\rho}\right)\right)^{\nicefrac{\alpha_\psi}{2}} + \widetilde D_{n}^{(1)} + s_{n}^{(1)}\,.
	\end{equation*}
	And, by a union bound,  we can conclude that with probability at least $1-\rho$
	\begin{equation*}
        \max_i \left\Vert z^{(1)}_i - f^{(1)}(X_i)\right\Vert_{\infty} \le \left(\frac{1}{2}\left(D_{n}^{(1)}\right)^2 n \ln\left(\frac{2d_1n}{\rho}\right)\right)^{\nicefrac{\alpha_\psi}{2}} + \widetilde D_{n}^{(1)} + s_{n}^{(1)} \le H^{(1)}(\rho)\,,
	\end{equation*}
    which proves the case $L=1$.

	Now, suppose the theorem true for $L\ge 1$. 
    For any node $i$, we have 
	\begin{align}
	& \left\Vert z^{(L+1)}_i - f^{(L+1)}(X_i)\right\Vert_{\infty} \notag \\
	&  \le \left\Vert z^{(L+1)}_i - F_{P,n}^{(L+1)}(f^{(L)},W)(X_i) \right\Vert_{\infty}\notag \\
	& \quad + \left\Vert F_{P,n}^{(L+1)}(f^{(L)},W)(X_i) + f^{(L+1)}(X_i)\right\Vert_{\infty}\notag \\
	&  \le \left\Vert z^{(L+1)}_i - F_{P,n}^{(L+1)}(f^{(L)},W)(X_i) \right\Vert_{\infty} + s_{n}^{(L+1)}\notag \\
	& \le \left\Vert z^{(L+1)}_i -F^{(L+1)} \left(f^{(L)}(X_i), \left\lBrace\left(f^{(L)}(X_k),W(X_i,X_k)\right)\right\rBrace_{ k\neq i}\right)  \right\Vert_{\infty}\notag \\
	& \quad + \left\Vert F^{(L+1)} \left(f^{(L)}(X_i), \left\lBrace\left(f^{(L)}(X_k),W(X_i,X_k)\right)\right\rBrace_{ k\neq i}\right)  - F_{P,n}^{(L+1)}(f^{(L)},W)(X_i) \right\Vert_{\infty} + s_{n}^{(L+1)}\notag\\
	& \le K_{F,n}^{(L+1)} %
	\max_{i}\left\Vert z^{(L)}_i - f^{(L)}(X_i)\right\Vert_{\infty}^{\alpha_F}\notag \\
    & \quad  + \left\Vert F^{(L+1)} \left(f^{(L)}(X_i), \left\lBrace\left(f^{(L)}(X_k),W(X_i,X_k)\right)\right\rBrace_{ k\neq i}\right) - F_{P,n}^{(L+1)}(f^{(L)},W)(X_i) \right\Vert_{\infty} + s_{n}^{(L+1)}\,, \label[inequality]{eq: md proof 4}
	\end{align}
	where the last~\cref{eq: md proof 4} comes from the Hölder-like regularity~\cref{ass: holder layer} on $F^{(L+1)}$.
    Taking the maximum over the vertices,~\cref{eq: md proof 4} yields
	\begin{align}
	&\max_i \left\Vert z^{(L+1)}_i - f^{(L+1)}(X_i)\right\Vert_{\infty}\notag \\
	& \le K_{F,n}^{(L+1)}\max_i\left\Vert z^{(L)}_i - f^{(L)}(X_i) \right\Vert_\infty^{\alpha_F} %
	\notag \\
	& \quad + \max_i\left\Vert F^{(L+1)} \left(f(X_i), \left\lBrace\left(f(X_k),W(X_i,X_k)\right)\right\rBrace_{ k\neq i}\right)-F_{P,n}^{(L+1)}(f^{(L)},W)(X_i) \right\Vert_{\infty}
	 + s_{n}^{(L+1)}\,.%
     \label[inequality]{eq: md proof 5}
	\end{align}
	Finally, we bound~\cref{eq: md proof 5} from above with high probability. 
    The first term is handled by the induction hypothesis. 
    For the second term, we employ the same technique as we did in the case $L=1$. 
    By conditioning over $X_i$, using the Hölder version McDiarmid's inequality, and a union bound, we obtain with probability at least $1-\rho$,
	\begin{align*}
	&  \max_i \left\Vert z^{(L+1)}_i - f^{(L+1)}(X_i)\right\Vert_{\infty} \\
	& \le %
    K_{F,n}^{(L+1)} H^{(L)}\left(\frac{\rho}{2}\right)^{\alpha_F} + \left(\frac{1}{2}\left(D_{n}^{(L+1)}\right)^2 n \ln\left(\frac{4d_{L+1}n}{\rho}\right)\right)^{\nicefrac{\alpha_\psi}{2}} + \widetilde D_{n}^{(L+1)} + s_{n}^{(L+1)}\\
	& \le %
    K_{F,n}^{(L+1)} \left( \sum_{l=1}^{L}A_n^{(l,L)}\left[\left(\frac{1}{2}\left(D_{n}^{(l)}\right)^2 n \ln\left(\frac{2^{L+3-l} d_l n}{\rho}\right)\right)^{\nicefrac{\alpha_\psi}{2}} + \widetilde D_{n}^{(l)} + s_{n}^{(l)}\right]^{\alpha_F^{L-l}} \right)^{\alpha_F} \\
	&\quad + \left(\frac{1}{2}\left(D_{n}^{(L+1)}\right)^2 n \ln\left(\frac{4d_{L+1}n}{\rho}\right)\right)^{\alpha_\psi/2} + \widetilde D_{n}^{(L+1)} + s_{n}^{(L+1)}\\
	& \le \sum_{l=1}^{L}%
    K_{F,n}^{(L+1)} (A_n^{(l,L)})^{\alpha_F}\left[\left(\frac{1}{2}\left(D_{n}^{(l)}\right)^2 n \ln\left(\frac{2^{L+3-l} d_l n}{\rho}\right)\right)^{\nicefrac{\alpha_\psi}{2}} + \widetilde D_{n}^{(l)} + s_{n}^{(l)}\right]^{\alpha_F^{L+1-l}} \\
    &\quad + \left(\frac{1}{2}\left(D_{n}^{(L+1)}\right)^2 n \ln\left(\frac{4d_{L+1}n}{\rho}\right)\right)^{\nicefrac{\alpha_\psi}{2}} + \overline D_{n}^{(L+1)} + s_{n}^{(L+1)}\\
	& = H^{(L+1)}(\rho).
	\end{align*}
	where we have used that $(x+y)^a \leq x^a + y^a$ for $a\leq 1$, as well as the recursive expression
    \begin{equation*}
        \begin{cases}
            A^{(l,l)}=1 \\
            A^{(l, L+1)} = K_{F,n}^{(L+1)}(A^{(l,L)})^{\alpha_F} \,,
        \end{cases}
    \end{equation*}
    which yields the result.
\end{proof}

We now turn to the invariant case, which follows as a corollary of~\cref{eq: main result bound}.
We will use~\cref{eq: main result bound}, and make additional use of McDiarmid's concentration bound.

\begin{proof}[Proof of the invariant case,~\cref{eq: main result bound inv} of~\cref{th: main result}.]
    Under~\cref{ass: bounded,ass: continuous layer,ass: holder layer,,ass: readout}, we obtain the following inequality.
	\begin{align*}
			& \left\lVert \overline{\Theta}_{G_n}\left(\iota_X(f)\right) - \overline{\Theta}_{W,P}(f) \right\rVert_{\infty} \\
			& \le \left\lVert R\left( \left\lBrace z^{(L)}_1,\dots,z_n^{(L)} \right\rBrace \right) - R\left( \left\lBrace f^{(L)}(X_1),\dots,f^{(L)}(X_n) \right\rBrace \right) \right\rVert_{\infty} \\
			& \quad + \left\lVert R\left( \left\lBrace f^{(L)}(X_1),\dots,f^{(L)}(X_n) \right\rBrace \right) - R_{P,n}\left(f^{(L)}\right)  \right\rVert_{\infty}\\
			& \quad + \left\lVert R_{P,n}(f^{(L)}) - \RRR_P\left(f^{(L)}\right) \right\rVert_{\infty} \\
			& \le K_{R,n} \max_i \left\lVert z^{(L)}_i - f^{(L)}(X_i)\right\rVert_{\infty} \\
			& \quad + \left\lVert R\left( \left\lBrace f^{(L)}(X_1),\dots,f^{(L)}(X_n) \right\rBrace \right) - R_{P,n}\left(f^{(L)}\right)  \right\rVert_{\infty} + r_n 
            \,.
	\end{align*}
    Using a union bound,~\cref{eq: main result bound} and McDiarmid's inequality, we get that, with probability at least $1-\rho$:
	\begin{equation*}
		\begin{split}
			\left\Vert \overline{\Theta}_{G_n}(\iota_X(f)) - \overline{\Theta}_{W,P}(f) \right\Vert_{\infty} & \le K_{R,n}H^{(L)}(\rho/2) %
			+ C_n\sqrt{n\ln\left(\frac{4d_{L}}{\rho} \right)} + r_n \,,
		\end{split}
	\end{equation*}
    which concludes the proof.
\end{proof}

\section{Proof of~\cref{th: main result max}}\label{app: proof cv max}

We prove~\cref{th: main result max}.
Until the end of the proof, we denote by $H^{(L)}(\rho)$ the left-hand side of~\cref{eq: max main result bound},
$$H^{(L)}(\rho) = \sum_{l=1}^{L}B^{(l,L)}\frac{K_{f^{(l)}}}{2}\left (\frac{1}{n \kappa } \ln\left( \frac{2^{L+1-l}nd_l}{\rho}\right) \right)^{\nicefrac{1}{d}} .$$

\begin{proof}[Proof of the equivariant case,~\cref{eq: max main result bound} of~\cref{th: main result max}] 
	Let $\rho>0$. We will prove the theorem by induction on $L$. 
    For $L=1$, let us note $g^{(1)}(x,y) = W(x,y)\m^{(1)}(f^{(0)}(y))$, such that $f^{(1)}(x) = \sup_y g^{(1)}(x,y)$. 
    The map $f^{(1)}$ is $K_{f^{(1)}} = K_{\m^{(1)}} K_{f^{(0)}} + \Vert\m^{(1)}\circ f^{(0)}\Vert_{\infty} K_W$ Lipschitz continuous from~\cref{prop: app Lipschitz of f^l for max}. 

    Fix a node $i\in\{1,\dots,n\}$ and let $x_i\in\X$, by~\cref{lem: concentration bound for max}, for $\rho\ge ne^{-n \kappa r_0^d 2^d}$, with probability at least $1-\rho/n$, we have
	$$ \lVert \max_{j\ne i}g^{(1)}(x,X_j) - \sup_{y\in\X}g^{(1)}(x,y)\rVert_{\infty}  
    \le \frac{K_{f^{(1)}}}{2}\left(\frac{\ln\left( nd_1 / \rho \right) }{n \kappa } \right)^{\nicefrac{1}{d}}\,.$$
	Thus, by conditioning over $X_i$ and using the law of total probability, with probability at least $1-\rho/n$, it holds
	$$ \lVert \max_{j\ne i}g^{(1)}(X_i,X_j) - \sup_{y\in\X}g^{(1)}(X_i,y)\rVert_{\infty}  
    \le \frac{K_{f^{(1)}}}{2}\left(\frac{\ln\left( nd_1 / \rho \right) }{n \kappa } \right)^{\nicefrac{1}{d}}\,.$$	
    Since $\max_{j\ne i}g^{(1)}(X_i,X_j) = z_i^{(1)}$ and $\sup_{y\in\X}g^{(1)}(X_i,y) = f^{(1)}(X_i)$, by maximizing over $i$ and doing a union bound, we obtain that with probability at least $1-\rho$, it holds
	$$\max_i \left\Vert z_i^{(1)} - f^{(1)}(X_i)\right\Vert_{\infty} 
    \le \frac{K_{f^{(1)}}}{2}\left(\frac{\ln\left( nd_1 / \rho \right) }{n \kappa } \right)^{\nicefrac{1}{d}}
    \le H^{(1)}( \rho ) \,,$$	
    for $\rho\ge ne^{-n \kappa r_0^d 2^d}$, and, in particular, for $\rho\ge 2ne^{-n \kappa r_0^d 2^d}$.
	That concludes the case $L=1$. 

    Now suppose the result true for $L\ge 1$, fix a node $i\in\{1,\dots,n\}$, we have the following bound, 
	\begin{align}
			& \left\Vert z^{(L+1)}_i - f^{(L+1)}(X_i)\right\Vert_{\infty}  
			 =  \left\Vert \max_{j\ne i} W(X_i,X_j)\m^{(L+1)}\left(z^{(L)}_j\right) - \sup_{y\in\X} W(X_i,y)\m^{(L+1)}(f^{(L)}(y))\right \Vert_{\infty} \notag \\
			& \le \left\Vert \max_{j\ne i} W(X_i,X_j)\m^{(L+1)}(z^{(L)}_j) - \max_{j\ne i} W(X_i,X_j)\m^{(L+1)}(f^{(L)}(X_j)) \right\Vert_{\infty} \notag \\
			& \quad +  \left\Vert \max_{j\ne i} W(X_i,X_j)\m^{(L+1)}(f^{(L)}(X_j)) -  \sup_{y\in\X} W(X_i,y)\m^{(L+1)}(f^{(L)}(y)) \right\Vert_{\infty} \notag \\
			& \le   K_{\m^{(L+1)}} \max_{j\ne i} \left\Vert z^{(L)}_j - f^{(L)}(X_j)\right\Vert_{\infty} \notag \\
            & \quad +   \left\Vert \max_{j\ne i} W(X_i,X_j)\m^{(L+1)}(f^{(L)}(X_j)) -  \sup_{y\in\X} W(X_i,y)\m^{(L+1)}(f^{(L)}(y)) \right\Vert_{\infty} \label[inequality]{eq: proof bound max 0}
            \,,
	\end{align}
	where the last~\cref{eq: proof bound max 0} uses~\cref{lem: app properties of max}, the fact that $|W| \le 1$, and the Lipschitz continuity of $\m^{(L+1)}$. 
    Thus taking the maximum over $i$, we get 
    \begin{equation}\label[inequality]{eq: proof bound max 1}
		\begin{split}
			& \max_i\left\Vert z^{(L+1)}_i - f^{(L+1)}(X_i)\right\Vert_{\infty} \\
			& \le  K_{\m^{(L+1)}} \max_{ i} \left\Vert z^{(L)}_j - f^{(L)}(X_j)\right\Vert_{\infty}\\
			& \quad +   \max_i\left\Vert \max_{j\ne i} W(X_i,X_j)\m^{(L+1)}(f^{(L)}(X_j)) -  \sup_{y\in\X} W(X_i,y)\m^{(L+1)}(f^{(L)}(y)) \right\Vert_{\infty} 
            \,.
		\end{split}
	\end{equation} 
	Now we bound~\cref{eq: proof bound max 1} with high probability. 
    We use the induction hypothesis for the first term. 
    For the second term, we set $g^{(L+1)}(x,y) = W(x,y)\m^{(L+1)}(f^{(L)}(y))$ and use~\cref{lem: concentration bound for max} on $g$ which is $K_{f^{(L+1)}}=K_{\m^{(L+1)}} K_{f^{(L)}} + \Vert \m^{(L+1)}\circ f^{(L)}\Vert_{\infty} K_W$ Lipschitz. 
    The process is the same as in the case $L=1$. 
    By conditioning over $X_i$ followed by a union bound, we obtain that for $\rho \ge 2e^{-n \kappa r_0^d 2^d}$, with probability at least $1-\rho$, it holds,
	\begin{align*}
		&  \max_i\left\lVert z^{(L+1)}_i - f^{(L+1)}(X_i)\right\rVert_{\infty} \\
		& \le K_{\m^{(L+1)}} H^{(L)}(\rho/2) + \frac{K_{f^{(L+1)}}}{2}\left(\frac{\ln\left( 2nd_{L+1} / \rho\right)}{n\kappa}\right)^{\nicefrac{1}{d}} \\
		& = \sum_{l=1}^{L}K_{\m^{(L+1)}}B^{(l,L)}\frac{K_{f^{(l)}}}{2}\left (\frac{\ln\left( 2^{L+2-l}nd_l / \rho\right)}{n\kappa}\right)^{\nicefrac{1}{d}} 
		+ \frac{K_{f^{(L+1)}}}{2}\left( \frac{\ln\left( 2nd_{L+1} / \rho\right)}{n\kappa}\right)^{\nicefrac{1}{d}} \\
		& = H^{(L+1)}(\rho)\,.
	\end{align*}
    where we have used the recursive expression
    \begin{equation}
        \begin{cases}
            & B^{(l,l)} = 1 \\
            & B^{(l,L+1)} = K_{m^{(L+1)}} B^{(l,L)}
        \end{cases}
    \end{equation}
\end{proof}

We finish with the proof of the invariant case, which follows from the previous result.

\begin{proof}[Proof of the invariant case,~\cref{eq: max main result bound inv} of~\cref{th: main result max}]
    The final readout is simply a maximum (respectively a supremum), over the nodes of the graph, hence a triangular inequality gives,
	\begin{equation*}
		\begin{split}
			&\left\Vert \overline{\Theta}_{G_n}\left(\iota_Xf\right) - \overline{\Theta}_{W,P}(f) \right\Vert_{\infty} = \left\Vert \max_i z_i^{(L)} - \sup f^{(L)} \right\Vert_{\infty} \\
			& \le \left\Vert \max_i z_i^{(L)} - \max_i f^{(L)}(X_i)\right\Vert_{\infty} + \left\Vert \max_i f^{(L)}(X_i) -  \sup f^{(L)} \right\Vert_{\infty} \\
			& \le \max_i\left\Vert z^{(L)}_i - f^{(L)}(X_i)\right\Vert_{\infty} + \left\Vert \max_i f^{(L)}(X_i) -  \sup f^{(L)} \right\Vert_{\infty}.
		\end{split}
	\end{equation*}
	Using the bound for the equivariant case and~\cref{lem: concentration bound for max} on $f^{(L)}$, we obtain the result.
\end{proof}

\section{Examples}\label{sec: app examples proofs}

In this section, we put all computations related to the examples, we prove~\cref{prop: continuous-examples,prop: main th on the examples}. 
For notational convenience, we drop any subscript or superscript $(l)$ referring to layers. 
Recall that $m = m^{(l)}$ is supposed Lipschitz and bounded, we denote $K_{m}$ its Lipschitz constant and $\lVert m\rVert_{\infty}=\sup_x \lVert m(x)\rVert_{\infty}$ (on the respective domain of $x$ at each layer).
We divide this section into subsections for each example.

\subsection{\Cref{ex: 1,ex: a}: Convolutional message-passing with \emph{mean} aggregation}\label{sec: app calculation for example 1}
We prove~\cref{prop: continuous-examples,prop: main th on the examples} on all the examples.
We verify~\cref{ass: bounded,ass: continuous layer,,ass: holder layer} on~\cref{ex: 1,ex: 2,ex: 3,ex: 4}, but not~\cref{ex: 5}.
For the latter, we simply verify that the bounded differences fail to be sharp enough.

\paragraph{\Cref{prop: continuous-examples} for~\cref{ex: 1,ex: a}.}	
By independence and identical distribution of the random variables and linearity of the expected value, the convergence in~\cref{eq: def construction limite part 2} is actually an equality for all integer $n$. 
For all $x$,
	$$ \E\left[\frac{1}{n}\sum_i W(x,X_i)\m(f(X_i))\right]= \E\left[W(x,X_1)\m(f(X_1))\right]=\int_{\X}W(x,y)\m(y) \d P(y)\,.$$
Clearly this remains true when replacing $P$ by $\phi\cdot P$, $f$ by $\phi\cdot f$ and $W$ by $\phi\cdot W$ for any $\phi\in\Aut(\X)$. 

\paragraph{\Cref{ass: continuous layer} for~\cref{ex: 1,ex: a}.}
The above calculation yields $s_n=0$.

\paragraph{\Cref{ass: bounded} for~\cref{ex: 1,ex: a}.}
We consider $\psi=\id$, such that $\widetilde D_n=0$ and $K_\psi=1$.

Let $x_1,\ldots, x_n$ and $x_2',\dots,x_n'$  be such that $x_i=x_i'$ except at $k=2$.
\begin{align*}
		&\left\Vert F\left(f(x_1), \left\lBrace\left(f(x_i),W(x_1,x_i)\right)\right\rBrace_{2\leq i\leq n}\right) - F\left(f(x_1), \left\lBrace\left(f(x_i'),W(x_1,x_i')\right)\right\rBrace_{2\leq i\leq n}\right) \right\Vert_{\infty} \\
		& = \frac{1}{n-1}\left\Vert W(x_1,x_2)\m(f(x_2)) - W(x_1,x_2')\m(f(x_2'))\right\Vert_\infty \\
		& = 2\Vert m\Vert_\infty / (n-1) \\
        & = O(1/n)\,,
\end{align*}
since $\m$ is bounded. 
Hence, $D_n = O(1/n)$.
			
\paragraph{\Cref{ass: holder layer} for~\cref{ex: 1,ex: a}.} 
We have 
\begin{align*}
	&\left\Vert F(z_1,\left\lBrace(z_i, w_i)\right\rBrace_{i\geq 2}) - F(z'_1,\left\lBrace z'_i, w_i\right\rBrace_{i\geq 2})\right\Vert_{\infty} \\
    & \le \left\lVert\frac{1}{n-1}\sum_{2\le i\le n}  w_i \m(z_i) -  \frac{1}{n-1}\sum_{2\le i\le n} w_i \m(z_i') \right\rVert_{\infty} \\
	&  \le K_{\m}\max_i\left\Vert z_i - z_{i}' \right\Vert_\infty\,.
\end{align*}
Hence~\cref{ass: holder layer} is satisfied with $K_{F,n}=K_m$ and $\alpha_F=1$.
			
\subsection{\Cref{ex: 2,ex: b}: Degree normalized message-passing}\label{calculation for example 2}

Recall here that we assume that $W(\cdot, \cdot)\geq a >0$.

\paragraph{\Cref{prop: continuous-examples} for~\cref{ex: 2,ex: b}.}

We have for all $x$,
$$\int_{\X} \frac{W(x,y)\m(f(y))}{\int_{\X}W(x,t) \d P(t)}\d P(y) =\frac{\E\left[ W(x,X)\m(f(X))\right]}{\E\left[ W(x,X)\right]}\,, $$
where $X \sim P$.
Let $x\in \X$ be fixed. 
For simplicity, we compute $s_{n+1}$ (\ie we consider $n$ random variables and not $n-1$). 
Then,
\begin{align}
	& \left\Vert \E_{X_i}\left[\frac{\frac{1}{n}\sum_i W(x,X_i)\m(f(X_i))}{\frac{1}{n}\sum_i W(x,X_i)} \right] - \frac{\E\left[ W(x,X)\m(f(X))\right]}{\E\left[ W(x,X)\right]} \right\Vert_{\infty} \notag \\
	& = \left\Vert \E_{X_i}\left[\frac{\frac{1}{n}\sum_i W(x,X_i)\m(f(X_i))}{\frac{1}{n}\sum_i W(x,X_i)} - \frac{\E\left[ W(x,X)\m(f(X))\right]}{\E\left[ W(x,X)\right]} \right] \right\Vert_{\infty} \notag \\
	& \le \E_{X_i}\Biggl[ \biggl\lVert \frac{\frac{1}{n}\sum_i W(x,X_i)\m(f(X_i))}{\frac{1}{n}\sum_i W(x,X_i)} - \frac{\E\left[ W(x,X)\m(f(X))\right]}{\E\left[ W(x,X)\right]} \biggr\rVert_{\infty}\Biggr]  \notag \\
	& \le \E\Biggl[ \biggl\lVert \frac{1}{n}\sum_i W(x,X_i)\m(f(X_i))\biggr\rVert_\infty \biggl\lvert\frac{1}{\frac{1}{n}\sum_i W(x,X_i)} -\frac{1}{\E\left[ W(x,X)\right]}\biggr\rvert \notag \\
	& \qquad + \biggl\lVert \frac{1}{n}\sum_i W(x,X_i)\m(f(X_i)) - \E\left[ W(x,X)\m(f(X))\right]\biggr\rVert_\infty \biggr\lvert\frac{1}{\E\left[ W(x,X)\right]}\biggr\rvert \Biggr] \notag \\
	& \le \frac{\lVert \m\rVert_{\infty}}{a^2}
    \E\left[ \biggl\lvert\frac{1}{n}\sum_i W(x,X_i) -\E\left[ W(x,X)\right]\biggr\rvert \right]  \notag \\
	& \qquad + \frac{1}{a}\E\left[ \biggl\lVert \frac{1}{n}\sum_i W(x,X_i)\m(f(X_i)) - \E\left[ W(x,X)\m(f(X))\right]\biggr\rVert_\infty \right] \,, \label{eq: proof ex 2-b 0}
\end{align}
since $0 < a \le W \le 1$.
Consequently, using the formula $\E(X)=\int_{t>0}\P(X>t) \d t$ for X nonnegative, we get that this last quantity~\labelcref{eq: proof ex 2-b 0} is equal to
\begin{multline}\label{eq: proof ex 2-b 1}
		\frac{\left\Vert \m\right\Vert_{\infty}}{a^2} \int_{t>0} \P\left( \biggl\lvert \E\left[ W(x,X)\right]  -\frac{1}{n}\sum_i W(x,X_i)   \biggr\rvert > t \right) \d t \\
		+ \frac{1}{a}\int_{t>0} \P\left( \biggl\lVert  \frac{1}{n}\sum_i W(x,X_i)\m(f(X_i)) -   \E\left[ W(x,X)\m(f(X))\right] \biggr\rVert_{\infty} > t\right)  \d t \,.
\end{multline}

Finally, we use McDiarmid inequality (which turns out to be the same as Hoeffding inequality for a sum of independent random variables). 
It is easy to check that the concerned multivariate maps have bounded differences of the form $c_i=C/n$ for all $i$. 
Therefore, there are some positive constants $C_1, C_2$ \emph{independent of $x$} such that~\cref{eq: proof ex 2-b 1} is bounded by 
$$C_1\int_{t>0} e^{-n C_2 t^2}  \d t %
= O\left(1/\sqrt{n}\right)\to 0.$$
Since this is true for all $x$, and that this bound is independent of $x$, we obtain convergence in $L^\infty_P$ norm. 
This remains true when replacing $P$ by $\phi\cdot P$, $f$ by $\phi\cdot f$ and $W$ by $\phi\cdot W$ for any $\phi\in\Aut(\X)$.
Hence we have shown~\cref{prop: continuous-examples}. 
			
\paragraph{\Cref{ass: continuous layer} for~\cref{ex: 2,ex: b}.}
The above calculation yields $s_n= O\left(1/\sqrt{n}\right)$.

\paragraph{\Cref{ass: bounded} for~\cref{ex: 2,ex: b}.} 
Let us first introduce some intermediate computations. 
Let $x_1,\ldots, x_n$ and $x_2',\dots,x_n'$ be such that $x_i=x_i'$ except at $i=2$. 
Denote by $z_i=f(x_i)$, $z'_i=f(x_i)$, $w_i=W(x_1,x_i)$, and $w'_i=W(x'_1,x'_i)$ for short. 
\begin{align}
	&\left\Vert F\left(f(x_1), \left\lBrace\left(f(x_i),W(x_1,x_i)\right)\right\rBrace_{2\leq i\leq n}\right) - F\left(f(x_1), \left\lBrace\left(f(x_i'),W(x_1,x_i')\right)\right\rBrace_{2\leq i\leq n}\right) \right\Vert_{\infty} \notag\\
    & = \norm*{F\left( z_1, \multiSet{z_i,w_i} \right) - F\left( z_1, \multiSet{z_i',w_i'} \right) }_{\infty} \notag \\
	& = \left\Vert \frac{\sum_{2\le i\le n} w_i \m(z_i)}{\sum_i w_i} - \frac{\sum_{2\le i\le n} w_{i}' \m(z'_i)}{ \sum_i w_{i}'} \right\Vert_{\infty} \notag\\
	& \le \frac{1}{\sum_i w_i}\left\Vert \sum_{i} w_i \m(z_i) - \sum_{i} w_{i}' \m(z'_i) \right\Vert_{\infty} + \left\Vert \sum_{i} w_{i}' \m(z'_i) \right\Vert_{\infty} \left|\frac{1}{\sum_i w_i} - \frac{1}{\sum_i w'_i}\right| \notag\\
    &\leq \frac{1}{(n-1) a}\left\Vert \sum_{i} w_i \m(z_i) - \sum_{i} w_{i}' \m(z'_i) \right\Vert_{\infty} + \frac{\|m\|_\infty}{(n-1) a^2}\left|\sum_i w_i - \sum_i w'_i\right| \,. \label[inequality]{eq: proof 2-b inter}
\end{align}
Then, again we consider $\psi=\id$, such that $\widetilde D_n=0$ and $K_\psi=1$. 
Applying~\cref{eq: proof 2-b inter} with $z_i=z'_i$ and $w_i=w'_i$ except for $i=2$, we prove~\cref{ass: bounded} with 
$$
D_n = \frac{1}{(n-1) a} + \frac{\|m\|_\infty}{(n-1) a^2} = O\left(1/n\right)\,.
$$

\paragraph{\Cref{ass: holder layer} for~\cref{ex: 2,ex: b}.}
Again using~\cref{eq: proof 2-b inter} with $w_i=w'_i$ and using the Lipschitz property of $m$,
\begin{align*}
    & \norm*{F\left( z_1, \multiSet{z_i,w_i} \right) - F\left( z_1', \multiSet{z_i',w_i} \right) }_{\infty} \notag \\
	&\le \frac{1}{(n-1) a}\left\Vert \sum_{i} w_i \m(z_i) - \sum_{i} w_{i} \m(z'_i) \right\Vert_{\infty} \\
	&\le \frac{K_m}{a} \max_i \left\Vert z_i - z'_{i} \right\Vert_{\infty}\,.
\end{align*}
Hence we prove~\cref{ass: holder layer} with $K_F = \frac{K_m}{a}$ and $\alpha_F=1$.
			
\subsection{\Cref{ex: 3,ex: c}: Attentional message-passing}\label{calculation for example 3}

Call $V(x,y) = c\left(f(x),f(y),W(x,y)\right)$. 
We are basically brought to the previous example with $V$ instead of $W$.

\paragraph{\Cref{prop: continuous-examples} for~\cref{ex: 3,ex: c}.} 
Using $V$ instead of $W$, we are brought to the previous example and therefore~\cref{prop: continuous-examples} is satisfied.

\paragraph{\Cref{ass: continuous layer} for~\cref{ex: 3,ex: c}.}
By the same argument,~\cref{ass: continuous layer}, is fulfilled with $s_n=O\left(1/\sqrt{n}\right)$.

\paragraph{\Cref{ass: bounded} for~\cref{ex: 3,ex: c}.} 
From the previous Example,~equation \cref{eq: proof 2-b inter} remains valid with $w_i = V(x_1,x_i), w'_i = V(x'_1,x'_i)$ instead of $W$. 
Hence~\cref{ass: bounded} is proven with $D_n = O\left(1/\sqrt{n}\right)$ and $\psi=\id$.

\paragraph{\Cref{ass: holder layer} for~\cref{ex: 3,ex: c}.} 
Again $\psi=\id$. 
Using again~\cref{eq: proof 2-b inter} but with $v_i = c(z_1,z_i,w_i)$, where $c(x,y,w)\le b$, and $\vert c(x,y,w) - c(x',y',w)\vert \le K_c(\Vert x-x'\Vert_{\infty} + \Vert y-y'\Vert_{\infty})$, we get:
\begin{align*}
	&\left\Vert F\left(z_1, \left\lBrace\left(z_i,w_i\right)\right\rBrace_{2\leq k\leq n}\right) - F\left(z'_1, \left\lBrace\left(z_i',w_i\right)\right\rBrace_{2\leq k\leq n}\right) \right\Vert_{\infty} \notag\\
	& \leq \frac{b}{(n-1) a}\left\Vert \sum_{i} v_i \m(z_i) - \sum_{i} v'_{i} \m(z'_i) \right\Vert_{\infty} + \frac{b\|m\|_\infty}{(n-1) a^2}\left|\sum_i v_i - \sum_i v'_i\right| \\
	&\leq \frac{b}{a} \max_i \left\Vert v_i \m(z_i) - v'_{i} \m(z'_{i}) \right\Vert_{\infty} + \frac{b\|m\|_\infty}{a^2} \max_i \left|v_i - v'_{i}\right|  \\
	&\leq \frac{b}{a} \left(\|m\|_\infty K_c\left(\|z_1-z'_1\|_\infty + \max_i\|z_i-z'_{i}\|_\infty\right) + bK_m \max_i \|z_i-z'_{i}\|_\infty\right) \\
	&+ \frac{b\|m\|_\infty}{a^2} K_c\left(\|z_1-z'_1\|_\infty + \max_i\|z_i-z'_{i}\|_\infty\right) \\
    & \le \left(\frac{2b \norm{n}_{\infty}K_c + b^2K_m}{a} + \frac{2bmK_c}{a^2}\right) \max_i \norm{z_i-z_i'}_\infty \,.
\end{align*}
Hence we have shown~\cref{ass: holder layer} with $\alpha_F=1$.

\subsection{\Cref{ex: 4,ex: d}: Generalized mean}\label{calculation for example 4}

\paragraph{\Cref{prop: continuous-examples} for~\cref{ex: 4,ex: d}.} 
We call $g(x,y)=h^{-1}(W(x,y)\m(f(y)))$ and again consider $n$ neighbors (instead of $n-1$). 
For any fixed $x \in \X$, we have 
\begin{align*}
	&\left\Vert \E_{X_i} \left[ h\biggl( \frac{1}{n} \sum_i  g(x,X_i) \biggr) \right] - h\bigl( \E_X \left[  g(x,X) \right] \bigl) \right\Vert_\infty \\
	&\leq \E_{X_i}\Biggl[\biggl\lVert h\biggl( \frac{1}{n} \sum_i  g(x,X_i) \biggr) - h\bigl( \E_X \left[ g(x,X) \right]\bigr)  \biggr\rVert_\infty \Biggr] \\
	&\leq K_h\E_{X_i}\left[\left\Vert \frac{1}{n} \sum_i g(x,X_i) - \E_X  g(x,X) \right\Vert_\infty^{\alpha_h} \right]\,.
\end{align*}
Again, this is equal to 
$$
K_h\int_t \P\left( \left\Vert  \frac{1}{n}\sum_i g(x,X_i) -   \E\left[ g(x,X)\right] \right\Vert_{\infty}^{\alpha_h} > t\right)  \d t \,,
$$
and since $h^{-1}$, therefore $g$, is bounded we can use Hoeffding inequality to show that this is bounded for some constants $C_1, C_2$ by
$$
C_1\int_{t>0} e^{-n C_2 t^{2/\alpha_h}}  \d t %
	= O(n^{-\nicefrac{\alpha_h}{2}})\to 0 \,.
$$
Since this is true for all $x$ and this bound is independent of $x$, we obtain convergence in $L^\infty_P$ norm. 
This remains true when replacing $P$ by $\phi\cdot P$, $f$ by $\phi\cdot f$ and $W$ by $\phi\cdot W$ for any $\phi\in\Aut(\X)$.
Hence we have shown~\cref{prop: continuous-examples}. %

\paragraph{\Cref{ass: continuous layer} for~\cref{ex: 4,ex: d}.}
The above calculation yields $s_n = O\left(n^{-\frac{\alpha_h}{2}}\right)$.

\paragraph{\Cref{ass: bounded} for~\cref{ex: 4,ex: d}.} 
Here we consider $\psi = h$. 
For~\cref{item 1 assumtion bounded} of~\cref{ass: bounded}, we have 
\begin{equation*}
	\psi^{-1}\left(F(f(x_1), \left\lBrace(f(x_i),W(x_1,x_i)) \right\rBrace_{i\ge 2})\right) = \frac{1}{n-1}\sum_{i\geq 2} h^{-1}(W(x_1,x_i)\m(f(x_i)) \,.
\end{equation*}
So we are brought back to~\cref{ex: 1} when $h^{-1}$ is bounded, and therefore $D_n = O(1/n)$.

For~\cref{item 2 assumtion bounded}, calling again $g(x,y)=h^{-1}(W(x,y), \m(f(y)))$, we have
\begin{align*}
	&\Big\Vert \psi\left(\E \psi^{-1} \left( F\left(f(x_1), \left\lBrace( f(X_i),W(x_1,X_i)) \right\rBrace_{i\ge 2} \right)\right)\right) - \E F(f(x_1), \left\lBrace(f(X_i),W(x_1,X_i)) \right\rBrace_{i\ge 2} )\Big\Vert_\infty \\
	&= \left\Vert h\left( \E \frac{1}{n-1} \sum_i  g(x,X_i) \right) - \E \left[ h\biggl( \frac{1}{n-1} \sum_i  g(x,X_i) \biggr) \right] \right\Vert_\infty\\
	&= \left\Vert h\left( \E_X \left[  g(x,X) \right] \right) - \E_{X_i} \left[ h\biggl( \frac{1}{n-1} \sum_i  g(x,X_i) \biggr) \right] \right\Vert_\infty \\
    & = O\left(n^{-\nicefrac{\alpha_h}{2}}\right)\,,
\end{align*}
as per the computation above for~\cref{prop: continuous-examples}. 
Hence $\widetilde D_n = O\left(n^{-\nicefrac{\alpha_h}{2}}\right)$.

Finally, $\psi=h$ is $\alpha_h$-Holder so~\cref{item 3 assumtion bounded} from~\cref{ass: bounded} is satisfied with $\alpha_\psi = \alpha_h$.

\paragraph{\Cref{ass: holder layer} for~\cref{ex: 4,ex: d}.} 
This is to be treated in a case-by-case manner. 
For the examples mentioned in~\cref{prop: main th on the examples}.
\begin{description}
    \item[Power mean.] 
        Consider $h=x \mapsto x^{\nicefrac{1}{p}}$, and $m$ nonnegative of dimension $1$ (the reasoning can be done in each dimension).
        Define vectors $S, S'$ of size $n-1$ such that $S_i = w_{i}m(f(x_i))$ and similarly for $S'$. 
        Denoting by $\lVert S\rVert_p = \left(\sum_i S_i^p\right)^{\nicefrac{1}{p}}$ the $p$-norm, we get
        \begin{align*}
            &\left\vert F\left(f(x_1), \left\lBrace\left(f(x_i),W(x_1,x_i)\right)\right\rBrace_{2\leq i\leq n}\right) - F\left(f(x_1), \left\lBrace\left(f(x_i'),W(x_1,x_i')\right)\right\rBrace_{2\leq i\leq n}\right) \right\vert \\
            &= \frac{1}{(n-1)^{\nicefrac{1}{p}}}\left\vert \|S\|_p - \|S'\|_p \right\vert \leq \frac{1}{(n-1)^{\nicefrac{1}{p}}}\|S - S'\|_p \leq \|S-S'\|_\infty \\
            &\leq K_m \max_i \|z_i-z'_i\|_\infty
        \,.
	\end{align*}
	Hence $\alpha_F=1$.
    \item[Lipschitz $h^{-1}$.] 
        We have
        \begin{align*}
            &\left\Vert F\left(f(x_1), \left\lBrace\left(f(x_i),W(x_1,x_i)\right)\right\rBrace_{2\leq i\leq n}\right) - F\left(f(x_1), \left\lBrace\left(f(x_i'),W(x_1,x_i')\right)\right\rBrace_{2\leq i\leq n}\right) \right\Vert_\infty \\
            &\leq K_h\left(\frac{1}{n-1}\left\Vert \sum_i h^{-1}(w_i m(z_i)) - h^{-1}(w_i m(z'_i)) \right\Vert_\infty\right)^{\alpha_h} \\
            &\leq K_h (K_m\max_i \|z_i-z'_i\|_\infty)^{\alpha_h}
            \,,
        \end{align*}
        since $h^{-1}$ is Lipschitz.
    \item[Lipschitz $h$ and $h^{-1}$] 
        It is the previous case with $\alpha_h=1$.
\end{description}

\subsection{\Cref{ex: 5,ex: e}: Convolutional Message-Passing with \emph{max} aggregation}\label{calculation for example 5}
For this example, we need to prove~\cref{prop: continuous-examples} and that the Mcdiarmid's method fails, \ie that the bounded differences do not tend to zero.

\paragraph{\Cref{prop: continuous-examples} for~\cref{ex: 5,ex: e}.}
We call $g(x,y)=W(x,y)\m(f(y))$. 
We start by the case when $g$ is real-valued, since $g$ is continuous and $P$ is strictly positive, $\esssup_{P} g(x,\cdot) = \sup g(x,\cdot)$ for all $x$ by~\cref{lem: esssup = sup}. 
Let $\eps>0$ and $x \in \X$. 
By definition of the supremum and by independence of the $X_i$, we have that
\begin{align}\label{eq: proof 4-d 1}
        & \P(| \max_i g(x,X_i) - \sup g(x,\cdot) | \ge \eps) \notag \\
        &= \P( \max_i g(x,X_i) \le \sup g(x,\cdot) - \eps) \notag \\
        &= \P( g(x,X_1)  \le \sup g(x,\cdot) - \eps)^n \notag \\
        &= \P(|g(x,X_1) - \sup g(x,\cdot)|  \ge \eps)^n\,.
\end{align}
By continuity and compactness, there is $x^*\in\X$ such that $\sup g(x,\cdot) = g(x,x^*)$, so~\cref{eq: proof 4-d 1} is equal to
\begin{equation}\label{eq: proof 4-d 2}
    \begin{split}
        & \P(|g(x,X_1) -  g(x,x^*)|  \ge \eps)^n \\
        & = \bigl(1 - \P(|g(x,X_1) -  g(x,x^*)|  < \eps) \bigr)^n\,.
    \end{split}
\end{equation} 
By continuity and compactness again, g is uniformly continuous so there is $\delta>0$ such that $\Vert (x,X_1) - (x,x^*)\Vert = \Vert X_1-x^*\Vert < \delta$ implies $| g(x,X_1)-g(x,x^*))| < \eps$. 
Thus,~\cref{eq: proof 4-d 2} is bounded from above by
	\begin{equation}\label{eq: proof 4-d 3}
		\left(1 - \P(\Vert X_1-x^*\Vert < \delta) \right)^n = \bigl(1 - P(B(x^*,\delta)\cap\X) \bigr)^n \,,
	\end{equation}
where $B(x^*,\delta)$ is the open ball of center $x^*$ and radius $\delta$ in $\R^d$. 
To finish let us justify that the measure of the $B(x^*,\delta)\cap\X$ when $x$ runs over $\X$ is bounded from below. 
Suppose this would not be the case, \ie that the measure of a ball of radius $\delta$ centered in $\X$ could be arbitrarily small. 
By compactness, up to sub-sequence extraction, we can assume there is $(x_k) \in \X^{\N}$ such that $x_n\to x \in \X$ and $P(B(x_k,\delta)\cap\X) \le 1/2^k$. 
Call $U=B(x,\delta/2)\cap \X$, there is rank $k_0$ such that $\forall k\ge k_0$, $x_k\in U$. 
Thus $U\subset B(x_k,\delta)\cap\X\ \forall k\ge k_0$ yielding $P(U) \le 1/2^k\ \forall k \ge k_0$, that is, $P(U)=0$. 
Impossible since $U$ is a nonempty relative open set of $\X$.\\
So there is $\eta > 0$ independent of $x$ such that $P(B(x^*,\delta)\cap\X)>\eta$ and, coming back to~\cref{eq: proof 4-d 3}:
	\begin{equation}\label{eq: proof 4-d 4}
		\P(| \max_i g(x,X_i) - \sup g(x,\cdot) | \ge \eps) \le (1-\eta)^n\,.
	\end{equation} 

If $g$ is vector-valued, say in $\R^{q}$, call $g_1,\dots,g_{q}$ its components and $\eta_k$ such that $g_k$ satisfies~\cref{eq: proof 4-d 4} with $\eta=\eta_k$. 
Then by a union bound we have 
\begin{equation}\label{eq: proof 4-d 5}
    \P(\Vert \max_i g(x,X_i) - \sup g(x,\cdot) \Vert_{\infty} \ge \eps) \le \sum_{k=1}^{q}(1-\eta_k)^n\,.
\end{equation}
At the end of the day, by letting $Z=\Vert \max_i g(x,X_i) - \sup g(x,\cdot) \Vert_{\infty}$, we have for any $\eps>0$
\begin{equation}
    \begin{split}
        \Vert \E(\max_i g(x,X_i)) - \sup g(x,\cdot) \Vert_{\infty} &\le  \E(Z) \\
        &= \E(Z\ind{Z\ge \eps}) + \E(Z\ind{Z <\eps})\\
        &\le 2\Vert g\Vert_{\infty} \sum_{k=1}^{q}(1-\eta_k)^n + \eps.
    \end{split}
\end{equation}
Again, since the right-hand side does not depend on $x$, this concludes the uniform convergence.
To conclude the proof, we are left to check that the strict positiveness of $P$ as well as the continuity of $f$ and $W$ are preserved by the action of homeomorphisms. 
It is clear for maps' continuity. Let $\phi\in\Hom(\X)$ and $U\subset \X$ a relative nonempty open of $\X$,
$$\phi\cdot P (U) = P(\phi^{-1}(U)) >0$$
since $\phi^{-1}(U)$ is a nonempty open of $\X$ as $\phi$ is continuous.

\paragraph{Bounded differences are $ \Omega(1)$.} 
Here we check the bounded differences are $ \Omega(1)$, \ie they do not tend to zero.

Call $g(x,y) = W(x,y)f^{(l-1)}(y)$, and $(g_1,\dots,g_{d_l})$ its components which are real functions. We suppose $g$ not constant, so there is $k$ such that $g_k$ is not constant, say $k=1$.
By compactness and continuity of $g_1$ there is $x^*$ such that $g(x,x^*) = sup_y g(x,y)$. 
Since $g_1$ is not constant, for any $n$, there exist $x_1, \dots,x_n$ such that $g(x,x_1), \dots, g(x,x_n)$ are all strictly smaller that $ g(x,x^*)$. 
Up to reordering them, we suppose $g(x,x_1)=\max_{2\le i\le n}g(x,x_i)$ and call $a = | g_1(x,x^*) - g_1(x,x_1) | >0 $.
\begin{align*}
        & \Vert \max \{g(x,x^*),g(x,x_2),\dots,g(x,x_n) \} - \max \{g(x,x_1), g(x,x_2),\dots,g(x,x_n) \} \Vert_{\infty}\\
        & \ge |  \max \{g_1(x,x^*),g_1(x,x_2),\dots,g_1(x,x_n) \} - \max \{g_1(x,x_1), g_1(x,x_2),\dots,g_1(x,x_n) \} | \\
        & = | g_1(x,x^*) - g_1(x,x_1) | \\
        & > a\,.
\end{align*}
Overall, for any $n$,
$$a < \sup \Vert \max \{g(x,x_1),\dots,g(x,x_n) \} - \max \{g(x,x_1'),\dots,\dots,g(x,x_n') \} \Vert_{\infty}
\,,$$
where the supremum is taken over $x,x_2,\dots,x_n,x_2',\dots,x_n' \in \X$ such that $(x_2,\dots,x_n)$ and $(x_2',\dots,x_n')$ differ from only one component. 
This proves that the bounded differences are $ \Omega(1)$. 
			
\section{Useful results}\label{app: Useful results}
In this section, we gather useful theorems, and other third-party lemmas.
\begin{theorem}[McDiarmid inequality~\citep{boucheron_concentration_2013}]\label{th: McDiarmid} 
    Suppose $\mathcal{E}$ is a probability space and let $f: \mathcal{E}^n \to \R$ be a function of $n$ variables. 
    Suppose that $f$ satisfies the bounded differences property with the $n$ nonnegative constants $c_1, \dots, c_n$. 
    Then for any independent random variables $X_1,\dots,X_n$ in $\mathcal{E}$, for any $\eps>0$:
	$$\P(|f(X_1,\dots,X_n)-\E(f(X_1,\dots,X_n))|>\eps) \leq 2e^{-\frac{2\eps^2}{\sum_{i=1}^{n}c_i^2}}\,.$$
\end{theorem}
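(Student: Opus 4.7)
The plan is to prove Theorem~\ref{th: McDiarmid} via the classical martingale method, sometimes called the method of bounded differences. The idea is to write the deviation of $f$ from its mean as a sum of martingale differences built on a Doob filtration, verify that those differences are bounded in an appropriate two-sided sense, and conclude with Azuma--Hoeffding.

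First, I would introduce the filtration $\mathcal{F}_0 = \{\emptyset, \mathcal{E}^n\}$ and $\mathcal{F}_i = \sigma(X_1,\dots,X_i)$ for $1 \le i \le n$, and define the Doob martingale $M_i = \mathbb{E}\bigl[f(X_1,\dots,X_n) \,\big|\, \mathcal{F}_i\bigr]$, so that $M_0 = \mathbb{E}[f]$ and $M_n = f(X_1,\dots,X_n)$. The deviation we want to control is then the telescoping sum $M_n - M_0 = \sum_{i=1}^{n} D_i$ where $D_i = M_i - M_{i-1}$ is a martingale difference sequence.

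The main step, and the one that really uses the hypothesis, is to show that each $D_i$ lies almost surely in an interval $[A_i, B_i]$ whose (random, $\mathcal{F}_{i-1}$-measurable) length satisfies $B_i - A_i \le c_i$. To do this I would set, for fixed $X_1,\dots,X_{i-1}$,
\begin{equation*}
h(x) = \mathbb{E}\bigl[f(X_1,\dots,X_{i-1}, x, X_{i+1},\dots,X_n)\bigr],
\end{equation*}
so that $M_i = h(X_i)$ and $M_{i-1} = \mathbb{E}[h(X_i)]$ by the tower property and independence of the $X_k$. Defining $A_i = \inf_x h(x) - M_{i-1}$ and $B_i = \sup_x h(x) - M_{i-1}$, the definition of $h$ together with~\eqref{eq: bounded diff} (applied inside the expectation, since only coordinate $i$ is varied) gives $\sup_x h(x) - \inf_x h(x) \le c_i$, hence $B_i - A_i \le c_i$, and clearly $A_i \le D_i \le B_i$ a.s.

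With that in hand, the standard Azuma--Hoeffding lemma for bounded martingale differences yields, for every $\lambda > 0$, the conditional moment generating function bound $\mathbb{E}[e^{\lambda D_i} \mid \mathcal{F}_{i-1}] \le \exp(\lambda^2 c_i^2 / 8)$, obtained by Hoeffding's lemma applied to a random variable supported in an interval of length $c_i$ with mean zero. Iterating the tower property gives $\mathbb{E}[e^{\lambda(M_n - M_0)}] \le \exp\bigl(\lambda^2 \sum_i c_i^2 / 8\bigr)$, and a Chernoff/Markov argument optimized in $\lambda = 4\epsilon/\sum_i c_i^2$ yields the one-sided bound $\exp(-2\epsilon^2 / \sum_i c_i^2)$. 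A symmetric argument applied to $-f$, followed by a union bound, produces the two-sided statement with the factor $2$.

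The main obstacle I anticipate is the two-sided bound on $D_i$: one has to be careful that the bounded differences property gives a bound on $\sup h - \inf h$ and not merely on $|D_i|$ directly, which is why Azuma--Hoeffding's factor $8$ (rather than $2$) appears in the intermediate step and yields the clean constant $2$ in the exponent. Everything else is routine conditioning and a Chernoff bound.
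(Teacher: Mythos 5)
Your proof is correct: it is the classical Doob-martingale argument with Azuma--Hoeffding, and the key step---showing each increment $D_i$ lies in an $\mathcal{F}_{i-1}$-measurable interval of length at most $c_i$ (not $2c_i$), which is what lets Hoeffding's lemma produce the sharp constant $2$ in the exponent---is handled correctly. The paper states this theorem without proof, citing Boucheron, Lugosi and Massart, and your argument is essentially the one found there, so there is nothing further to compare.
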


Notice that the $X_i$ are not required to be identically distributed. 
By a union bound and reformulating~\cref{th: McDiarmid} to a bound with high probability one can obtain the following result for function taking multidimensional values.

\begin{corollary}[Multi-dimensional McDiarmid inequality]\label{th: Multi dimendional McD}
	Suppose that $f: \mathcal{E}^n \to \R^d$ satisfies a vectorial version on the bounded difference: $\Vert f(x)-f(x')\Vert_{\infty} \leq c_i$ whenever $x$ and $x'$ differ only from the $i$th component. 
    Then for any independent random variables $X_1,\dots,X_n$ in $\mathcal{E}$, for any $\rho>0$ 
	$$\Vert f(X_1,\dots,X_n)-\E(f(X_1,\dots,X_n))\Vert_{\infty}\le \sqrt{\frac{1}{2}\sum_{i=1}^{n}c_i^2 \ln\left(\frac{2d}{\rho}\right)}\,,$$
	holds with probability at least $1-\rho$. 
\end{corollary}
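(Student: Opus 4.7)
The plan is to reduce the multi-dimensional statement to the scalar McDiarmid inequality (Theorem~\ref{th: McDiarmid}) applied coordinate-wise, combined with a union bound. Write $f = (f_1, \dots, f_d)$ where each $f_k : \mathcal{E}^n \to \RR$. The first observation is that each scalar component $f_k$ inherits the bounded differences property with the \emph{same} constants $c_1, \dots, c_n$: if $x$ and $x'$ differ only in the $i$-th coordinate, then
\[
|f_k(x) - f_k(x')| \le \Vert f(x) - f(x')\Vert_{\infty} \le c_i,
\]
so the hypothesis of Theorem~\ref{th: McDiarmid} is satisfied for $f_k$.

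Next, I would invert the tail bound of Theorem~\ref{th: McDiarmid} into a high-probability deviation statement. Given $\rho > 0$, set
\[
\epsilon = \sqrt{\tfrac{1}{2}\sum_{i=1}^{n}c_i^2 \ln\!\left(\tfrac{2d}{\rho}\right)}.
\]
A direct calculation shows $2\exp(-2\epsilon^2/\sum_i c_i^2) = 2 \cdot \tfrac{\rho}{2d} = \rho/d$, so scalar McDiarmid applied to $f_k$ yields
\[
\PP\bigl(|f_k(X_1,\dots,X_n) - \EE f_k(X_1,\dots,X_n)| > \epsilon\bigr) \le \rho/d.
\]

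Finally, a union bound over the $d$ coordinates gives that with probability at least $1-\rho$, every coordinate satisfies $|f_k(X) - \EE f_k(X)| \le \epsilon$ simultaneously. Since $\Vert f(X) - \EE f(X)\Vert_{\infty} = \max_k |f_k(X) - \EE f_k(X)|$, this is exactly the claimed bound. There is no real obstacle here: the only subtle point is the choice of the union-bound budget $\rho/d$ per coordinate so that the logarithm inside the square root comes out as $\ln(2d/\rho)$ rather than $\ln(2/\rho)$, which accounts for the mild dependence on the output dimension $d$.
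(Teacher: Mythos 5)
Your proof is correct and matches the approach the paper sketches (``By a union bound and reformulating [McDiarmid] to a bound with high probability''): apply scalar McDiarmid coordinate-wise with budget $\rho/d$ per coordinate, then union-bound over the $d$ components. The computation of $\epsilon$ and the observation that each $f_k$ inherits the same bounded-differences constants are both accurate.
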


\begin{corollary}[McDiarmid inequality with Holder transform]\label{th: Multi dimendional McD Holder}
	Let $f: \mathcal{E}^n \to \CC \subset \R^d$ and $\psi:\CC \to \mathcal{D}$ an invertible function which is $\alpha$-Hölder,
	$$
	\lVert \psi(y) - \psi(y')\rVert_\infty \leq K_\psi \lVert y-y'\rVert_{\infty}^\alpha\,,
	$$
    and such that,
	$$
		\lVert \psi^{-1}(f(x))-\psi^{-1}(f(x')) \rVert_{\infty} \leq c_i
	$$
	whenever $x$ and $x'$ differ only from the $i$th component, and
    $$
	\lVert \psi(\E \psi^{-1}(f(X_1,\dots,X_n)))-\E(f(X_1,\dots,X_n)) \rVert_{\infty} = u_n \to 0
	$$ 
	for independent random variables $X_1,\dots,X_n$ in $\mathcal{E}$. Then, for any $\rho>0$: 
	$$
	\lVert f(X_1,\dots,X_n)-\E(f(X_1,\dots,X_n))\rVert_{\infty}\le K_\psi\left( \frac{1}{2}\sum_{i=1}^{n}c_i^2 \ln\left(\frac{2d}{\rho}\right) \right)^{\nicefrac{\alpha}{2}} + u_n \,.
	$$
	holds with probability at least $1-\rho$. 
\end{corollary}
\begin{proof}
	We write $f(X_1,\dots,X_n) = f(X)$ and:
	\begin{align*}
		\lVert f(X)-\E(f(X))\rVert_{\infty} &\leq \lVert \psi(\psi^{-1}(f(X)))-\psi(\E \psi^{-1}(f(X)))\rVert_{\infty} \\
		&\quad + \lVert \psi(\E \psi^{-1}(f(X)))-\psi(\psi^{-1}(\E f(X)))\rVert_{\infty} \\
		&\leq K_\psi  \lVert \psi^{-1}(f(X))-\E \psi^{-1}(f(X))\rVert_{\infty}^\alpha + \lVert \psi \left( \E \psi^{-1}(f(X))\right)-\E f(X) \rVert_{\infty}
	\end{align*}
We apply McDiarmid's inequality (\cref{th: Multi dimendional McD}) on the first term and bound the second by $u_n$ to obtain the result.
\end{proof}

\begin{lemma}\label{lem: esssup = sup}
	Suppose $P$ is strictly positive \ie for all $U\subset\R^d$, $P(U\cap\X)>0$ if and only if $U\cup\X$ is nonempty. Then for any continuous map $f: \X \to \R$,
	$$ \uesssup{P} f = \sup f < +\infty\,.$$ 
\end{lemma}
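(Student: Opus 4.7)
The plan is to prove the two inequalities $\esssup{}{P} f \le \sup f$ and $\esssup{}{P} f \ge \sup f$ separately, with the compactness of $\XX$ handling existence and boundedness at the outset.

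First I would observe that since $\XX$ is compact and $f$ is continuous, $f$ attains its supremum: there exists $x_0 \in \XX$ such that $f(x_0) = \sup f =: M$, and in particular $M < +\infty$. The easy direction $\esssup{}{P} f \le M$ then follows immediately from the definition $\esssup{}{P} f = \inf\{c \in \RR : P(f > c) = 0\}$, since $\{f > M\} = \emptyset$ has $P$-measure zero.

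For the reverse inequality, I would fix an arbitrary $c < M$ and consider the set $U_c := f^{-1}((c,+\infty)) \cap \XX$. By continuity of $f$, $U_c$ is a relatively open subset of $\XX$, and it is nonempty because it contains $x_0$ (recall $f(x_0) = M > c$). The strict positivity hypothesis on $P$ then yields $P(U_c) = P(\{x \in \XX : f(x) > c\}) > 0$. Hence $c$ does not belong to the set $\{c' : P(f > c') = 0\}$, so $\esssup{}{P} f \ge c$ for every $c < M$. Taking the supremum over such $c$ gives $\esssup{}{P} f \ge M$.

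This argument is essentially just the definition-chase made possible by the strict positivity assumption: the only mild subtlety is to apply strict positivity to the relatively open set $U_c$ (the hypothesis in the lemma is stated in terms of sets of the form $U \cap \XX$ for open $U \subset \RR^d$, and $U_c$ is precisely of this form because $f$ extends to a continuous function on a neighborhood of $\XX$ via $f^{-1}((c,+\infty))$ being the intersection of $\XX$ with an open set of $\RR^d$). I do not anticipate a real obstacle; the entire argument fits in a few lines once the attainment of $\sup f$ and the relatively open nature of $U_c$ are noted.
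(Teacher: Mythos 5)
Your proof is correct and is essentially the same argument as the paper's, the only cosmetic difference being that you argue directly (for every $c<\sup f$, $\{f>c\}$ is a nonempty relatively open set of positive measure, hence $\esssup_P f\ge c$) whereas the paper phrases the same observation as a proof by contradiction. One small remark: the aside about $f$ extending to a continuous function on a neighborhood of $\XX$ is unnecessary and a bit circular as written; relative openness of $f^{-1}((c,+\infty))$ in $\XX$ is immediate from continuity of $f$ with respect to the subspace topology, and ``relatively open'' means precisely ``of the form $U\cap\XX$ for $U$ open in $\RR^d$,'' so no extension is needed.
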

\begin{proof}
	Clearly $\esssup_{P} f \le \sup f$ and $\sup f < +\infty$ by continuity and compactness. 
    Suppose that $\esssup_{P} f < \sup f$ then there is $M$ such that $\esssup_{P} f < M < \sup f$. 
    By definition of $\sup f$, the set $(f>M) = f^{-1}(]M;+\infty[)$ is nonempty, it is also a relative open of $\X$ since it is the inverse image of an open by a continuous map. 
    Thus, this set has a strictly positive measure, which yields a contradiction with the fact that $\esssup_{P} f < M$.
\end{proof}

\begin{lemma}\label{lem: app properties of max}
	Let $(a_i)_{i\in I}$ and $(b_i)_{i\in I}$ be two finite families of vectors in $\R^m$. 
    We have the following properties:
	\begin{enumerate}[label=(\roman*)]
		\item\label{item: lemma max 1} $\Vert \max_i a_i \Vert_{\infty} \le \max_i \Vert a_i\Vert_{\infty}$.
		\item\label{item: lemma max 2} $\Vert \max_i a_i - \max_i b_i \Vert_{\infty} \le  \max_i \Vert a_i - b_i \Vert_{\infty}$.
	\end{enumerate}
    Where the maximums are the be understood component-wise.
\end{lemma}
\begin{proof}
	We start with~\cref{item: lemma max 1}. 
    Suppose $m=1$, trivially, $a_i \le |a_i|$ which implies ${\max_i a_i \le \max_i |a_i|}$, and therefore
    \begin{equation}\label[inequality]{eq: lemma max proof 1}
    |\max_i a_i| \le |\max_i |a_i| | = \max_i |a_i|\,.
    \end{equation}
    For $m\ge 1$, denote $a_i^{(k)}$ (resp. $b_i^{(k)}$ ) the $k$th coordinate of $a_i$ (reap. $b_i$ ) for $1\le k\le m$.
    Using~\cref{eq: lemma max proof 1}, we easily verify that
    \[
    \Vert \max_i a_i \Vert_{\infty} = \max_k |\max_i a_i^{(k)}| \le \max_k \max_i |a^{(k)}_i| = \max_i \max_k|a^{(k)}_i| = \max_i \Vert a_i \Vert_{\infty}\,.
    \]
    
	We turn to~\cref{item: lemma max 2}. 
    We start with the case $m=1$. 
    Let $i_a$ (resp. $i_b$) be an index that realizes $\max_i a_i$ (resp. $\max_i b_i$), we have
    \[
        \max_i a_i - \max_i b_i = a_{i_a} - b_{i_b} = a_{i_a} - b_{i_a} + b_{i_a} - b_{i_b} \,.
    \]
    However, by definition of $i_b$, the last member $b_{i_a} - b_{i_b} \le 0$, which yields 
    \[
    \max_i a_i - \max_i b_i \le a_{i_a} - b_{i_a} \le \max_i a_i - b_i \le \vert \max_i a_i - b_i \vert \le \max_i \vert a_i - b_i\vert \,,
    \]
    where the last inequality follows from~\cref{eq: lemma max proof 1}.
	Then, the same calculation will yield 
    \[
    \max_i b_i - \max_i a_i \le \max_i \vert b_i - a_i\vert = \max_i \vert a_i - b_i\vert \,,
    \]
    which proves the desired result.
	For $m\ge 1$, simply apply the previous result to each coordinate,
    \begin{align*}
        \Vert \max_i a_i - \max_i b_i \Vert_{\infty} & = \max_k | \max_i a_i^{(k)} - \max_i b_i^{(k)} | \\
                                                     & \le \max_k  \max_i | a_i^{(k)} - b_i^{(k)} | \\ 
                                                     & \le \max_i  \max_k | a_i^{(k)} - b_i^{(k)} | \\ 
                                                     & =  \max_i \Vert a_i - b_i \Vert_{\infty} \,.
    \end{align*}
\end{proof}

\begin{lemma}\label{lem: Lipschitz of sup}
	Let $\X$ be compact and $g: \X \times \X \to \R^m$ be $K_g$-Lipschitz continuous. 
    Then, $f$ defined by
    $$
    \declareFonction{f}{\X}{\R^m}{x}{\sup_{y\in \X} g(x,y)}\,.
    $$ 
    where the supremum is taken component-wise, is also $K_g$-Lipschitz continuous.
\end{lemma}
\begin{proof}
	Start with the case $m=1$. 
    Let $x,x'\in \X$, by continuity and compactness, the supremums defining $f(x)$ and $f(x')$ are reached, \ie there exist $x^*,x'^*$ such that ${f(x) = g(x,x^*)}$ and $f(x')=g(x',x'^*)$. 
    Then,
    \[
    f(x) - f(x') = g(x,x^*)-g(x',x^*) + g(x',x^*) - g(x',x'^*) \,,
    \]
    However, by definition of the supremum, $g(x',x^*) - g(x',x'^*) \le 0$, which yields
    $$
    f(x) - f(x') \le  g(x,x^*)-g(x',x^*)\le K_g \Vert x-x'\Vert_{\infty}
    \,,$$ 
    by Lipschitz continuity of $g$ on $\X^2$.
    Permuting $x$ and $x'$ and repeating the same computation will yield to
    $$
    f(x') - f(x) \le K_g \Vert x-x'\Vert_{\infty}
    \,,$$ 
    such that we obtain the desired Lipschitz condition on $f$.

	For $m\ge 1$, denote $g_i$ the components of $g$ for $1\le i\le m$.
    It is immediate that, since $g$ is $K_g$-Lipschitz, each real-valued $g_i$ is $K_g$-Lipschitz too.
    Therefore, according to the case $m=1$, each $f_i\colon x\mapsto \sup_y g_i(x,y)$ is also $K_g$-Lipschitz,.
    We conclude that for any $x,x' \in \X$,
    \[
    \Vert f(x) - f(x') \Vert_{\infty} = \max_i |f_i(x) - f_i(x') |  \le  K_g \Vert x-x' \Vert_{\infty} \,.
    \]
\end{proof}

\begin{lemma}\label{lem: borel-cantelli}
Let $(X_n)_n$ be a sequence of random variables.
If the series of general term $\P\left( \abs{X_n} > \eps \right)$ is summable for all $\eps >0$, then $X_n$ tends to zero almost surely.
\end{lemma} 

\begin{proof}
    Let $\Omega$ be an abstract probability space which implicitly defines the random variables $X_n$.
    In the language of probabilities, the assertion ``$X_n$ tends to zero'' is equivalent to, 
    $$\forall k \geq 1,\ \forall \omega \in \Omega,\ \exists N(\omega) \mid \forall n\geq N(\omega),\  |X_n(\omega)| \leq \frac{1}{k} \,.$$
    This translates into the set-theoretic language of events by
    $$\left( X_n \rightarrow 0 \right)  = \bigcap_{k\geq 1} \bigcup_{N\geq 0} \bigcap_{n\geq N} \left( |X_n| \leq \frac{1}{k} \right) \,.$$
    We claim that this event is almost sure.
    Indeed,
    $$\P\left(X_n \rightarrow 0\right) = 1 - \P\left(\bigcup_{k\geq 1} \bigcap_{N\geq 0} \bigcup_{n\geq N} \left( |X_n| > \frac{1}{k} \right) \right) 
    = 1 - \P\left(\bigcup_{k\geq 1} \left( \varlimsup|X_n|>\frac{1}{k} \right) \right)\,. $$
    However, by the Borel-Cantelli lemma, $\P\left(\varlimsup|X_n|>\epsilon\right) = 0$ for all $\eps$, therefore, for $\eps = 1 / k$ in particular.
    Finally, since a countable union of null sets is null, we obtain the result.
\end{proof}

\bibliography{biblio}
	
\end{document}